\newcommand{\PreserveBackslash}[1]{\let\temp=\\#1\let\\=\temp}
\newcolumntype{C}[1]{>{\PreserveBackslash\centering}p{#1}}
\newcolumntype{R}[1]{>{\PreserveBackslash\raggedleft}p{#1}}
\newcolumntype{L}[1]{>{\PreserveBackslash\raggedright}p{#1}}
\crefname{algorithm}{Algorithm}{Algorithms}
\crefname{appendix}{Appendix}{Appendices}
\newtheorem*{example*}{Example}
\newtheorem{example}{Example}
\newtheorem{lemma}{Lemma}
\newtheorem{proposition}{Proposition}
\newtheorem{corollary}{Corollary}
\newcommand{\tr}{\text{train}}
\newcommand{\val}{\text{val}}
\newcommand{\pll}{\mathrm{PLL}}
\newcommand{\pos}{\mathrm{joint}}
\newcommand{\T}{\mathcal{T}}
\newcommand{\x}{\mathbf{x}}
\newcommand{\D}{\mathcal{D}}
\DeclareMathOperator*{\argmin}{argmin}
\newcommand{\metav}{\phi}
\newcommand{\taskv}{\theta}
\newcommand{\vmetav}{\vphi}
\newcommand{\vtaskv}{\vtheta}
\newcommand{\shrpfx}{$\vsigma$}\newcommand{\shrreptile}{\shrpfx{}-Reptile}
\newcommand{\shrmaml}{\shrpfx{}-MAML}
\newcommand{\shrimaml}{\shrpfx{}-iMAML}
\newcommand{\elltr}{\ell_{t}^\tr}
\newcommand{\ellval}{\ell_{t}^\val}
\newcommand{\ellpll}{\ell_\pll}
\newcommand{\real}{\mathbb{R}}
\newcommand{\gauss}{{\cal N}}
\newcommand{\myvec}[1]{\mathbf{#1}}
\newcommand{\myvecsym}[1]{\boldsymbol{#1}}
\newcommand{\vone}{\myvecsym{1}}
\newcommand{\vmu}{\myvecsym{\mu}}
\newcommand{\vphi}{\myvecsym{\phi}}
\newcommand{\vPhi}{\myvecsym{\Phi}}
\newcommand{\vtheta}{\myvecsym{\theta}}
\newcommand{\vTheta}{\myvecsym{\Theta}}
\newcommand{\vsigma}{\myvecsym{\sigma}}
\newcommand{\vSigma}{\myvecsym{\Sigma}}
\newcommand{\vxi}{\myvecsym{\xi}}
\newcommand{\vg}{\myvec{g}}
\newcommand{\vv}{\myvec{v}}
\newcommand{\vx}{\myvec{x}}
\newcommand{\vy}{\myvec{y}}
\newcommand{\vB}{\myvec{B}}
\newcommand{\vH}{\myvec{H}}
\newcommand{\vI}{\myvec{I}}
\newcommand{\vP}{\myvec{P}}
\newcommand{\calN}{{\cal N}}
\newcommand{\be}{\begin{equation}}
\newcommand{\ee}{\end{equation}}
\newcommand{\bea}{\begin{eqnarray}}
\newcommand{\eea}{\end{eqnarray}}
\newcommand{\beaa}{\begin{eqnarray*}}
\newcommand{\eeaa}{\end{eqnarray*}}
\DeclareMathAlphabet{\mathpzc}{OT1}{pzc}{m}{n}
\newcommand{\ba}{\[\begin{aligned}}
\newcommand{\ea}{\end{aligned}\]}
\newcommand{\eq}[1]{\begin{align}#1\end{align}}
\renewcommand{\r}{\mathrm{True}}
\newcommand{\nn}{\nonumber}
\title{Modular Meta-Learning with Shrinkage}
\author{  Yutian Chen\thanks{Equal contribution.}
  \And
  Abram L.~Friesen$^*$
  \And
  Feryal Behbahani
  \And
  Arnaud Doucet
  \And
  David Budden
  \And
  Matthew W.~Hoffman \\ \\
  DeepMind\\
  London, UK\\
  \texttt{\{yutianc, abef\}@google.com} \\
  \And
  Nando de Freitas
}
\begin{document}

\maketitle

\begin{abstract}
Many real-world problems, including multi-speaker text-to-speech synthesis, 
can greatly benefit from the ability to meta-learn large models with only a few task-specific components.
Updating only these task-specific modules then allows the model to be adapted to low-data tasks for as many steps as necessary without risking overfitting.
Unfortunately, existing meta-learning methods either do not scale to long adaptation or else
rely on handcrafted task-specific architectures.
Here, we propose a meta-learning approach that
obviates the 
need for this often sub-optimal hand-selection.
In particular, we 
develop general techniques based on Bayesian shrinkage to automatically discover and learn both task-specific and general reusable modules.
Empirically, we demonstrate that our method discovers a small set of meaningful task-specific modules and outperforms existing meta-learning approaches in domains like few-shot text-to-speech that have little task data and long adaptation horizons.
We also show that existing meta-learning methods including MAML, iMAML, and Reptile emerge as special cases of our method.
\looseness=-1

\end{abstract}

\section{Introduction}
\label{sec:intro}

The goal of meta-learning is to extract shared knowledge from a 
large set of training tasks to solve held-out tasks more efficiently.
One avenue for achieving this is
to learn task-agnostic modules and reuse or repurpose these for new tasks.
Reusing or repurposing modules can
reduce overfitting in low-data regimes, improve interpretability,
and facilitate the deployment of large multi-task models 
on limited-resource devices
as parameter sharing allows for significant savings in memory.
It can also enable batch evaluation of reused modules across tasks, which can speed up inference time on GPUs.

These considerations are important in domains like few-shot text-to-speech synthesis (TTS), characterized by large speaker-adaptable models, limited training data for speaker adaptation, and long adaptation horizons~\citep{chen2019sample}.
Adapting the model to a new task for more optimization steps generally improves the model capacity without increasing the number of parameters. However, many meta-learning methods are designed for quick adaptation, and hence are inapplicable in this 
\emph{few data and long adaptation} regime. 
For those that are applicable~\citep{Nichol2018,rajeswaran2019meta,flennerhag2019leap,flennerhag2019meta}, adapting the full model to few data can then fail because of overfitting.
To overcome this, modern TTS models combine shared core modules with handcrafted, adaptable, speaker-specific modules~\citep{arik2018neural,chen2019sample,jia2018transfer,taigman2018voiceloop}. 
This hard coding strategy is often suboptimal. As data increases, these hard-coded modules quickly become a bottleneck for further improvement, even in a few-shot regime. For this reason, we would like to automatically learn the smallest set of modules needed to
adapt to a new speaker and then adapt these for as long as needed.

Automatically learning reusable and broadly applicable modular mechanisms is an open challenge in causality, transfer learning, and domain adaptation \citep{peters2017elements,parascandolo2018learning,arjovsky2019invariant,bengio2020a}.
In meta-learning, most existing gradient-based algorithms, such as MAML \citep{Finn2017}, do not encourage meta-training to develop reusable and general modules, and either ignore reusability or manually choose the modules to fix~\citep{lee2018gradient,ParkMetaCurvature,flennerhag2019meta,zintgraf2018fast,raghu2019rapid,rusu2018metalearning}. Some methods implicitly learn a simple form of modularity 
for some datasets~
\citep{raghu2019rapid,arnold2019decoupling} 
but it is limited.
\looseness=-1

In this paper, we introduce a novel approach for automatically finding reusable modules. Our approach employs a principled hierarchical Bayesian model that exploits a statistical property known as shrinkage,
meaning that low-evidence estimates tend towards their prior mean; 
e.g., see \citet{gelman2013bayesian}.
This is accomplished by first partitioning any neural network into arbitrary groups of parameters, which we refer to as modules. We assign a Gaussian prior to each module with a scalar variance.
When the variance parameter shrinks to zero for a specific module, 
as it does if the data does not require the module parameters to deviate from
the prior mean, 
then all of the module's parameters become tied to the prior mean during task adaptation. 
This results in a set of automatically learned modules that can be reused at deployment time and a sparse set of remaining modules that
are adapted subject to the estimated prior.
\looseness=-1

Estimating the prior parameters in our model corresponds to meta-learning, and we present
two principled methods for this based on maximizing the predictive log-likelihood.
Importantly, both methods allow many adaptation steps.
By considering non-modular variants of our model, we show that
MAML~\citep{Finn2017}, Reptile~\citep{Nichol2018}, and iMAML~\citep{rajeswaran2019meta} emerge as special cases.
We compare our proposed shrinkage-based methods with their non-modular baselines on multiple low-data, long-adaptation domains, including
a challenging variant of Omniglot and TTS.
Our modular, shrinkage-based methods exhibit higher predictive power in low-data regimes
without sacrificing performance when more data is available.
Further, the discovered modular structures corroborate 
common knowledge about network structure
in computer vision and
provide new insights about WaveNet \cite{van2016wavenet} layers in TTS.
\looseness=-1

In summary, we introduce a hierarchical Bayesian model for modular meta-learning along with two 
parameter-estimation methods, which
we show generalize existing meta-learning algorithms.
We then demonstrate that our approach enables
identification of a small set of meaningful task-specific modules.
Finally, we show that our
method prevents overfitting and improves predictive performance
on problems that require many adaptation steps
given only small amounts of data.
\looseness=-1

\subsection{Related Work}
\label{sec:related}

Multiple Bayesian meta-learning approaches have been proposed to either provide model uncertainty in few-shot learning \citep{ravi2018amortized,neuralstatistician,garnelo2018conditional,gordon2019metalearning} or to provide a probabilistic interpretation and extend existing non-Bayesian works \citep{Grant2018,yoon2018bayesian,finn2018probabilistic}. However, to the best of our knowledge, none of these account for modular structure in their formulation. While we use point estimates of variables for computational reasons, more sophisticated inference methods from these works can also be used within our framework.

Modular meta-learning approaches based on MAML-style 
backpropagation through short task adaptation horizons
have also been proposed.
The most relevant of these, \citet{alet2018modular}, proposes to learn a modular network architecture, whereas our work identifies the adaptability of each module. 
In other work, \citet{zintgraf2018fast} hand-designs the task-specific and shared parameters, and
the M-Net in \citet{lee2018gradient} provides an alternative method for learning adaptable modules by sampling binary mask variables.
In all of the above, however, backpropagating through task adaptation is computationally prohibitive when applied to problems that require longer adaptation horizons.
While it is worth investigating how to extend these works to this setting, we leave this for future work.
\looseness=-1

Many other meta-learning works \citep{lee2018gradient,ParkMetaCurvature,flennerhag2019meta,li2017meta,Lee2020Learning,khodak2019adaptive} learn the learning rate (or a preconditioning matrix) which can have a similar modular regularization effect as our approach if applied modularly with fixed adaptation steps. However, these approaches and ours pursue fundamentally different and orthogonal goals. Learning the learning rate aims at fast optimization by fitting to local curvature, without changing the task loss or associated stationary point.
In contrast, our method learns to control how far
each module will move from its initialization by changing
that stationary point.
In problems requiring long task adaptation, these two approaches lead to different behaviors, as we demonstrate in \cref{sec:synthetic_exps}. Further, most of these works also rely on backpropagating through gradients, which does not scale well to the long adaptation horizons considered in this work. 
Overall, generalizing beyond the training horizon is challenging for 
``learning-to-optimize'' approaches ~\citep{andrychowicz2016learning,wu2018understanding}. 
While WarpGrad \citep{flennerhag2019meta} does allow long adaptation horizons, it is not straightforward to apply its use of a functional mapping from inputs to a preconditioning matrix for module discovery and we leave this for future work.\looseness=-1

Finally, L2 regularization is also used for task adaptation in continual learning \citep{kirkpatrick2017overcoming,zenke2017continual} 
and meta-learning \citep{denevi2019learning,NIPS2018_8220,zhou2019efficient,rajeswaran2019meta}. 
However, in these works, the regularization scale(s) are either treated as hyper-parameters
or adapted per dimension with a different criterion from our approach.
It is not clear how to learn the regularization scale at a module level in these works.
\looseness=-1

\vspace{-.5em}

\section{Gradient-based Meta-Learning}
\label{sec:background}

\vspace{-.5em}

We begin with a general overview of gradient-based meta-learning, as this is one of the most common approaches to meta-learning. In this regime, we assume that there are many tasks, indexed by $t$, and that each of these tasks has few data. That is,
each task is associated with a finite dataset $\D_t=\{\x_{t,n}\}$ of size $N_t$, which can be partitioned into training and validation sets, $\D_t^\tr$ and $\D_t^\val$ respectively.
~To solve a task, gradient-based meta-learning adapts \emph{task-specific parameters} $\vtaskv_t \in \real^D$ by minimizing a loss function $\ell(\D_t;\vtaskv_t)$ using a local optimizer.
Adaptation is made more efficient by sharing a set of \emph{meta parameters} $\vmetav\in\real^D$ 
between tasks, which are typically used to initialize the task parameters.

Algorithm 1 summarizes a typical stochastic meta-training procedure, which
includes MAML~\citep{Finn2017}, implicit MAML (iMAML)~\citep{rajeswaran2019meta}, and Reptile~\citep{Nichol2018}.
Here, \textsc{TaskAdapt} executes one step of optimization of the task parameters.
The meta-update $\Delta_t$ specifies the contribution of task $t$ to the meta parameters.
At test time, multiple steps of \textsc{TaskAdapt} are run on each new test task.

\begin{figure}
  \centering
  \begin{minipage}{.46\textwidth}
    \centering
{\footnotesize    
    \begin{algorithm}[H]
    \SetAlgoLined
    \KwIn{Batch size $B$, steps $L$, and learning rate $\alpha$}
    Initialize $\vmetav$ \;
    \While{not done}{
      $\{t_1, \dots, t_B\} \leftarrow$ sample mini-batch of tasks \;
      \For{each task $t$ in $\{t_1, \dots, t_B\}$} {
	    Initialize $\vtaskv_t \leftarrow \vphi$\;
    		\For{step $l = 1 \dots L$} {
      	    $\vtaskv_t \leftarrow \textsc{TaskAdapt}(\D_t, \vmetav, \vtaskv_t)$
    	    }
  	  }
  	  \tcp*[l]{Meta update}
 	  $\vmetav \leftarrow \vmetav - \alpha \cdot \frac{1}{B} \sum_{t} \Delta_t(\D_t, \vmetav, \vtaskv_t)$  \;
  	  
  	}
  	\caption{Meta-learning pseudocode.}
    \label{alg:meta-learning}
    \end{algorithm} 
}
  \end{minipage}
  ~
  \begin{minipage}{.52\textwidth}
    \centering
    \includegraphics[width=0.78\columnwidth]{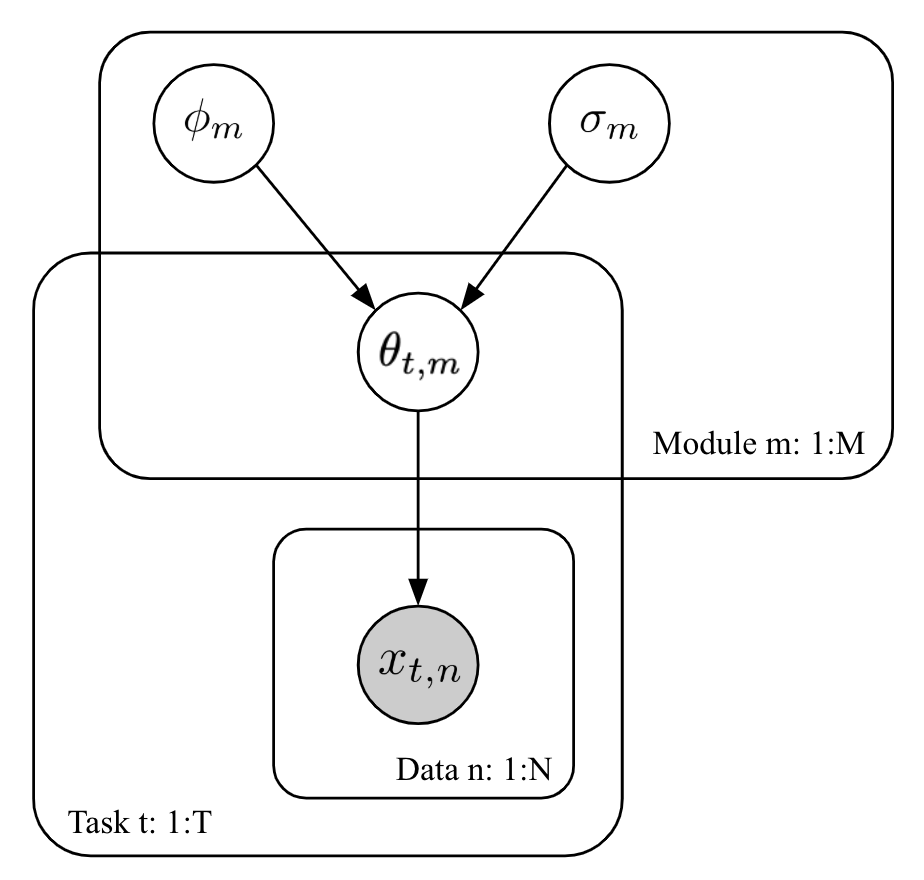}
  \end{minipage}
      \caption{
      (Left) Structure of a typical meta-learning algorithm. 
      (Right) Bayesian shrinkage graphical model. 
      The shared meta parameters $\vphi$ serve as the initialization
      of the neural network parameters for each task $\vtheta_t$.
          The $\vsigma$ are shrinkage parameters. By learning these, the model automatically decides which subsets of parameters (i.e., modules) to fix for all tasks and which to adapt at test time.}
        \label{fig:graphical_model}
\end{figure}

MAML implements task adaptation by applying gradient descent to minimize the training loss $\elltr(\vtaskv_t)=\ell(\D^\tr_t;\vtaskv_t)$ with respect to the task parameters.
It then updates the meta parameters by gradient descent on the validation loss $\ellval(\vtaskv_t)=\ell(\D^\val_t;\vtaskv_t)$, resulting in the meta update
$\Delta_t^{\text{MAML}} = \nabla_{\vmetav}\, \ellval(\vtaskv_t(\vmetav)).$
This approach treats the task parameters as a function of the meta parameters, and hence 
requires backpropagating through the entire $L$-step task adaptation process.
When $L$ is large, as in TTS systems~\citep{chen2019sample}, 
this becomes computationally prohibitive.

Reptile and iMAML avoid this computational burden of MAML. Reptile instead optimizes $\vtaskv_t$ on the entire dataset $\D_t$, and moves $\vmetav$ towards the adapted task parameters, yielding
$\Delta_t^{\text{Reptile}} = \vmetav - \vtaskv_t.$
Conversely, iMAML introduces an L2 regularizer
$\frac{\lambda}{2} ||\vtaskv_t - \vmetav||^2$ and
optimizes the task parameters on the regularized training loss.
Provided that this task adaptation process converges to a stationary point, 
\emph{implicit differentiation} enables the computation of
the meta gradient based only on the final solution of the
adaptation process,
$\Delta_t^{\text{iMAML}} = 
\big(\vI + \tfrac{1}{\lambda} \nabla^2_{\vtaskv_t}\elltr(\vtaskv_t) 
\big)^{-1} \nabla_{\vtaskv_t} \ellval(\vtaskv_t).$
See \citet{rajeswaran2019meta} for details.

\vspace{-.5em}

\section{Modular Bayesian Meta-Learning}
\label{sec:model}

\vspace{-.5em}

In standard meta-learning, the meta parameters $\vphi$ provide an initialization for the task parameters $\vtheta$ at test time. That is, all the neural network parameters are treated equally, and hence they must all be updated at test time. This strategy is inefficient and prone to overfitting. To overcome it, researchers often split the network parameters into two groups, a group that varies across tasks and a group that is shared; see for example \cite{chen2019sample,zintgraf2018fast}. This division is heuristic, so in this work we explore ways of automating it to achieve better results and to enable automatic discovery of task independent modules.  
More precisely, we assume that the network 
parameters can be partitioned into $M$ disjoint modules
$\vtaskv_t = (\vtaskv_{t,1}, \ldots, \vtaskv_{t,m}, \ldots, \vtaskv_{t,M})$ where 
$\vtaskv_{t,m}$
are the parameters in module $m$ for task $t$. This view of modules is very general. Modules can correspond to layers, receptive fields, the encoder and decoder in an auto-encoder, the heads in a multi-task learning model, or any other grouping of interest.

We adopt a hierarchical Bayesian model, shown in Figure~\ref{fig:graphical_model}, with a factored probability density:
\be
p(\vtaskv_{1:T},\D|\vsigma^2,\vmetav) 
= 
\prod_{t=1}^{T} \prod_{m=1}^M \gauss(\vtaskv_{t,m} | \vmetav_m,\sigma_m^2\vI)
\prod_{t=1}^{T} p(\D_t|\vtaskv_t).
\label{eqn:loss}
\ee
The $\vtaskv_{t,m}$ are conditionally independent and normally distributed
$\vtaskv_{t,m}\sim \gauss(\vtaskv_{t,m} | \vmetav_m,\sigma_m^2\vI)$
with mean $\vmetav_{m}$ and variance $\sigma_m^2$, where $\vI$ is the identity matrix.

The $m$-th module scalar shrinkage parameter $\sigma_m^2$ measures 
the degree to which $\vtaskv_{t,m}$ can deviate from $\vmetav_m$. More precisely, for values of $\sigma_m^2$ near zero, the difference between parameters $\vtaskv_{t,m}$ and mean $\vmetav_m$ will be shrunk to zero and thus module $m$ will become task independent.
Thus by learning the parameters $\vsigma^2$, we discover which modules are task independent. These independent modules can be reused at test time, reducing the computational burden of adaptation and likely improving generalization.
Shrinkage is often used in automatic relevance determination for sparse feature selection~\citep{tipping2001sparse}.
\looseness=-1

We place uninformative priors on $\vmetav_m$ and $\sigma_m$, and follow an empirical Bayes approach to 
learn their values from data.
This formulation allows the model to automatically learn which modules to reuse---i.e.\ those modules for which $\sigma_m^2$ is near zero---and which to adapt at test time.

\section{Meta-Learning as Parameter Estimation}
\label{sec:learning}

By adopting the hierarchical Bayesian model from the previous section, the problem of 
meta-learning becomes one of estimating the parameters $\vmetav$ and $\vsigma^2$. 
A standard solution to this problem is to maximize the marginal likelihood $p(\D|\vmetav,\vsigma^2)=\int p(\D|\vtaskv)p(\vtaskv|\vmetav,\vsigma^2)\,\textrm{d}\vtaskv$. 
We can also assign a prior over $\vmetav$. 
In both cases, the marginalizations are intractable, so we must seek scalable approximations.

It may be tempting to estimate the parameters by maximizing $p(\vtaskv_{1:T},\D|\vsigma^2,\vmetav)$ w.r.t. $(\vsigma^2,\vphi,\vtheta_{1:T})$, but the following lemma suggests that this naive approach leads to all task parameters being tied to the prior mean, i.e. no adaptation will occur (see \cref{sec:analysis} for a proof):
\begin{lemma}\label{lemma:jointmaximizationisbad}
The function $f: (\vsigma^2,\vphi,\vtheta_{1:T}) \mapsto \log p(\vtaskv_{1:T},\D|\vsigma^2,\vmetav)$ diverges to $+\infty$ as 
$\vsigma^2 \rightarrow 0^{+}$ when $\vtheta_{t,m}=\vmetav_m$ for all $t\in\{ 1,...,T \},m\in\{1,...,M \}$.
\end{lemma}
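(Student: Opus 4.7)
The plan is to verify the claim by direct substitution into the log-joint density and then exploit the singular behaviour of a Gaussian's normalising constant as its variance collapses to zero.

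First I would take the logarithm of the joint in \eqref{eqn:loss} to obtain
\[
\log p(\vtaskv_{1:T},\D\mid\vsigma^2,\vmetav)
=\sum_{t=1}^{T}\sum_{m=1}^{M}\log\gauss(\vtaskv_{t,m}\mid\vmetav_m,\sigma_m^2\vI)
+\sum_{t=1}^{T}\log p(\D_t\mid\vtaskv_t).
\]
Next I would substitute the assumption $\vtaskv_{t,m}=\vmetav_m$, so that each Gaussian factor is evaluated at its own mean and the quadratic exponent vanishes. Writing $d_m$ for the dimension of the $m$-th module, this leaves only the normalising constant:
\[
\log\gauss(\vmetav_m\mid\vmetav_m,\sigma_m^2\vI)=-\tfrac{d_m}{2}\log(2\pi\sigma_m^2).
\]

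I would then analyse the two resulting pieces as $\vsigma^2\to 0^+$. The prior contribution becomes
\[
-\tfrac{T}{2}\sum_{m=1}^{M} d_m\,\log(2\pi\sigma_m^2),
\]
which tends to $+\infty$ because $-\log\sigma_m^2\to+\infty$ for each $m$ (and $d_m,T\ge 1$). The likelihood contribution $\sum_{t}\log p(\D_t\mid\vmetav)$ is independent of $\vsigma^2$, and under the mild assumption that $p(\D_t\mid\vtaskv)$ is strictly positive at $\vtaskv=\vmetav$ (which holds in all standard models considered in this paper, e.g.\ Gaussian or softmax likelihoods), this term is a finite real number. Combining the two yields divergence to $+\infty$.

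The argument is essentially a one-line computation, so no genuine obstacle is expected; the only subtlety to flag is the implicit regularity assumption on $p(\D_t\mid\vtaskv)$ needed to guarantee that the likelihood term stays finite and therefore cannot cancel the divergence coming from the Gaussian normalising constants. I would simply state this assumption explicitly and conclude that the joint log-density is unbounded above, which establishes the lemma and, as the authors emphasise, motivates using the marginal likelihood rather than the joint for learning $(\vsigma^2,\vmetav)$.
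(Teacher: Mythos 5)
Your proof is correct and follows essentially the same route as the paper's: write out the log joint, observe that setting $\vtheta_{t,m}=\vmetav_m$ kills the quadratic terms, and let the $-\log\sigma_m^2$ from the Gaussian normalising constant drive the expression to $+\infty$. The only differences are cosmetic — you treat general $M$ and module dimensions directly where the paper does $M=1$ "for simplicity," and you explicitly flag the positivity/finiteness assumption on $p(\D_t\mid\vtaskv)$ that the paper leaves implicit; both are fine.
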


The undesirable result of \cref{lemma:jointmaximizationisbad} is caused by the use of point estimate of $\vtheta_{1:T}$ in maximum likelihood estimation of $\vsigma^2$ and $\vphi$.
In the following two subsections, we propose two principled alternative approaches for parameter estimation based on maximizing the predictive likelihood over validation subsets.

\subsection{Parameter estimation via the predictive likelihood}
\label{sec:alg_pll}
Our goal is to minimize the average negative predictive log-likelihood over $T$ validation tasks, 
\begin{align}
    \ellpll(\vsigma^2\hspace{-1mm},\vmetav)
    &\hspace{-1mm}=-\frac{1}{T}\hspace{-1mm}\sum_{t=1}^T \hspace{-.4mm}\log p\left(\D_{t}^\val | \D_{t}^\tr \hspace{-1mm}, \vsigma^2\hspace{-1mm},\vmetav\right) 
        \hspace{-1mm}=\frac{1}{T}\hspace{-1mm}\sum_{t=1}^T \hspace{-.4mm}\log \hspace{-1mm} \int \hspace{-1mm} p(\D_{t}^\val|\vtheta_{t}) \,p(\vtheta_{t}|\D_{t}^\tr\hspace{-1mm},\vsigma^2\hspace{-1mm},\vmetav) \,\textrm{d}\vtheta_{t}.
    \label{eq:predictivedistribution}
\end{align}
To justify this goal, assume that the training and validation data is distributed i.i.d\ according to some distribution $\nu(\D_{t}^\tr,\D_{t}^\val)$. Then, the law of large numbers implies that
as $T\rightarrow \infty$,
\eq{\label{eq:reinterpretationKL}
\ellpll(\vsigma^2,\vmetav)\rightarrow
\mathbb{E}_{\nu(\D_{t}^\tr)}[\textrm{KL}(\nu(\D_{t}^\val|\D_{t}^\tr)||p\left(\D_{t}^\val | \D_{t}^\tr, \vsigma^2, \vmetav \right))]+\textrm{H}(\nu(\D_{t}^\val|\D_{t}^\tr)),
}
where $\text{KL}$ denotes the Kullback-Leibler divergence and $\textrm{H}$ the entropy. Thus minimizing $\ellpll$ with respect to the meta parameters corresponds to selecting the predictive distribution $p\left(\D_{t}^\val | \D_{t}^\tr, \vsigma^2,\vmetav\right)$ that is closest (approximately) to the true predictive distribution $\nu(\D_{t}^\val|\D_{t}^\tr)$ on average. 
This criterion can be thought of as an approximation to a Bayesian cross-validation criterion \cite{fong2020marginal}.

Computing $\ellpll$ is not feasible due to the intractable integral in equation~(\ref{eq:predictivedistribution}).
Instead we make use of a simple maximum a posteriori (MAP) approximation of the task parameters:
\begin{align}
    \hat{\vtheta}_{t}(\vsigma^2, \vmetav)
    =
    \argmin\limits_{\vtheta_{t}} \elltr(\vtheta_{t}, \vsigma^2, \vmetav), ~\text{where }
    \elltr
    \coloneqq
    -\log p\left(\D_{t}^\tr|\vtheta_{t}\right)
    - \log p\left(\vtheta_{t}|\vsigma^2,\vmetav\right).
    \label{eq:map_v1}
\end{align}
We note for clarity that $\elltr$ corresponds to the negative log 
of equation~(\ref{eqn:loss}) for a single task.
Using these MAP estimates, we can approximate $\ellpll(\vsigma^2,\vmetav)$ as follows:
\begin{align}
    \hat{\ell}_\pll(\vsigma^2, \vphi) = \frac{1}{T}\sum_{t=1}^T \ellval(\hat{\vtheta}_{t}(\vsigma^2, \vphi)),\;\; ~\text{ where }
    \ellval
    \coloneqq
    -\log p\big(\D_{t}^\val | \hat{\vtaskv}_t\big).
    \label{eq:train-objective_v1}
\end{align}
We use this loss to construct a meta-learning algorithm with the same structure as Algorithm 1. Individual task adaptation follows from equation~(\ref{eq:map_v1}) 
and meta updating from minimizing $\hat{\ell}_\pll(\vsigma^2, \vphi)$ in equation~(\ref{eq:train-objective_v1}) 
with an unbiased gradient estimator and a mini-batch of sampled tasks.

Minimizing equation~(\ref{eq:train-objective_v1}) is a bi-level optimization problem that requires solving equation~(\ref{eq:map_v1})  implicitly. 
If optimizing $\elltr$ requires only a small number of local optimization steps, we can compute the update for $\vphi$ and $\vsigma^2$ with back-propagation through $\hat{\vtheta}_t$, yielding
\eq{
\Delta_t^{\text{\shrmaml{}}} = 
\nabla_{\vsigma^2, \vmetav}\, \ell_t^\val(\hat{\vtaskv}_t(\vsigma^2, \vmetav)).
\label{eq:s-maml}
}
This update reduces to that of MAML if $\sigma_m^2 \rightarrow \infty$ for all modules and is thus denoted as \shrmaml.

We are however more interested in long adaptation horizons for
which
back-propagation through the adaptation becomes computationally expensive and numerically unstable. Instead, we apply the \emph{implicit function theorem} on equation~(\ref{eq:map_v1})
to compute the gradient of $\hat{\vtaskv}_t$ with respect to $\vsigma^2$ and $\vmetav$, giving \looseness=-1
\eq{
\Delta_t^{\text{\shrimaml{}}}
= - \nabla_{\vtaskv_t} \ell_t^\val(\vtaskv_t) \vH_{\vtheta_t \vtheta_t}^{-1} \vH_{\vtheta_t \vPhi},
\label{eq:valid_gd_v1}
}
where $\vPhi=(\vsigma^2, \vmetav)$ is the full vector of meta-parameters, $\vH_{a b} = \nabla_{a, b}^2 \elltr$, and derivatives are evaluated at the stationary point $\vtheta_t=\hat\vtaskv_t(\vsigma^2,\vmetav)$. A detailed derivation is provided in \cref{sec:valid_gd_derivation_v1}. 
Various approaches have been proposed to approximate the inverse Hessian \citep{rajeswaran2019meta,bengio2000gradient,pedregosa2016hyperparameter,lorraine2019optimizing}. 
We use the conjugate gradient algorithm. 
We show in \cref{sec:equiv_to_imaml} that the meta update for $\vmetav$ is equivalent to that of iMAML when $\sigma_m^2$ is constant for all $m$, and thus refer to this more general method as \shrimaml.
\looseness=-1

In summary, our goal of maximizing the predictive likelihood of validation data conditioned on training data for our hierarchical Bayesian model results in modular 
generalizations of the MAML and iMAML approaches. In the following subsection, we will see that 
the same is also true for Reptile.
\looseness=-1

\subsection{Estimating \texorpdfstring{$\vmetav$}{phi} via MAP approximation}
\label{sec:alg_map}
If we instead let $\vmetav$ be a random variable with prior distribution $p(\vmetav)$, 
then we can derive a different variant of the above algorithm.
We return to the predictive log-likelihood introduced in the previous section but 
now integrate out the central parameter $\vmetav$. Since $\vmetav$ depends
on all training data we will rewrite the average predictive likelihood in terms of the joint posterior
over $(\vtheta_{1:T},\vmetav)$, i.e.
\begin{align}
    \ellpll(\vsigma^2) 
    &\hspace{-.5mm}=\hspace{-.5mm}-\frac{1}{T}\log p(\D_{1:T}^\val|\D_{1:T}^\tr,\vsigma^2)
    \hspace{-.5mm}=\hspace{-.5mm}-\frac{1}{T}\log \hspace{-1mm} \int \hspace{-1.5mm}
    \Big(\hspace{-.5mm}\prod_{t=1}^T p(\D_{t}^\val|\vtheta_t)\hspace{-.5mm}\Big)
    p(\vtheta_{1:T},\vmetav|\D_{1:T}^\tr, \vsigma^2)
    \textrm{d}\vtheta_{1:T}
    \textrm{d}\vmetav.
    \label{eq:log_p_sigma2_full}
\end{align}
Again, to address the intractable integral we make use of
a MAP approximation, except in this case we approximate both the task parameters and
the central meta parameter as
\begin{align}
    &\hspace{-1em}(\hat{\vtheta}_{1:T}(\vsigma^2), \hat{\vmetav}(\vsigma^2))     \hspace{-.5mm}= \argmin \limits_{\vtheta_{1:T},\hspace{-.5mm}\vmetav} \,
    \hspace{-1mm}-\hspace{-.5mm}\log p(\vtheta_{1:T},\vmetav|\D_{1:T}^\tr, \vsigma^2)     \hspace{-.5mm}= \argmin \limits_{\vtheta_{1:T},\hspace{-.5mm}\vmetav}
        \hspace{-1mm} \sum_{t=1}^T \hspace{-.5mm}\elltr(\vtaskv_t, \vsigma^2, \hspace{-.5mm}\vmetav) \hspace{-.5mm}-
            \hspace{-.5mm}\log p(\vmetav).
    \label{eq:log_p_train_joint}
\end{align}
We assume a flat prior for $\vmetav$ in this paper and thus drop the second term above. Note that when the number of training tasks $T$ is small, an informative prior would be preferred. We can then estimate the shrinkage parameter $\vsigma^2$ by plugging this approximation into \cref{eq:log_p_sigma2_full}. This method gives the same task adaptation update as the previous section but a meta update of the form
\eq{
\Delta_{t,\vmetav_m}^{\text{\shrreptile{}}} 
= 
\frac{1}{\sigma_m^2}(\vmetav_m - \hat{\vtaskv}_{m,t}), ~~
\Delta_{t, \vsigma^2}^{\text{\shrreptile{}}}
= 
- \nabla_{\vtaskv_t} \ell_t^\val(\vtaskv_t) \vH_{\vtheta_t \vtheta_t}^{-1} \vH_{\vtheta_t \vsigma^2} \,,
\label{eq:valid_gd_v2}
}
where derivatives are evaluated at $\vtheta_t = \hat\vtheta_t(\vsigma^2)$, and the gradient of $\hat{\vmetav}$ w.r.t.\ $\vsigma^2$ is ignored. Due to lack of space, 
further justification and derivation of this approach
is provided in \cref{sec_jointmap,sec:valid_gd_derivation_v2}.
\looseness=-1

We can see that Reptile is a special case of this method when $\sigma_m^2 \rightarrow \infty$ 
and we choose a learning rate proportional to $\sigma_m^2$ for $\vmetav_m$.
We thus refer to it as \shrreptile{}.

\cref{tab:algs} compares our proposed algorithms with existing algorithms in the literature. 
Our three algorithms reduce to the algorithms on the left when $\sigma_m^2 \rightarrow \infty$ or a constant scalar for all modules. 
Another variant of MAML for long adaptation, first-order MAML, can be recovered as a special case of iMAML when using one step of conjugate gradient descent to compute the inverse Hessian~\citep{rajeswaran2019meta}.

\begin{table}[tb]
{\footnotesize
\begin{center}
\begin{tabular}{c cll|c}
\multicolumn{2}{c}{}
& Fixed $\vsigma^2$ & Learned $\vsigma^2$
& Allows long adaptation?\\
\toprule
\multicolumn{2}{l}{$\hat{\vphi}_\pos$} & Reptile & \shrreptile & \checkmark \\
\midrule
\multirow{2}{*}{$\hat{\vphi}_\pll$}
&
Back-prop. & MAML & \shrmaml & $\times$ \\
& 
Implicit grad. & iMAML & \shrimaml & \checkmark
\\
\bottomrule
\end{tabular}
\end{center}
}
\caption {The above algorithms result from different approximations to the predictive likelihood.} 
\label{tab:algs}
\vspace{-1em}
\end{table}

\subsection{Task-Specific Module Selection}

When the parameters $\vmetav$ and $\vsigma^2$ are estimated accurately, the values of $\sigma_m^2$ for task-independent modules shrink towards zero. The remaining modules with non-zero $\sigma_m^2$ are considered task-specific~\citep{tipping2001sparse}. In practice, however, the estimated value of $\sigma_m^2$ will never be exactly zero due to the use of stochastic optimization and the approximation in estimating the meta gradients. We therefore apply a weak regularization on $\vsigma^2$ to encourage its components to be small unless nonzero values are supported by evidence from the task data 
(see \cref{sec:alg_details} for details).

While the learned $\sigma_m^2$ values are still non-zero, 
in most of our experiments below, the estimated value $\sigma_m^2$ for task-independent modules is either numerically indistinguishable from $0$ or at least two orders of magnitude smaller than the value for the task-specific modules.
This reduces the gap between meta-training, where all modules have non-zero $\sigma_m^2$, and meta-testing, where a sparse set of modules are selected for adaptation.
The exception to this is the text-to-speech experiment where we find the gap of $\sigma_m^2$ between modules to not be as large as in the image experiments. Thus, we instead rank the modules by the value of $\sigma_m^2$ and select the top-ranked modules as task-specific.

Ranking and selecting task-specific modules using the estimated value of $\sigma_m^2$ allows us to trade off between module sparsity and model capacity in practice, and achieves robustness against overfitting. It remains unclear in theory, however, if the task sensitivity of a module is always positively correlated with $\sigma_m^2$ especially when the size of modules varies widely. 
This is an interesting question that we leave for future investigation.

\section{Experimental Evaluation}
\label{sec:experiments}

We evaluate our shrinkage-based methods on challenging meta-learning
domains that have small amounts of data
and require long adaptation horizons, 
such as few-shot text-to-speech voice synthesis.
The aim of our evaluation is to answer the following three questions:
(1) Does shrinkage enable automatic discovery of a small set of task-specific modules? 
(2) Can we adapt only the task-specific modules without sacrificing performance?
(3) Does incorporating a shrinkage prior improve performance and robustness to overfitting in problems with little data and long adaptation horizons?

\subsection{Experiment setup}
\label{sec:exp_setup}

Our focus is on long adaptation, low data regimes.
To this end, 
we compare iMAML and Reptile to their corresponding
shrinkage variants,  \shrimaml{} and \shrreptile{}.
For task adaptation with the shrinkage variants, 
we use proximal gradient descent~\citep{singer2009proxgd} for the image experiments, and introduce a proximal variant of Adam~\citep{kingma2014adam} (pseudocode in \cref{alg:prox-adam}) for the text-to-speech (TTS) experiments. The proximal methods provide robustness to changes in the prior strength $\vsigma^2$ over time.
We provide further algorithmic details in \cref{sec:alg_details}. 
We evaluate on the following domains.

\textbf{Few-shot image classification.}
We use the augmented Omniglot protocol of \citet{flennerhag2019leap},
which necessitates long-horizon adaptation.
For each alphabet,
$20$ characters are sampled 
to define a $20$-class classification problem.
The domain is challenging because both train and test images are 
randomly augmented. Following \citet{flennerhag2019leap}, we use a 4-layer convnet
and perform
$100$ steps of task adaptation.
We consider two regimes:
(\textbf{Large-data regime})
We use $30$ training alphabets ($T=30$), $15$ training images ($K=15$),
and $5$ validation images per class.
Each image is randomly re-scaled, translated, and rotated.
(\textbf{Small-data regime})
To study the effect of overfitting,
we vary 
$T\in\{5,10,15,20\}$
and 
$K\in\{1,3,5,10,15\}$,
and augment only by scaling and translating. 
\looseness=-1

\textbf{Text-to-speech voice synthesis.}
Training a neural TTS model from scratch typically requires tens of hours of speech data. In the few-shot learning setting \citep{arik2018neural,chen2019sample,jia2018transfer,taigman2018voiceloop}, the goal is to adapt a trained model to a new speaker based on only a few minutes of data.
Earlier work unsuccessfully applied fast-adaptation methods such as MAML to synthesizing utterances~\citep{chen2019sample}. Instead, their state-of-the-art method first pretrains a
multi-speaker model comprised of a shared core network and a speaker embedding module and then finetunes either the entire model or the embedding only.
We remove the manually-designed speaker embedding layers and perform task adaptation and meta-updates on only the core network. 

The core network is a WaveNet vocoder model \citep{van2016wavenet} with $30$ residual causal dilated convolutional blocks as the backbone, consisting of roughly 3M parameters. For computational reasons, we use only one quarter of the channels of the standard network. As a result, sample quality does not reach production level but we expect the comparative results to apply to the full network. We meta-train with tasks of $100$ training utterances (about $8$ minutes) using 100 task adaptation steps, then evaluate on held-out speakers with either $100$ or $50$ (about $4$ minutes) utterances and up to $10,000$ adaptation steps. For more details see \cref{sec:apx_tts}.
\looseness=-1

\textbf{Short adaptation.}
While our focus is on long adaptation, we conduct experiments on short-adaptation datasets (sinusoid regression, standard Omniglot, and \textit{mini}ImageNet) for completeness
in \cref{sec:apx_experiment}.

\subsection{Module discovery}
\label{sec:exp_module_discovery}

To determine whether shrinkage discovers task-specific modules, 
we examine the learned prior strengths of each module and then adapt individual (or a subset of the) modules to assess the effect on performance due to adapting the chosen modules.
In these experiments, we treat each layer as a module but other choices of modules are straightforward.

\begin{figure}[t!]
\centering
\hspace{-0.1em}
\begin{subfigure}{.24\textwidth}
\includegraphics[width=\textwidth]{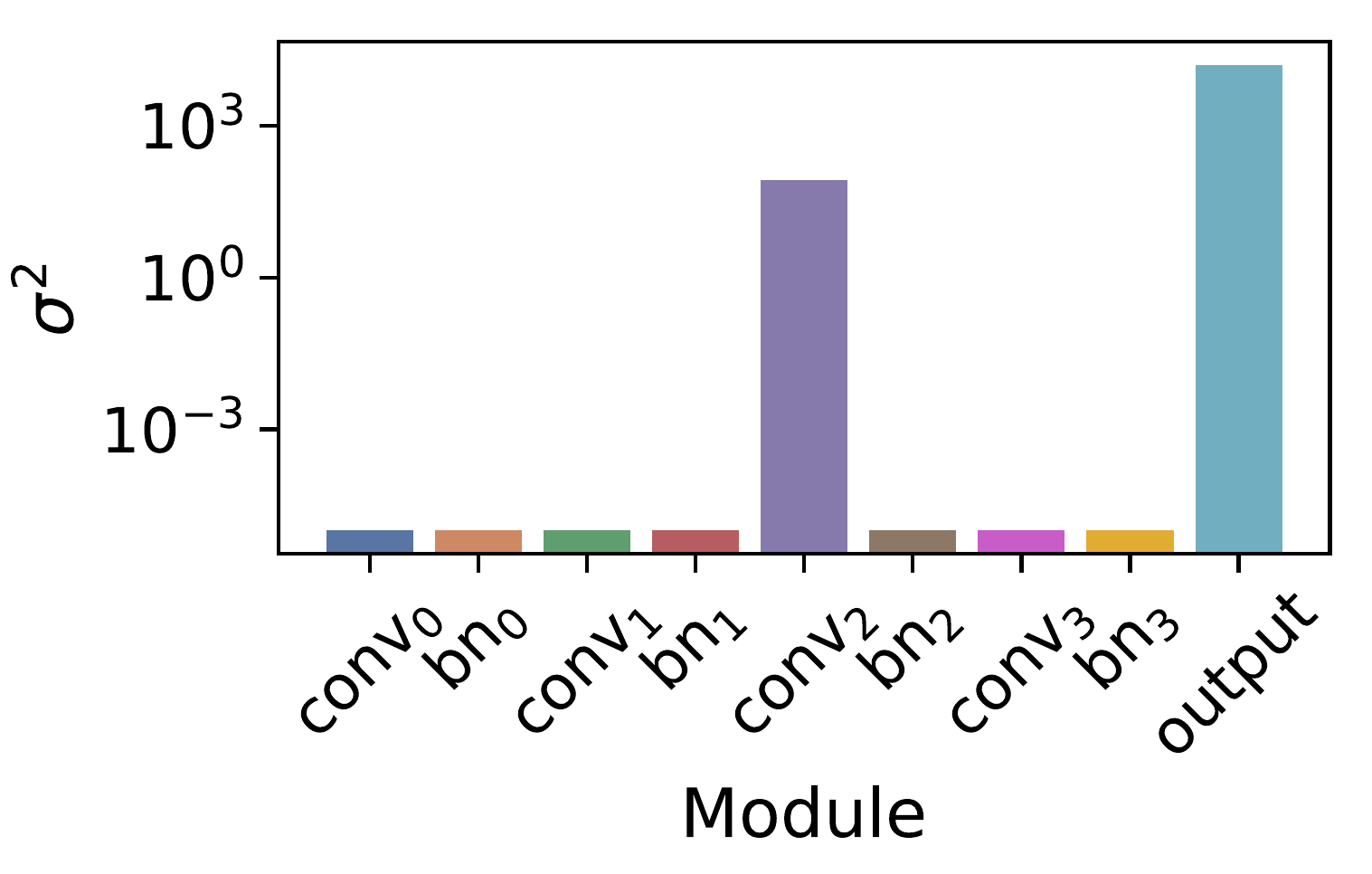}
\caption{$\vsigma^2$ for \shrimaml{}.}
\label{fig:aug_omniglot_shrinkage_imaml_var}
\end{subfigure}
\begin{subfigure}{.24\textwidth}
\includegraphics[width=\textwidth]{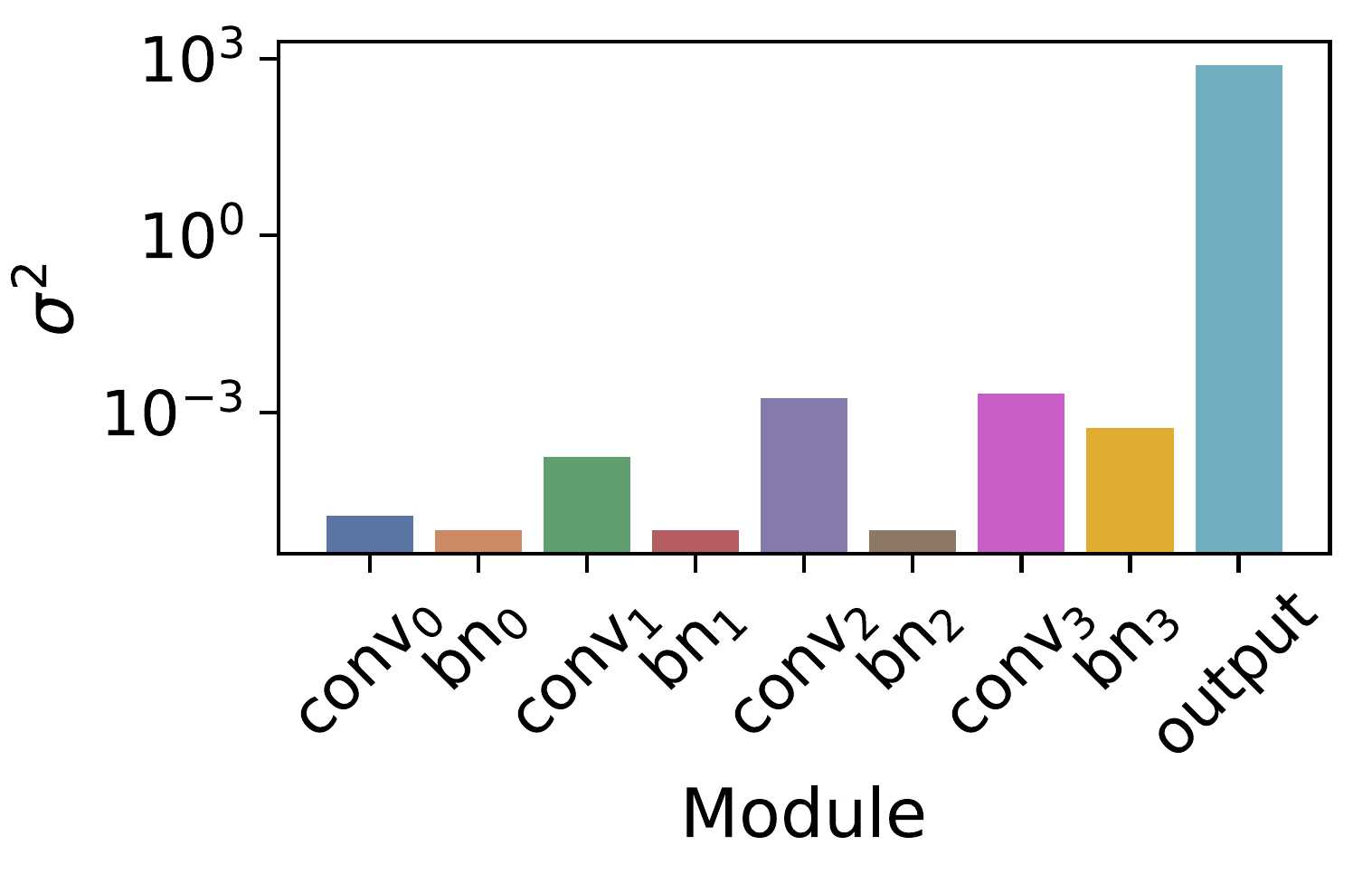}
\caption{$\vsigma^2$ for \shrreptile{}.}
\label{fig:aug_omniglot_shrinkage_reptile_var}
\end{subfigure}
\begin{subfigure}{.24\textwidth}
\includegraphics[width=\textwidth]{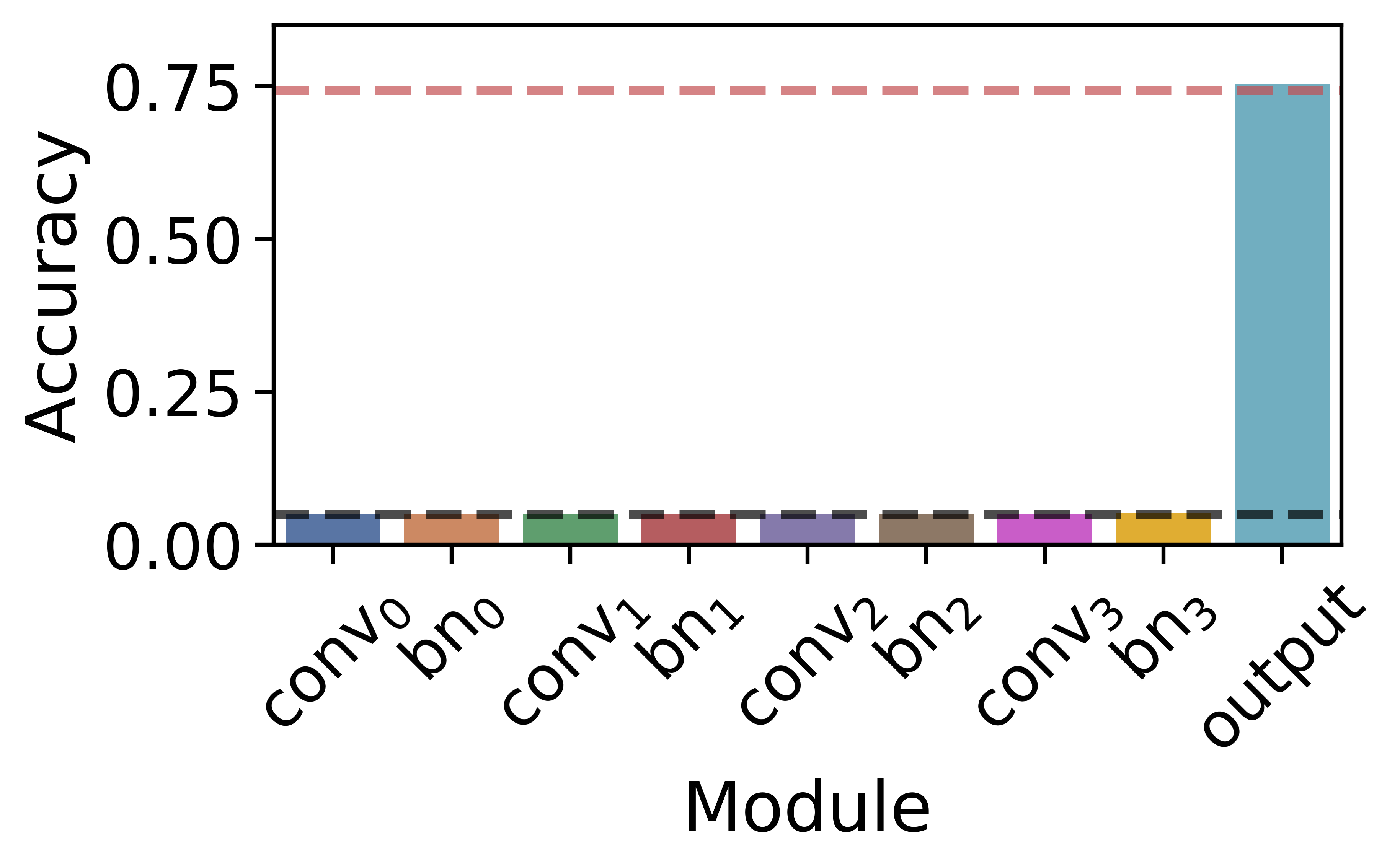}
\caption{\shrimaml{} accuracy.}
\label{fig:aug_omniglot_shrinkage_imaml_acc}
\end{subfigure}
\begin{subfigure}{.24\textwidth}
\includegraphics[width=\textwidth]{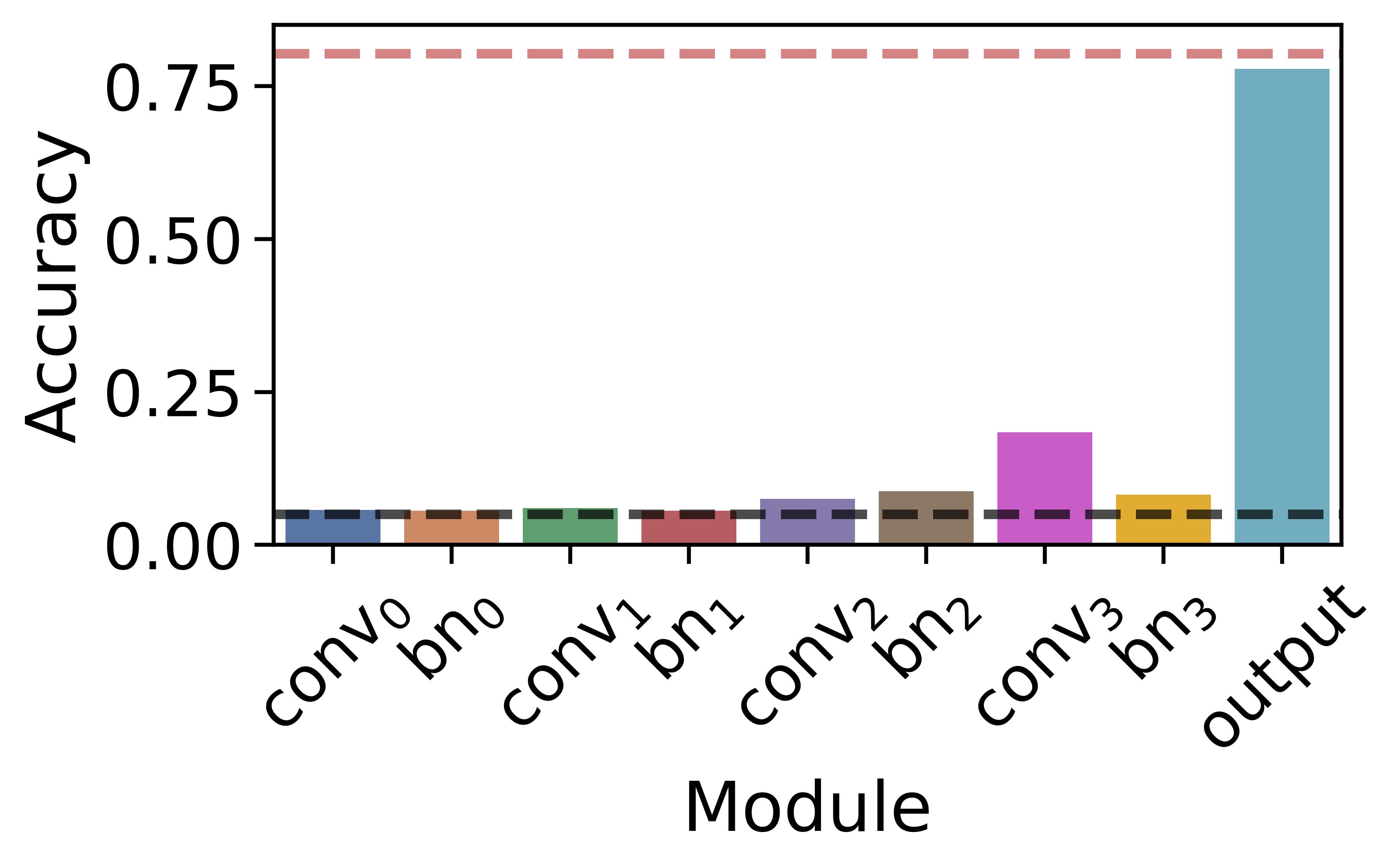}
\caption{\shrreptile{} accuracy.}
\label{fig:aug_omniglot_shrinkage_reptile_acc}
\end{subfigure}
\caption{
Module discovery with \shrimaml{} and \shrreptile{}
for large-data augmented Omniglot.
(a,b) The learned $\vsigma^2$ for each module (y-axis is log scale).
(c,d) Test accuracy when only that layer is adapted versus when all layers 
are adapted using the learned $\vsigma^2$.
}
\label{fig:module_aug_omniglot}
\end{figure}

\textbf{Image classification.}
\cref{fig:module_aug_omniglot}
shows our module discovery results on large-data augmented Omniglot 
for \shrimaml{} and \shrreptile{} using the standard network (4 conv layers, 4 batch-norm layers, and a linear output layer).
In each case, the learned $\vsigma^2$ (\cref{fig:module_aug_omniglot}(a,b)) 
of the output layer is
considerably larger than the others.
\cref{fig:module_aug_omniglot}(c,d) show that
the model achieves high accuracy when adapting only this shrinkage-identified module, giving comparable performance to that achieved by adapting all layers according to $\vsigma^2$.
This corroborates the conventional belief that the output layers of image classification
networks are more task-specific while the input layers are more general, and bolsters other meta-learning studies~\citep{raghu2019rapid,gordon2019metalearning}
that propose to adapt only the output layer. 
\looseness=-1

However, the full story is not so clear cut. Our module discovery results (\cref{sec:apx_experiment}) on standard few-shot short-adaptation image classification show that in those domains
adapting the \textit{penultimate} layer is best, which matches 
an observation in \citet{arnold2019decoupling}. 
Further, on sinusoid regression, adapting
the \textit{first} layer performed best.
Thus, there is no single best modular structure across domains.

\textbf{Text-to-speech.}
\cref{fig:tts_module} shows the learned $\sigma^2$ for each layer of our TTS WaveNet model,
which consists of 4 layers per residual block and 123 layers in total (\cref{sec:apx_tts} shows the full architecture).
Most $\sigma^2$ values 
are too small to be visible.
The dilated conv layers between blocks 10 and 21 
have the largest $\sigma^2$ values and thus
require the most adaptability,
suggesting that these blocks model the most speaker-specific features.
These layers have a receptive field of $43$--$85$ ms, which
matches our intuition about the domain because 
earlier blocks learn to model high-frequency sinusoid-like waveforms and later blocks model slow-changing prosody.
WaveNet inputs include the fundamental frequency (f0), which controls the change of pitch, and a sequence of linguistic features that provides the prosody.
Both the earlier and later residual blocks can learn to be speaker-invariant given these inputs.
Therefore, it is this middle range of temporal variability that contains the information about speaker identity. We select the $12$ layers with $\sigma^2$ values above $3.0$ for adaptation below. This requires adding only 16\% of the network parameters for each new voice. 
Note that this domain exhibits yet another type of modular structure from those above.

\subsection{Predictive Performance}
\label{sec:exp_performance}

\textbf{Image classification accuracy.}
For each algorithm, we perform extensive hyperparameter tuning on validation data. Details are provided in \cref{sec:apx_experiment}.
\cref{table:aug_omni_classification} shows the test accuracy for augmented Omniglot in the large-data regime.
Both shrinkage variants 
obtain modest accuracy improvements over their non-modular counterparts.
We expect only this small improvement over the non-shrinkage variants, however,
as the heavy data augmentation in this domain reduces overfitting.
\looseness=-1

We now reduce the amount of augmentation and data to make the domain more challenging. \cref{fig:aug_omni_reduced} shows our results in this small-data regime.
Both shrinkage variants significantly improve over their non-shrinkage 
counterparts when there are few training instances per alphabet.
This gap grows as the number of training instances decreases, demonstrating that shrinkage helps prevent overfitting.
Interestingly, the Reptile variants begin to outperform the iMAML variants as the number
of training instances increases, despite the extra validation data used by the iMAML variants.
Results for all combinations of alphabets and instances are shown
in the appendix. 
\looseness=-1

\begin{table}[t!]
    \centering
    \begin{minipage}{0.44\textwidth}
        \begin{center}
        \includegraphics[width=0.9\textwidth]{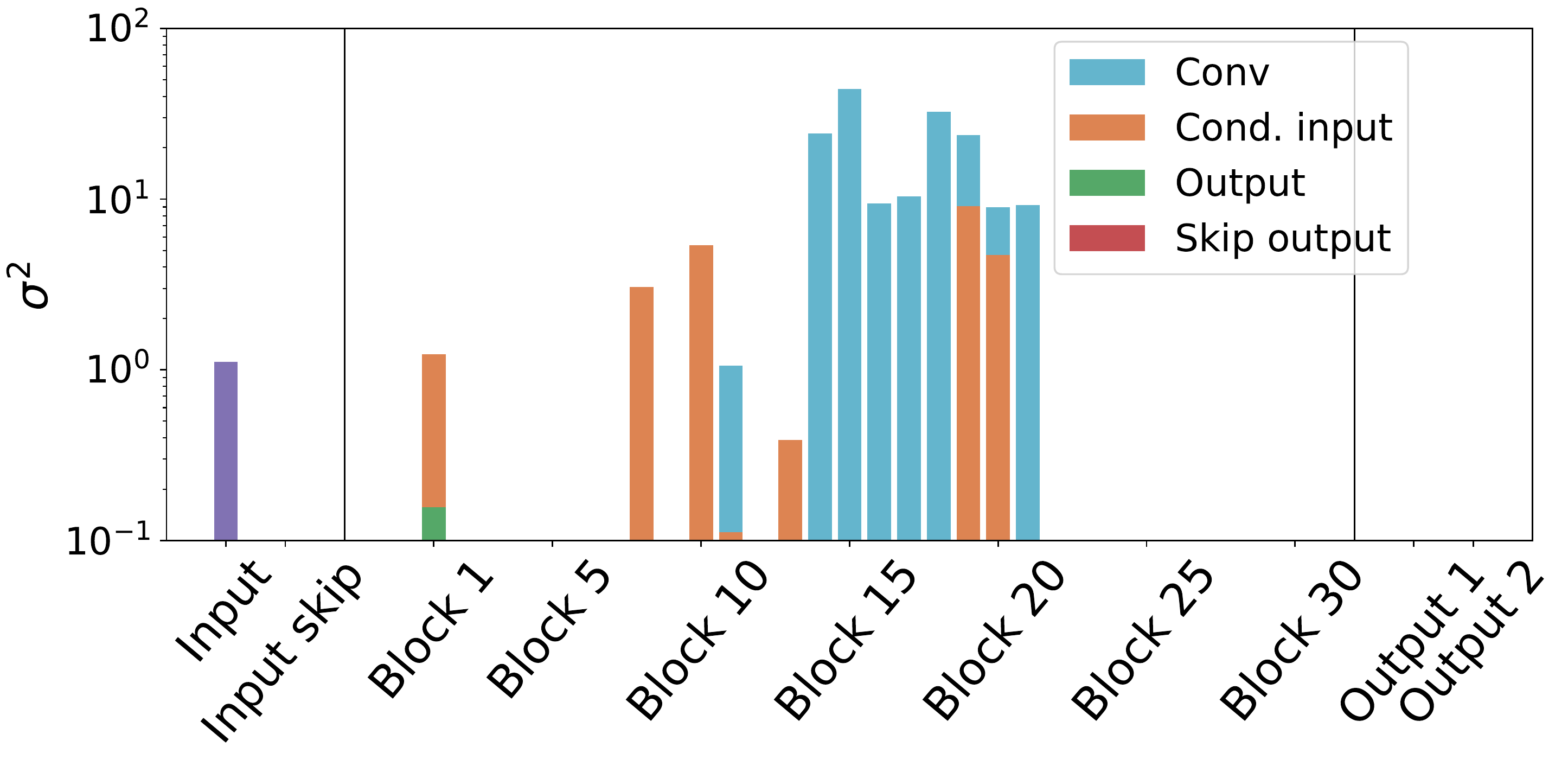}
        \captionof{figure}{Learned $\vsigma^2$ of WaveNet modules. Every block contains four layers (See Appendix~\ref{sec:apx_tts} for details).}
        \label{fig:tts_module}
        \end{center}
    \end{minipage}
    ~
    \begin{minipage}{0.45\textwidth}
        \begin{center}
        \begin{tabular}{lccc}
            \toprule
                                                   \multicolumn{1}{l}{{ \shrimaml{}}} & {\footnotesize$73.6 \pm 1.3\%$} \\
            \multicolumn{1}{l}{{\footnotesize iMAML}} & {\footnotesize$72.8 \pm 1.2\%$} \\ 
            \midrule
            \multicolumn{1}{l}{{\footnotesize \shrreptile{}}} & {\footnotesize $78.9 \pm 1.2\%$} \\
            \multicolumn{1}{l}{{\footnotesize Reptile}} & {\footnotesize $77.8 \pm 1.1\%$} \\  
            \bottomrule
        \end{tabular}
         \end{center}
        \caption{Average test accuracy and $95\%$ confidence intervals for $10$ runs on large-data augmented Omniglot.} 
        \label{table:aug_omni_classification}
    \end{minipage}
\vspace{-1em}
\end{table}

\begin{figure}[t!]
\centering
\begin{subfigure}{0.4\columnwidth}
\centering
\includegraphics[width=\columnwidth]{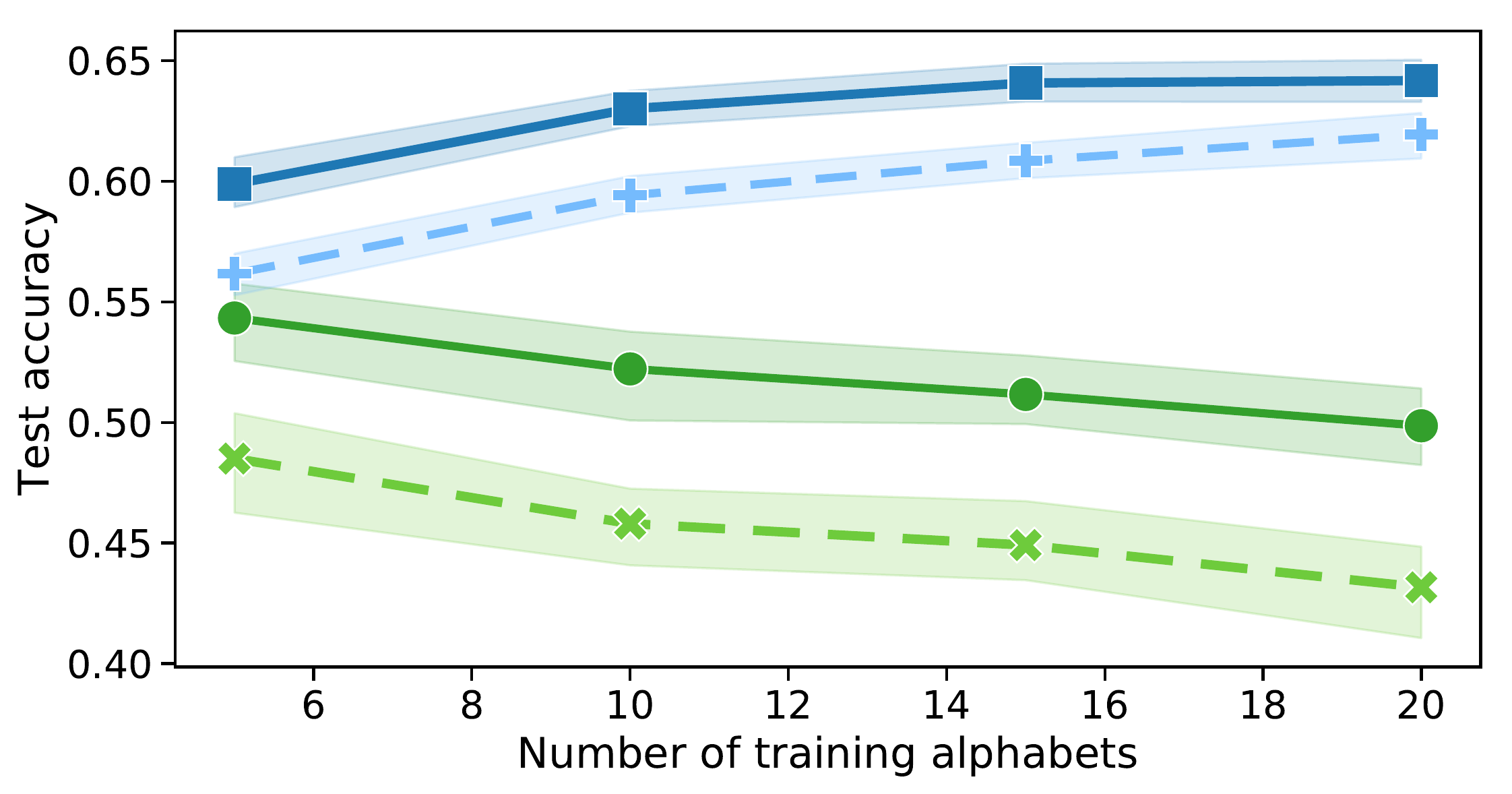}
\label{fig:aug_omni_reduced_instances}
\end{subfigure}
~
\begin{subfigure}{0.4\columnwidth}
\centering
\includegraphics[width=\columnwidth]{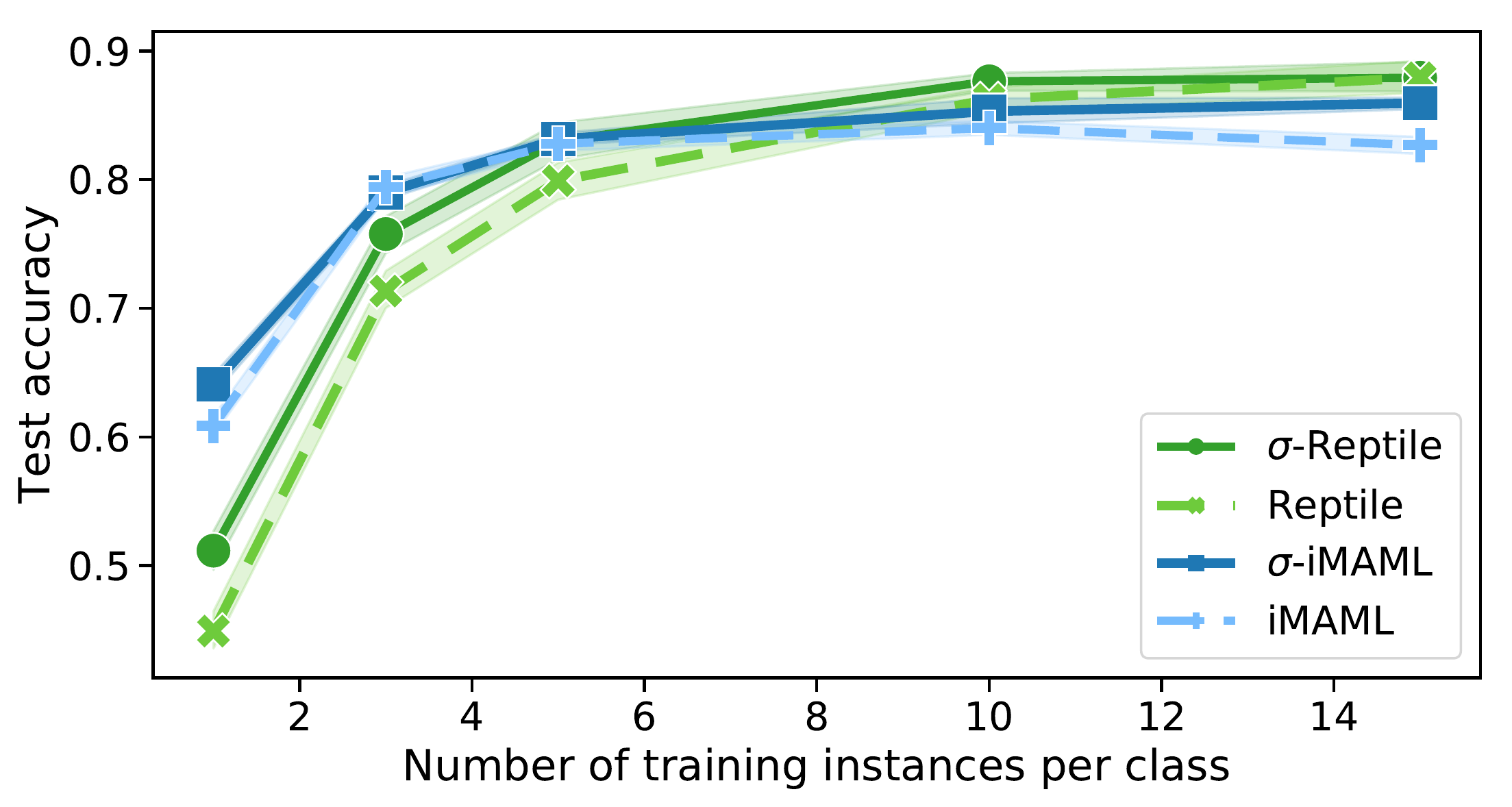}
\label{fig:aug_omni_reduced_alphabets}
\end{subfigure}
\vspace{-5mm}
\caption{Mean test accuracy and $95\%$ confidence intervals for $10$ runs on small-data aug. Omniglot as a function of the number of alphabets (left, $1$ image per character) and instances (right, $15$ alphabets).}
\label{fig:aug_omni_reduced}
\vspace{-1em}
\end{figure}

\textbf{Text-to-speech sample quality.}
The state-of-the art approaches for this domain \citep{chen2019sample} 
are to finetune either the entire model (aka.\ SEA-All) 
or just the speaker embedding (SEA-Emb).
We compare these
two methods to meta-training with Reptile and \shrreptile{}.
We also tried to run \shrmaml{} and \shrimaml{} but
\shrmaml{} ran out of memory with one adaptation step and 
\shrimaml{} trained too slowly.
\looseness=-1

We evaluate the generated sample quality using two voice synthesis metrics:
(1) the voice similarity between a sample and real speaker utterances using a speaker verification model \citep{wan2017generalized,chen2019sample}, and 
(2) the sample naturalness measured by the mean opinion score (MOS) from human raters. 
\cref{fig:tts_similarity} shows the distribution of sample similarities for each method, along with an upper (lower) bound computed from real utterances between the same (different) speakers.
Sample naturalness for each method is shown in \cref{tab:tts_mos}, along with an upper bound created by training the same model on $40$ hours of data.

\shrreptile{} and Reptile clearly outperform SEA-All and SEA-Emb.
\shrreptile{} has comparable median similarity with Reptile, and its sample naturalness surpasses Reptile with both 8 minutes of speech data, and 4 minutes, which is less than used in meta-training.
Overall, the \shrreptile{} samples have the highest quality
despite adapting only $12$ of the $123$ modules.
SEA-All and Reptile, which adapt all modules, overfit quickly
and underperform, despite adaptation being early-stopped.
Conversely, SEA-Emb underfits and does not improve
with more data because it only tunes the speaker embedding.
\looseness=-1

\begin{table}[th!]
    \centering
    \begin{minipage}{0.5\textwidth}
        \centering
        \includegraphics[width=\columnwidth]{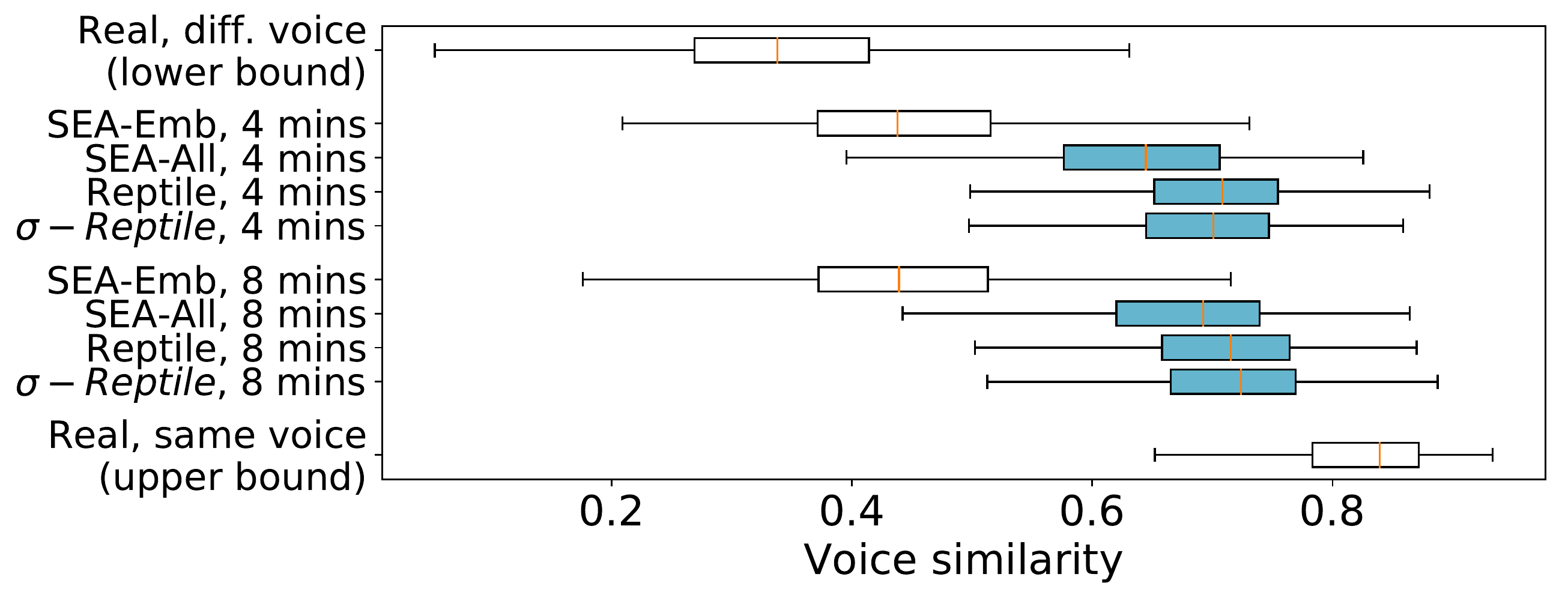}
        \captionof{figure}{Box-plot of voice similarity measurements from utterances (higher is better).}
        \label{fig:tts_similarity}
    \end{minipage}
    ~
        \begin{minipage}{.48\textwidth}
        \begin{center}
                        \resizebox{\columnwidth}{!}{        \begin{tabular}{l|cc}
            & 4 mins & 8 mins \\
            \hline
            SEA-Emb & $1.51 \pm 0.05$ & $1.57 \pm 0.05$ \\
            SEA-All & $1.41 \pm 0.04$ & $1.73 \pm 0.06$ \\
            Reptile & $\textbf{1.93} \pm 0.05$ & $2.09 \pm 0.06$ \\
            \shrreptile{} & $\textbf{1.98} \pm 0.06$ & $\textbf{2.28} \pm 0.06$\\
            \hline
            \parbox[t]{3cm}{Trained with 40 hours\\ of data (upper bound)}
             & \multicolumn{2}{c}{$2.59 \pm 0.07$}
        \end{tabular}
        }
        \end{center}
        \caption{Mean opinion score of sample naturalness. Scores range from 1--5 (higher is better).} 
        \label{tab:tts_mos}
    \end{minipage}
    \vspace{-1em}
\end{table}

\vspace{-.4cm}
\subsection{Discussion}
We thus answer all three experimental questions in the affirmative. 
In both image classification and text-to-speech, the learned shrinkage priors correspond to meaningful and interesting
task-specific modules. 
These modules differ between domains, however, indicating that
they should be learned from data.
Studying these learned modules allows us to discover new or existing knowledge
about the behavior of different parts of the network, 
while adapting only the task-specific modules
provides the same performance as adapting all layers.
Finally, learning and using our shrinkage prior helps prevent
overfitting and improves performance in low-data, long-adaptation regimes. 

\section{Conclusions}

This work proposes a hierarchical Bayesian model for meta-learning that places a shrinkage
prior on each module to allow learning the extent to which each module should adapt, 
without a limit on the adaptation horizon.
Our formulation includes MAML, Reptile, and iMAML as special cases,
empirically discovers a small set of task-specific modules in various domains,
and shows promising improvement in a practical TTS application
with low data and long task adaptation.
As a general modular meta-learning framework, 
it allows many interesting extensions, including
incorporating
alternative 
Bayesian inference algorithms,
modular structure learning, and 
learn-to-optimize methods.

\section*{Broader Impact}

This paper presents a general meta-learning technique to automatically identify task-specific modules in a model for few-shot machine learning problems. It reduces the need for domain experts to hand-design task-specific architectures,
and thus further democratizes machine learning,
which we hope will have a positive societal impact.
In particular, general practitioners who can not afford to collect a large amount of labeled data will be able to take advantage of a pre-trained generic meta-model and adapt its task-specific components to a new task based on limited data. One example application might be to adapt a multilingual text-to-speech model to a low-resource language or the dialect of a minority ethnic group.

As a data-driven method, like other machine learning techniques, the task-independent and task-specific modules discovered by our method are based on the distribution of tasks in the meta-training phase. Adaptation may not generalize to a task with characteristics that fundamentally differ from those of the training distribution. 
Applying our method to a new task without examining the task similarity runs the risk of transferring induced bias from meta-training to the out-of-distribution task. For example, a meta image classification model trained only on vehicles is unlikely to be able to be finetuned to accurately identify a pedestrian based on the adaptable modules discovered during meta-training. To mitigate this problem, we suggest ML practitioners first understand whether the characteristics of the new task match those of the training task distribution before applying our method.
\looseness=-1

\begin{ack}
The authors thank Alexander Novikov for his code for computing Hessian-vector products; Yi Yang for helpful discussions on the meta-learning dataset design; Sander Dieleman, Norman Casagrande and Chenjie Gu for their advice on the TTS model; and Sarah Henderson and Claudia Pope for their organizational help. 

All authors are employees of DeepMind, which was the sole source of funding for this work. None of the authors have competing interests.

\end{ack}

{\small
\bibliography{references}
\bibliographystyle{unsrtnat}
}

\clearpage

\appendix

{{\textbf{\Large{Appendix}}}}

This appendix contains the supplementary material for the main text.
In \cref{sec:analysis}, we first prove \cref{lemma:jointmaximizationisbad} and then provide a detailed analysis of the introduced estimation approaches on a simple example.
In \cref{sec:valid_gd_derivation}, we provide details of the derivation of the implicit gradients in \cref{eq:valid_gd_v1,eq:valid_gd_v2}, show the equivalence of \shrimaml{} and iMAML when $\sigma_m^2$ is shared across all modules and fixed, and provide more discussion on the objective introduced in \cref{sec:alg_map}.
In \cref{sec:synthetic_exps}, we demonstrate the different behavior of approaches that meta-learn the learning rate per module versus our approach that meta-learns the shrinkage prior per module, on two synthetic problems.
In \cref{sec:alg_details}, we provide additional details about our implementation of the iMAML baseline and our shrinkage algorithms. Finally, in \cref{sec:apx_experiment}, we explain the experiments in more detail, provide information about the setup and hyperparameters, and present additional results.

\section{Analysis}
\label{sec:analysis}

In this section, we first prove \cref{lemma:jointmaximizationisbad}, which shows in general that it is not feasible to estimate all the parameters $(\vsigma^2, \vmetav, \vtaskv_{1:T})$ jointly by optimizing the joint log-density. We then provide detailed analysis of the properties of the two estimation approaches introduced in \cref{sec:learning} for a simple hierarchical normal distribution example.
Finally, we discuss the pathological behavior when estimating $\vsigma^2$ from the joint log-density in the same example.

\subsection{Maximization of \texorpdfstring{$\log p(\vtaskv_{1:T},\D|\vsigma^2,\vmetav)$}{log p(theta, D|sigma2, phi)}}
\cref{lemma:jointmaximizationisbad} states that the function $f: (\vsigma^2,\vphi,\vtheta_{1:T}) \mapsto \log p(\vtaskv_{1:T},\D|\vsigma^2,\vmetav)$ diverges to $+\infty$ as 
$\vsigma \rightarrow 0^{+}$ when $\theta_{t,m}=\vmetav_m$ for all $t\in\{ 1,...,T \},m\in\{1,...,M \}$. We first establish here this result.
\begin{proof}[Proof of \cref{lemma:jointmaximizationisbad}]
For simplicity, we prove the result for $M=1$ and $N_t=N$. The extension to the general case is straightforward. 
We have 
\eq{
&\log p(\theta_{1:T},\vx_{1:T} | \phi, \sigma^2) \nn  = - \frac{T}{2} \log \sigma^2 - \frac{1}{2} \sum_{t=1}^{T} \frac{(\theta_t-\phi)^2}{\sigma^2} + \sum_{t=1}^{T}\sum_{i=1}^{N} \log p(x_{t,n}|\theta_t)\,.
}
From this expression, it is clear that when $\theta_t=\phi$ for all $t$, then $\log p(\theta_{1:T},\vx_{1:T} | \phi, \sigma^2) \rightarrow +\infty$ as $\sigma^2 \rightarrow 0^{+}$.
\end{proof}

This shows that the global maximum of $\log p(\theta_{1:T},\vx_{1:T} | \phi, \sigma^2)$ does not exist in general, and thus we should not use this method for estimating $\sigma^2$.

This negative result illustrates the need for alternative estimators. In the following sections, we analyze the asymptotic behavior of the estimates of $\vphi$ and $\vsigma^2$ proposed in \cref{sec:learning} as the number of training tasks $T \rightarrow \infty$ in a simple example. In \cref{sec:map_estimate_sigma}, we analyze the behavior of \cref{lemma:jointmaximizationisbad} 
when optimizing $\log p(\theta_{1:T},\vx_{1:T} | \phi, \sigma^2)$ w.r.t.\ $\vsigma^2$ with a local optimizer.

\subsection{Analysis of estimates in a simple example}
\label{sec:analysis_pll_estimate}

To illustrate the asymptotic behavior of different learning strategies as $T\rightarrow \infty$, we consider a simple model with $M=1$ module, $D_t = D = 1$ for all tasks and normally distributed observations. We use non-bolded symbols to denote that all variables are scalar in this section.
\begin{example}[Univariate normal]
\eq{
&M=1, D_t=1, N_t = N, \forall t=1,\dots,T, \nn\\
&x_{t,n} \sim \gauss(x_{t,n} | \theta_t, 1), \forall t=1,\dots,T,  n=1,\dots,N.\nn
}
\label{ex:normal_in_appx}
\end{example}
It follows that 
\eq{
p(\theta_{1:T},\vx_{1:T} | \phi, \sigma^2)= \prod_{t=1}^{T} \left( \gauss(\theta_t | \phi,\sigma^{2}) \prod_{n=1}^N \gauss(x_{t,n} | \theta_t,1) \right)\,,
}
and we use the notation for the negative joint log-density up to a constant
\eq{
\ell_\pos (\theta_{1:T},\phi,\sigma^2) &= -\log p(\theta_{1:T},\vx_{1:T} | \phi, \sigma^2) \nn\\
& = \frac{T}{2} \log \sigma^2 + \frac{1}{2} \sum_{t=1}^{T} \frac{(\theta_t-\phi)^2}{\sigma^2} + \frac{1}{2}\sum_{t=1}^{T}\sum_{n=1}^{N} (x_{t,n}-\theta_t)^2 \nn\\
&= \sum_{t=1}^T \elltr(\taskv_t, \metav, \sigma^2) \,. \label{eq:norm_pos}
}

We also assume there exists a set of independently sampled validation data $\vy_{1:T}$ where $y_{t,n} \sim \calN(y| \theta_t, 1)$ for $t\in\{ 1,\dots,T \}, n\in \{ 1,\dots,K \}$. The corresponding negative log-likelihood given point estimates $\hat\taskv_{1:T}$ is given up to a constant by
\eq{
\hat{\ell}_\pll = \sum_{t=1}^T \ellval(\hat\taskv_t) = \frac{1}{2}\sum_{t=1}^T \sum_{k=1}^K (y_{t,k} - \hat\taskv_t)^2.
}

We denote by $\phi_\r$ and $\sigma_\r$ the true value of $\phi$ and $\sigma$ for the data generating process described above.

\subsubsection{Estimating \texorpdfstring{$\vmetav$}{phi} and \texorpdfstring{$\vmetav$}{sigma2} with predictive log-likelihood}

We first show that when we estimate $\taskv_t$ with MAP on $\elltr$ and estimate $\metav$ and $\sigma^2$ with the predictive log-likelihood as described in \cref{sec:alg_pll}, the estimates $\hat\metav$ and $\hat\sigma^2$ are consistent.

\begin{proposition} 
Let $\hat\theta_t(\phi,\sigma^2)=\arg \min_{\theta_t} \elltr (\theta_t,\phi,\sigma^2)$ and define  $(\hat\phi,\hat\sigma^2)=\arg_{\phi,\sigma^2} \{\nabla_{(\phi, \sigma^2)} \hat\ell_{\pll}(\hat\theta_{1:T})=0 \}$ then, as $T\rightarrow \infty$, we have 
$$
\hat{\phi}(\hat\sigma^2) \rightarrow \phi_\r, \quad
\hat{\sigma}^2 \rightarrow \sigma_\r^2,
$$
in probability.
\end{proposition}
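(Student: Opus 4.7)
The plan is to reparameterise the problem as a standard linear-regression problem in two new variables, for which the empirical objective is quadratic and unambiguously minimised, and then to transfer consistency of ordinary least squares back to the original meta-parameters $(\phi,\sigma^2)$ via the continuous mapping theorem.

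First I would compute the MAP estimator in closed form: differentiating $\elltr$ in $\theta_t$ and setting to zero yields the posterior mean $\hat\theta_t(\phi,\sigma^2) = (1-\alpha)\phi + \alpha\bar x_t$, where $\alpha \defeq N\sigma^2/(1+N\sigma^2) \in (0,1)$ and $\bar x_t \defeq \tfrac{1}{N}\sum_n x_{t,n}$. Setting $\beta_0 \defeq (1-\alpha)\phi$ and $\beta_1 \defeq \alpha$, the predictor becomes $\hat\theta_t = \beta_0 + \beta_1 \bar x_t$, so $\hat\ell_\pll = \tfrac{1}{2}\sum_{t,k}(y_{t,k}-\beta_0-\beta_1\bar x_t)^2$ is the standard OLS objective in $(\beta_0,\beta_1)$. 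The reparameterisation is a smooth bijection from $\{(\phi,\sigma^2):\sigma^2\in(0,\infty)\}$ onto $\{(\beta_0,\beta_1):\beta_1\in(0,1)\}$, with smooth inverse $\phi=\beta_0/(1-\beta_1)$ and $\sigma^2 = \beta_1/(N(1-\beta_1))$. In particular, the stationarity condition $\nabla_{(\phi,\sigma^2)}\hat\ell_\pll=0$ is equivalent to the OLS normal equations in $(\beta_0,\beta_1)$, which admit a unique solution $(\hat\beta_0,\hat\beta_1)$ for $T\ge 2$ because the $\bar x_t$ have positive sample variance almost surely.

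Next I would invoke standard OLS consistency. Under the true model $\E[\bar x_t] = \E[y_{t,k}] = \phi_\r$, $\Var(\bar x_t)=\sigma_\r^2+1/N$, and $\cov{\bar x_t,y_{t,k}} = \sigma_\r^2$ (both share the latent $\theta_t$ but have conditionally independent additive noise), so by the law of large numbers the empirical regression coefficients converge in probability to the population values $\beta_1^* = \sigma_\r^2/(\sigma_\r^2+1/N) = N\sigma_\r^2/(1+N\sigma_\r^2) \in (0,1)$ and $\beta_0^* = (1-\beta_1^*)\phi_\r$. Applying the smooth inverse together with the continuous mapping theorem then gives $\hat\phi = \hat\beta_0/(1-\hat\beta_1) \to \phi_\r$ and $\hat\sigma^2 = \hat\beta_1/(N(1-\hat\beta_1)) \to \sigma_\r^2$ in probability, as claimed.

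The main obstacle is that the inverse map blows up at the boundary $\beta_1 = 1$ (equivalently $\alpha = 1$, $\sigma^2 = \infty$), where the original parameterisation becomes singular and the formal stationarity condition written in $(\phi,\sigma^2)$ is no longer invertible. This is handled by observing that $\beta_1^* = N\sigma_\r^2/(1+N\sigma_\r^2)$ lies strictly inside $(0,1)$, so $\hat\beta_1$ concentrates in a compact subset of $(0,1)$ with probability tending to one and the continuous mapping argument applies uniformly along the solution path.
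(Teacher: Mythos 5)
Your proof is correct, and it takes a genuinely different route from the paper's. The paper works directly with the implicit gradients: it plugs the closed-form MAP solution $\hat\theta_t$ into the gradient formulas for $\nabla_{\sigma^2}\hat\ell_\pll$ and $\nabla_{\phi}\hat\ell_\pll$, solves the resulting coupled pair of stationarity equations to get explicit expressions for $\hat\phi$ and $\hat\sigma^2$ in terms of empirical moments of $(\bar{x}_t,\bar{y}_t)$, and then applies the law of large numbers and Slutsky's lemma to the joint normal distribution of $(\bar{x}_t,\bar{y}_t)$. You instead observe that $\hat\theta_t$ is affine in $\bar{x}_t$, reparameterise $(\phi,\sigma^2)\mapsto(\beta_0,\beta_1)$ so that $\hat\ell_\pll$ becomes an ordinary least-squares objective, and transfer OLS consistency back through the (triangular, hence invertible) Jacobian via the continuous mapping theorem; the population slope $\beta_1^*=N\sigma_\r^2/(1+N\sigma_\r^2)$ and intercept $\beta_0^*=(1-\beta_1^*)\phi_\r$ indeed invert to $(\phi_\r,\sigma_\r^2)$, and your closed forms agree with the paper's (e.g.\ the paper's $\hat\phi(\sigma^2)=\bar{y}+N\sigma^2(\bar{y}-\bar{x})$ is exactly the first normal equation rewritten). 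Your framing buys two things the paper's computation leaves implicit: uniqueness and a.s.\ existence of the stationary point for finite $T$ (strict convexity of OLS when the $\bar{x}_t$ have positive sample variance), and an explicit treatment of the boundary $\beta_1\to 1$ where the $(\phi,\sigma^2)$ parameterisation degenerates. The paper's direct computation is self-contained and reuses its own implicit-gradient machinery from Eq.~(7), which is pedagogically useful in context, but your argument is cleaner and makes the statistical structure (the estimator is a regression of validation means on training means) transparent. Both proofs ultimately rest on the same joint normality of $(\bar{x}_t,\bar{y}_t)$ and the law of large numbers.
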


\begin{proof}
We denote the sample average 
\eq{
\bar{x}_t:=\frac{1}{N} \sum_{n,t} x_{n,t}, \quad
\bar{y}_t:=\frac{1}{K} \sum_{k,t} y_{k,t} \,,
\label{def:sampleaverage}
}
and the average over all tasks $\bar{x}=\frac{1}{T}\sum_{t=1}^T \bar{x}_t$, $\bar{y}=\frac{1}{T}\sum_{t=1}^T \bar{y}_t$.

The equation $\nabla_{\theta_t} \elltr  = 0$ gives
\eq{
\hat\theta_t(\metav, \sigma^2) = \frac{\sum_{n=1}^{N} x_{t,n} + \phi/\sigma^2}{N+1/\sigma^2} = \frac{\bar{x}_t + \phi/N\sigma^2}{1+1/N\sigma^2} . \label{eq:thetahat} 
}

By plugging \cref{eq:thetahat} in \cref{eq:valid_gd_v1}, it follows that
\eq{
\nabla_{\sigma^2} \hat\ell_{\pll} &= -\sum_{t} \nabla_{\taskv_t} \ellval(\taskv_t) \vH^{-1}_{\taskv_t \taskv_t} \vH_{\taskv_t \sigma^2}\nn\\
&= \frac{K}{N\sigma^4\left(1+\frac{1}{N\sigma^2}\right)}
\sum_t (\taskv_t - \bar{y}_t)(\taskv_t - \metav) \nn\\
&= \frac{K}{N\sigma^4(1 + \frac{1}{N\sigma^2})^3} \cdot \nn\\
&\quad\quad \sum_t \left(\bar{x}_t - \bar{y}_t + \frac{1}{N\sigma^2} (\metav - \bar{y}_t)\right) (\bar{x}_t - \metav) \label{eq:normal_example_valid_gd}\,.
}
We then solve $\nabla_{\sigma^2} \hat\ell_{\pll} = 0$ as a function of $\metav$,
\eq{
\hat\sigma^2(\metav)=\frac{\frac{1}{T}\sum_t (\bar{x}_t - \metav)(\bar{y}_t-\metav)}{\frac{N}{T} \sum_t (\bar{x}_t - \bar{y}_t)(\bar{x}_t - \metav)}. \label{eq:root_sigma_interm}
}

Similarly, we solve $\nabla_{\phi} \hat\ell_{\pll}(\hat\theta_{1:T}(\metav, \sigma^2) = 0$ as
\eq{
\nabla_{\phi} \hat\ell_{\pll} &= -\sum_{t} \nabla_{\taskv_t} \ellval(\taskv_t) \vH^{-1}_{\taskv_t \taskv_t} \vH_{\taskv_t \metav}\nn\\
&= -\frac{K}{1+N\sigma^2} \sum_{t=1}^T \left(\bar{y}_t - \frac{\bar{x}_t + \phi/N\sigma^2}{1+1/N\sigma^2} \right) = 0 \nn,\\
}
this yields
\eq{
\hat{\metav}(\sigma^2) = \bar{y} + N\sigma^2(\bar{y} - \bar{x}). \label{eq:root_phi_interm}
}

By combining \cref{eq:root_sigma_interm} to \cref{eq:root_phi_interm}, we obtain
\eq{
\hat\metav = \frac{\overline{x(x-y)}\bar{y} +
\overline{xy}(\bar{y}-\bar{x})}{\bar{x}(\bar{y}-\bar{x})+\overline{x(x-y)}}
}
where, for a function $f$ of variable $x, y$, we define $\overline{f(x,y)} := \frac{1}{T}\sum_{t} f(\bar{x}_t, \bar{y}_t)$.

Following the data generating process, the joint distribution of $\bar{x}_t$ and $\bar{y}_t$ with $\theta_t$ integrated out is jointly normal and satisfies
\eq{
[\bar{x}_t, \bar{y}_t]^T \sim \gauss\left(\phi_\r \vone_2, \begin{bmatrix}
\sigma_\r^2 + \frac{1}{N} & \sigma_\r^2\\
\sigma_\r^2 & \sigma_\r^2 + \frac{1}{K}
\end{bmatrix} \right).
\label{eq:jointnormal}
}
As $T \rightarrow \infty$, it follows from the law of large numbers and Slutsky's lemma that $\hat\phi \rightarrow \phi_\r$ in probability. Consequently, it also follows from (\ref{eq:root_sigma_interm}) that $\hat\sigma^2 \rightarrow \sigma^2_\r$ in probability.
\end{proof}

\subsubsection{Estimating \texorpdfstring{$\metav$}{phi} with MAP and \texorpdfstring{$\sigma^2$}{sigma2} with predictive log-likelihood}

Alternatively, we can follow the approach described in \cref{sec:alg_map} to estimate both $\taskv_{1:T}$ and $\metav$ with MAP on $\ell_\pos$~(\cref{eq:norm_pos}), i.e.,
\eq{
(\hat\phi(\sigma^2),\hat\theta_{1:T}(\sigma^2))=\arg \max_{(\phi,\theta_{1:t})} \ell_\pos (\theta_{1:T},\phi,\sigma^2)
\label{def:mapcondsigma} \,,
}
and estimate $\sigma^2$ with the predictive log-likelihood by 
finding the root of the approximate implicit gradient in \cref{eq:valid_gd_v2}.

We first show that given any fixed value of $\sigma^2$, the MAP estimate of $\metav$ is consistent.
\begin{proposition}[Consistency of MAP estimate of $\phi$]
\label{prop:map_phi}
For any fixed $\sigma>0$, $\hat\phi(\sigma^2)=\bar{x}$, so $\hat\phi(\sigma^2) \rightarrow \phi_\r$ in probability as $T\rightarrow\infty$. 
\end{proposition}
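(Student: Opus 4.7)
The plan is to compute the joint MAP directly, by writing the first-order optimality conditions of $\ell_\pos$ from \cref{eq:norm_pos} in both $\theta_{1:T}$ and $\phi$, eliminating the $\theta_t$ variables, and observing that the resulting equation for $\phi$ simplifies dramatically.

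First I would differentiate $\ell_\pos$ with respect to each $\theta_t$, obtaining the same conditional MAP expression as \cref{eq:thetahat},
\eq{
\hat\theta_t(\phi,\sigma^2) = \frac{N\bar{x}_t + \phi/\sigma^2}{N + 1/\sigma^2}. \nn
}
Next I would differentiate $\ell_\pos$ with respect to $\phi$. Since $\phi$ only appears in the Gaussian prior terms, the stationarity condition reduces to $\sum_{t=1}^T (\theta_t - \phi)/\sigma^2 = 0$, i.e.\ $\phi = \tfrac{1}{T}\sum_t \theta_t$. Substituting the above expression for $\hat\theta_t(\phi,\sigma^2)$ yields
\eq{
\phi \bigl(N + 1/\sigma^2\bigr) = N \bar{x} + \phi/\sigma^2, \nn
}
where $\bar{x} = \tfrac{1}{T}\sum_t \bar{x}_t$ is defined as in \cref{def:sampleaverage}. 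The $\phi/\sigma^2$ terms cancel, leaving $\phi N = N \bar{x}$, so $\hat\phi(\sigma^2) = \bar{x}$ for every fixed $\sigma^2 > 0$.

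Finally I would invoke the law of large numbers. Under the data generating process, marginalizing over $\theta_t$ gives $\E[\bar{x}_t] = \phi_\r$ and a finite variance (as computed in the joint distribution in \cref{eq:jointnormal}), so $\bar{x} \to \phi_\r$ in probability as $T \to \infty$. Combined with the closed-form identity $\hat\phi(\sigma^2) = \bar{x}$, this gives the stated consistency.

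I do not expect any real obstacle here: the result is essentially a two-line calculation once one notices that the $\phi$-stationarity equation collapses after substituting $\hat\theta_t$. The only mildly subtle point is verifying that the joint optimum in $(\phi,\theta_{1:T})$ really is attained at this critical point, which follows because $\ell_\pos$ is jointly quadratic (hence strictly convex) in $(\phi,\theta_{1:T})$ for any fixed $\sigma^2>0$.
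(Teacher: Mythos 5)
Your proof is correct and follows essentially the same route as the paper's: solve the stationarity conditions in $\theta_t$ and $\phi$, eliminate $\theta_t$ to get $\hat\phi(\sigma^2)=\bar{x}$, and conclude consistency from the joint distribution in \cref{eq:jointnormal} (the paper phrases the last step as unbiasedness plus Chebyshev, which is the same weak-law argument you invoke). Your added remark on strict convexity of $\ell_\pos$ in $(\phi,\theta_{1:T})$ for fixed $\sigma^2>0$ is a correct and welcome bit of extra rigor that the paper leaves implicit.
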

\begin{proof}[Proof of \cref{prop:map_phi}]
The equation $\nabla_\phi \ell_\pos = 0$ for $\ell_\pos$ defined in \cref{eq:norm_pos} gives
\eq{
\hat\phi(\sigma^2) = \frac{1}{T} \sum_{t=1}^{T} \hat\theta_t(\sigma^2).\label{eq:phihat}
}
By summing \cref{eq:thetahat} over $t=1,...,T$ and using \cref{eq:phihat}, we obtain 
\eq{
\hat\phi(\sigma^2)&=\bar{x},\quad 
\hat\theta_t(\sigma^2)=\frac{\bar{x}_t+\frac{\bar{x}}{N\sigma^2}}{1+\frac{1}{N\sigma^2}}.
\label{eq:solutionMAP}
}
The distribution of $\bar{x}$ follows directly from \cref{eq:jointnormal}. Therefore, $\hat\phi(\sigma^2)$ is unbiased for any $T$ and it is additionally consistent by Chebyshev's inequality as $T\rightarrow \infty$.
\end{proof}

Now we show that estimating $\sigma^2$ by finding the roots of the implicit gradient in \cref{eq:valid_gd_v2} is also consistent.

\begin{proposition} Let $(\hat\phi(\sigma^2),\hat\theta_{1:T}(\sigma^2))=\arg \max_{(\phi,\theta_{1:T})} \ell_\pos (\theta_{1:T},\phi,\sigma^2)$ and define  $\hat\sigma^2=\arg_{\sigma^2} \{ \nabla_{\sigma^2} \hat\ell_{\pll}(\hat\theta_{1:T})=0 \}$ where the gradient is defined as in \cref{eq:valid_gd_v2}, then, as $T\rightarrow \infty$, we have 
$$
\hat{\phi}(\hat\sigma^2) \rightarrow \phi_\r, \quad \hat{\sigma}^2 \rightarrow \sigma_\r^2,
$$
in probability.
\end{proposition}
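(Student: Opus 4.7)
The plan is to re-use the explicit MAP solutions from Proposition~2, substitute them into the implicit-gradient expression of \cref{eq:valid_gd_v2}, and then conclude via the law of large numbers (LLN) applied to the joint Gaussian of $(\bar x_t,\bar y_t)$ in \cref{eq:jointnormal}.

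The convergence $\hat\phi(\hat\sigma^2)\to\phi_\r$ is immediate: Proposition~2 gives $\hat\phi(\sigma^2)=\bar x$ for every $\sigma^2>0$, independently of which $\sigma^2$ we plug in, and $\bar x\to\phi_\r$ in probability by LLN applied to the marginal of $\bar x_t$ in \cref{eq:jointnormal}.

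For $\hat\sigma^2\to\sigma_\r^2$, the first observation is that in this example $\hat\phi(\sigma^2)=\bar x$ does \emph{not} depend on $\sigma^2$, so the \shrreptile{} prescription of ignoring $\nabla_{\sigma^2}\hat\phi$ introduces no error here and the implicit-gradient expression in \cref{eq:valid_gd_v2} reduces to exactly \cref{eq:normal_example_valid_gd} with the free variable $\phi$ replaced by $\bar x$. Evaluating at $\hat\theta_t(\sigma^2)=(\bar x_t+\bar x/(N\sigma^2))/(1+1/(N\sigma^2))$, setting the result to zero and solving then yields
\[\hat\sigma^2 \;=\; -\frac{1}{N}\cdot\frac{\frac{1}{T}\sum_t (\bar x-\bar y_t)(\bar x_t-\bar x)}{\frac{1}{T}\sum_t (\bar x_t-\bar y_t)(\bar x_t-\bar x)}.\]

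To finish, I would compute the limits of numerator and denominator using \cref{eq:jointnormal}, from which $\mathrm{Cov}(\bar x_t,\bar y_t)=\sigma_\r^2$ and $\mathrm{Var}(\bar x_t)=\sigma_\r^2+1/N$, together with $\bar x,\bar y\to\phi_\r$ in probability. Expanding each quadratic sum, using $\frac{1}{T}\sum_t \bar x_t=\bar x$, and applying LLN term by term give, in probability, $\frac{1}{T}\sum_t(\bar x-\bar y_t)(\bar x_t-\bar x)\to -\sigma_\r^2$ and $\frac{1}{T}\sum_t(\bar x_t-\bar y_t)(\bar x_t-\bar x)\to 1/N$. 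Slutsky's lemma then yields $\hat\sigma^2\to\sigma_\r^2$ in probability, and combining this with the first paragraph closes the proof. The main obstacle, which is really bookkeeping rather than difficulty, is justifying that the implicit-gradient formula used in the \shrreptile{} setting agrees structurally with the one derived in the proof of the previous proposition; this is painless precisely because $\hat\phi(\sigma^2)$ happens to be constant in $\sigma^2$ in this toy example, so the two estimators coincide and no additional chain-rule term needs to be tracked.
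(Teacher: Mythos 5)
Your proof is correct and follows essentially the same route as the paper's: reuse the closed form $\hat\phi(\sigma^2)=\bar x$ and $\hat\theta_t(\sigma^2)$ from \cref{eq:solutionMAP}, solve $\nabla_{\sigma^2}\hat\ell_\pll=0$ to get the same ratio of empirical averages as \cref{eq:root_sigma_interm} with $\hat\phi=\bar x$ substituted, and conclude by the law of large numbers applied to \cref{eq:jointnormal} together with Slutsky's lemma (your limits $-\sigma_\r^2$ and $1/N$ are correct, and your expression matches the paper's after multiplying through by $-1$). Your extra remark that ignoring $\nabla_{\sigma^2}\hat\phi$ is exact here because $\bar x$ does not depend on $\sigma^2$ is a nice observation the paper leaves implicit, but it does not change the argument.
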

\begin{proof}
From \cref{eq:valid_gd_v2}, we follow the same derivation as \cref{eq:root_sigma_interm} and get the root of the gradient for $\sigma^2$ as a function of $\hat\metav$:
\eq{
\hat\sigma^2(\hat\metav)=\frac{\frac{1}{T}\sum_t (\bar{x}_t - \hat\metav)(\bar{y}_t-\hat\metav)}{\frac{N}{T} \sum_t (\bar{x}_t - \bar{y}_t)(\bar{x}_t - \hat\metav)}.
}
By plugging \cref{eq:solutionMAP}, it follows from the joint distribution of $(\bar{x}_t, \bar{y}_t)$ (Eq.~\ref{eq:jointnormal}), the law of large numbers and Slutsky’s lemma that $\hat\sigma^2 \rightarrow \sigma^2_\r$ in probability as $T \rightarrow \infty$.
\end{proof}

\subsubsection{MAP estimate of \texorpdfstring{$\sigma^2$}{sigma2}}
\label{sec:map_estimate_sigma}
From \cref{lemma:jointmaximizationisbad}, we know that maximizing $\ell_\pos$ (Eq.~\ref{eq:norm_pos}) w.r.t. $(\theta_t, \phi, \sigma^2)$ is bound to fail. Here we show the specific $\sigma^2$ estimate one would obtain by following gradient descent on $\ell_\pos$ in the running example.

Let $S$ denote the sample variance of $\bar{x}_t$ across tasks, that is
\eq{
S=\frac{\sum_{t=1}^{T} (\bar{x}_t-\bar{x})^2}{T} \,.
\label{eq:samplevariance}
}
We can easily establish from \cref{eq:jointnormal} that
\eq{
S\sim \frac{\sigma_\r^2 + \frac{1}{N}}{T}\chi_{T-1}^2 \,,
}
where $\chi_{T-1}^2$ is the standard Chi-squared random variable with $T-1$ degrees of freedom. Although $\hat\phi(\sigma^2)$ is consistent whenever $\sigma^2>0$, the following proposition shows that maximizing $\ell_\pos(\hat\theta_{1:T}(\sigma^2), \hat\phi(\sigma^2), \sigma^2)$ w.r.t. $\sigma^2$ remains problematic. 

\begin{proposition}[Estimation of $\sigma$ by gradient descent]
\label{prop:map_sigma}
Minimizing the function $\sigma^2 \mapsto \ell_\pos(\hat\theta_{1:T}(\sigma^2), \hat\phi(\sigma^2), \sigma^2)$ by gradient descent will diverge at $\sigma \rightarrow 0^{+}$ if either of the following two conditions is satisfied
\begin{enumerate}
    \item $S < \frac{4}{N}$,
    \item $\sigma^2$ is initialized in
$\left(0, \frac{1}{2}\left(S - \frac{2}{N} - \sqrt{S (S - \frac{4}{N})} \right)\right)$.
\end{enumerate}
Otherwise, it converges to a local minimum
$
\hat\sigma^2 = \frac{1}{2}\left(S - \frac{2}{N} + \sqrt{S (S - \frac{4}{N})} \right)
$.
\end{proposition}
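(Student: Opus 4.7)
\medskip

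\noindent\textbf{Proof plan.} The plan is to reduce the problem to a one-dimensional analysis by computing the profile objective
\[
g(\sigma^2) \;:=\; \ell_{\pos}\bigl(\hat\theta_{1:T}(\sigma^2),\,\hat\phi(\sigma^2),\,\sigma^2\bigr),
\]
and then classify gradient-descent behaviour purely from the sign of $g'(\sigma^2)$. Using the closed-form MAP estimates from \cref{prop:map_phi}, namely $\hat\phi(\sigma^2)=\bar x$ and $\hat\theta_t(\sigma^2)=(\bar x_t+\bar x/(N\sigma^2))/(1+1/(N\sigma^2))$, one directly checks that $\hat\theta_t-\hat\phi = N\sigma^2(\bar x_t-\bar x)/(N\sigma^2+1)$ and $\bar x_t-\hat\theta_t = (\bar x_t-\bar x)/(N\sigma^2+1)$. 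Substituting into \cref{eq:norm_pos}, the two quadratic-in-$\theta$ contributions collapse neatly and, after absorbing the data-only term $\tfrac{1}{2}\sum_{t,n}(x_{t,n}-\bar x_t)^2$ into a constant $C$, one obtains the compact form
\[
g(\sigma^2) \;=\; \tfrac{T}{2}\log\sigma^2 \;+\; \frac{N\,T\,S}{2(N\sigma^2+1)} \;+\; C,
\]
with $S$ the sample variance defined in \cref{eq:samplevariance}. The simplification here is the only nontrivial calculation, and it is routine once the two displayed identities for $\hat\theta_t-\hat\phi$ and $\bar x_t-\hat\theta_t$ are in hand.

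\noindent Next, differentiating yields
\[
g'(\sigma^2) \;=\; \frac{T}{2\sigma^2(N\sigma^2+1)^2}\,\Bigl[(N\sigma^2+1)^2 - N^2 S\,\sigma^2\Bigr].
\]
The sign of $g'$ therefore coincides with the sign of the quadratic $h(u):=(u+1)^2-NS\,u$ in the variable $u=N\sigma^2$. Its discriminant is $NS(NS-4)$, so $h$ has no real roots when $S<4/N$ and two real roots when $S\ge 4/N$; in the latter case, both roots are positive since $h(0)=1>0$ and the sum of roots $NS-2$ is positive. Writing these roots back in terms of $\sigma^2$ via the quadratic formula recovers exactly the two candidate values
\[
\sigma^2_{\pm} \;=\; \tfrac{1}{2}\Bigl(S-\tfrac{2}{N}\pm\sqrt{S(S-4/N)}\,\Bigr),
\]
matching the proposition's lower endpoint and the claimed $\hat\sigma^2$.

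\noindent The final step is a standard phase-line analysis. Since $h(0)=1>0$ and $h$ is a quadratic with positive leading coefficient, in Case 1 ($S<4/N$) we have $g'(\sigma^2)>0$ on all of $(0,\infty)$, so gradient descent on $g$ strictly decreases $\sigma^2$ at every step; combined with $g(\sigma^2)\to-\infty$ as $\sigma^2\to 0^+$ (from the $\tfrac T2\log\sigma^2$ term), the iterates are driven to $0^+$ and $g$ diverges. In the regime $S\ge 4/N$, $g'$ is positive on $(0,\sigma^2_{-})$, negative on $(\sigma^2_{-},\sigma^2_{+})$, and positive on $(\sigma^2_{+},\infty)$, so $\sigma^2_{-}$ is a local maximum of $g$ and $\sigma^2_{+}$ a local minimum. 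Consequently, initialisations in $(0,\sigma^2_{-})$ — i.e.\ the interval appearing in condition~2 — send $\sigma^2\to 0^+$ and $g\to-\infty$, while all other positive initialisations converge monotonically to $\sigma^2_{+}=\hat\sigma^2$.

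\noindent The main obstacle, if any, is the algebraic collapse in step one; everything afterwards is essentially quadratic-formula bookkeeping and a sign table. A minor point to state carefully is what is meant by ``gradient descent diverges'': the iterates $\{\sigma^2_k\}$ form a monotone sequence with $\sigma^2_k\downarrow 0$, along which $g(\sigma^2_k)\to-\infty$, so no finite limit exists for the objective value even though the parameter itself has limit $0$.
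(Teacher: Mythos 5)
Your proof is correct and follows essentially the same route as the paper's: both reduce to the sign of the $\sigma^2$-derivative of the profiled objective and to the roots of the same quadratic (your $h(u)=u^2+(2-NS)u+1$ with $u=N\sigma^2$ is the paper's $\hat\sigma^4+(2/N-S)\hat\sigma^2+1/N^2=0$ rescaled by $N^2$). Your explicit closed form $g(\sigma^2)=\tfrac{T}{2}\log\sigma^2+\tfrac{NTS}{2(N\sigma^2+1)}+C$ merely makes the sign/phase-line analysis a little more self-contained than the paper's fixed-point rearrangement of the stationarity condition.
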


\begin{corollary}
As the number of training tasks $T \rightarrow \infty$, condition 1 is equivalent to
$$\sigma_\r^2 < \frac{3}{N}$$
while the upper endpoint of the interval in condition 2 becomes 
$$
\frac{1}{2}\left(\sigma_\r^2 - \frac{1}{N} - \sqrt{(\sigma_\r^2 + \frac{1}{N}) (\sigma_\r^2 - \frac{3}{N})} \right)\,,
$$
beyond which the $\sigma^2$ estimate converges to
\eq{
\hat\sigma^2 = \frac{1}{2}\left(\sigma_\r^2 - \frac{1}{N} + \sqrt{(\sigma_\r^2 + \frac{1}{N}) (\sigma_\r^2 - \frac{3}{N})} \right)\,.
}
\label{cor:map_sigma}
\end{corollary}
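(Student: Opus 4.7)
The plan is to pass to the $T\to\infty$ limit in Proposition~3 using the law of large numbers together with the continuous mapping theorem. The entire dependence of the conditions in Proposition~3 on the data enters through the sample variance $S=\frac{1}{T}\sum_{t=1}^{T}(\bar{x}_t-\bar{x})^2$ defined in \cref{eq:samplevariance}, so the task reduces to understanding the asymptotic behavior of $S$ and then substituting its limit into the three expressions appearing in Proposition~3.

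First I would establish that $S\to\sigma_\r^2+\tfrac{1}{N}$ in probability as $T\to\infty$. This is immediate from the scaled chi-squared representation $S\sim \tfrac{\sigma_\r^2+1/N}{T}\chi_{T-1}^2$ already given in the excerpt: one has $\E[S]=\tfrac{T-1}{T}(\sigma_\r^2+\tfrac{1}{N})\to\sigma_\r^2+\tfrac{1}{N}$ and $\Var[S]=2(T-1)(\sigma_\r^2+\tfrac{1}{N})^2/T^2\to 0$, so convergence in probability follows from Chebyshev's inequality. Equivalently, since $\bar{x}_t\sim\gauss(\phi_\r,\sigma_\r^2+\tfrac{1}{N})$ i.i.d.\ by \cref{eq:jointnormal}, this is just the standard law of large numbers applied to the sample variance of $\bar{x}_t$.

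Next I would substitute the limit into each finite-$T$ formula of Proposition~3 via the continuous mapping theorem. Condition~1 ($S<\tfrac{4}{N}$) becomes $\sigma_\r^2+\tfrac{1}{N}<\tfrac{4}{N}$, i.e.\ $\sigma_\r^2<\tfrac{3}{N}$. For condition~2 and the local minimum formula, note that the square root $\sqrt{S(S-4/N)}$ is real precisely when $S\ge \tfrac{4}{N}$, which asymptotically corresponds to $\sigma_\r^2\ge\tfrac{3}{N}$---exactly the complementary regime where the local minimum exists. In this regime, using $S\to\sigma_\r^2+\tfrac{1}{N}$, one computes
\[
S-\tfrac{2}{N}\to \sigma_\r^2-\tfrac{1}{N},\qquad S\bigl(S-\tfrac{4}{N}\bigr)\to \bigl(\sigma_\r^2+\tfrac{1}{N}\bigr)\bigl(\sigma_\r^2-\tfrac{3}{N}\bigr),
\]
and inserting these into the expressions $\tfrac{1}{2}\bigl(S-\tfrac{2}{N}\mp\sqrt{S(S-4/N)}\bigr)$ recovers both the stated upper endpoint of the divergence interval and the stated limiting value of $\hat\sigma^2$.

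There is no genuine obstacle once Proposition~3 is granted; the corollary is essentially a continuity argument. The only mild subtlety is ensuring the continuous mapping theorem applies: the maps $s\mapsto\tfrac{1}{2}\bigl(s-\tfrac{2}{N}\pm\sqrt{s(s-4/N)}\bigr)$ are continuous at $s=\sigma_\r^2+\tfrac{1}{N}$ throughout the regime $\sigma_\r^2\ge\tfrac{3}{N}$, which permits the clean in-probability transfer from $S$ to the endpoint and to $\hat\sigma^2$.
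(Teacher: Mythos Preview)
Your proposal is correct and follows essentially the same approach as the paper: both arguments reduce to showing $S\to\sigma_\r^2+\tfrac{1}{N}$ as $T\to\infty$ and then substituting this limit into the finite-$T$ expressions of Proposition~3. Your version is in fact more careful than the paper's, which simply asserts the convergence of $S$ from \cref{eq:jointnormal} and substitutes without explicitly invoking the continuous mapping theorem or checking continuity at the limit point.
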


\begin{proof}[Proofs of \cref{prop:map_sigma} and \cref{cor:map_sigma}]
It follows from \cref{eq:solutionMAP} that 
\eq{
\hat\theta_t(\sigma)-\hat\phi(\sigma) &= \frac{\bar{x}_t-\bar{x}}{1+\frac{1}{N \sigma^2}}.
\label{eq:theta_minus_phi}
}

By solving $\nabla_{\sigma^2} \ell_\pos  = 0$, we obtain  
\begin{align}
\hat\sigma^2 = \frac{\sum_{t=1}^{T} (\theta_t-\phi)^2}{T}. \label{eq:sig2hat}
\end{align}
Plugging \cref{eq:theta_minus_phi} into this expression yields
\eq{
\hat\sigma^2 =\frac{\sum_{t=1}^{T} (\bar{x}_t-\bar{x})^2}{T(1+\frac{1}{N \hat\sigma^2})^2}=\frac{S}{(1+\frac{1}{N \hat\sigma^2})^2}. \label{eq:sig2hat_2}
}
Hence, by rearranging this expression, we obtain the following quadratic equation for $\hat\sigma^2$ 
\eq{
\hat\sigma^4+(2/N-S) \hat\sigma^2+1/N^2 =0. \label{eq:sig2_root_eq}
}

Positive roots of \cref{eq:sig2_root_eq} exist if and only if
\eq{
S \geq \frac{4}{N}. \label{eq:root_cond_in_proof}
}

When condition (\ref{eq:root_cond_in_proof}) does not hold, no stationary point exists and gradient descent from any initialization will diverge toward $\hat\sigma^2 \rightarrow 0^{+}$ as it can be checked that $\nabla_{\sigma^2} \ell_\pos < 0$. \cref{fig:normal_example_ll_2,fig:normal_example_grad_2} illustrate $\ell_\pos$ and $\nabla_{\sigma^2} \ell_\pos$ as a function of $\sigma^2$ in this case.

When the condition above is satisfied, there exist two (or one when $S=4/N$, an event of zero probability) roots at:
\eq{
\sigma_{\mathrm{root}}^2 &= \frac{1}{2}\left(S - \frac{2}{N} \pm \sqrt{S (S - \frac{4}{N})} \right).
\label{upperendpoint}}

By checking the sign of the gradient $\nabla_{\sigma^2} \ell_\pos$ and plugging it in to \cref{eq:theta_minus_phi}, we find that the left root is a local maximum and the right root is a local minimum. So if one follows gradient descent to estimate $\sigma^2$, it will converge towards $0^{+}$ when $\sigma^2$ is initialized below the left root and to the second root otherwise.
\cref{fig:normal_example_ll_1,fig:normal_example_grad_1} illustrate the function of $\ell_\pos$ and its gradient as a function of $\sigma^2$ when $\phi$ and $\theta_t$ are at their stationary point and condition 1 is satisfied.

To prove the corollary, we note that it follows from \cref{eq:jointnormal} that $S \rightarrow \sigma_\r^2 + \frac{1}{N}$ as $T \rightarrow \infty$. Hence the condition \cref{eq:root_cond_in_proof} approaches 
\eq{
\sigma_\r^2 \geq \frac{3}{N}.
\label{eq:root_cond_inf}
}
Similarly, when $T \rightarrow \infty$, \cref{eq:root_cond_inf} becomes
\eq{
\sigma_{\mathrm{root}}^2 &= \frac{1}{2}\left(\sigma_\r^2 - \frac{1}{N} \pm \sqrt{(\sigma_\r^2 + \frac{1}{N}) (\sigma_\r^2 - \frac{3}{N})} \right).
}

\end{proof}

\begin{figure}
    \centering
    \begin{subfigure}[b]{0.4\textwidth}
        \includegraphics[width=\textwidth]{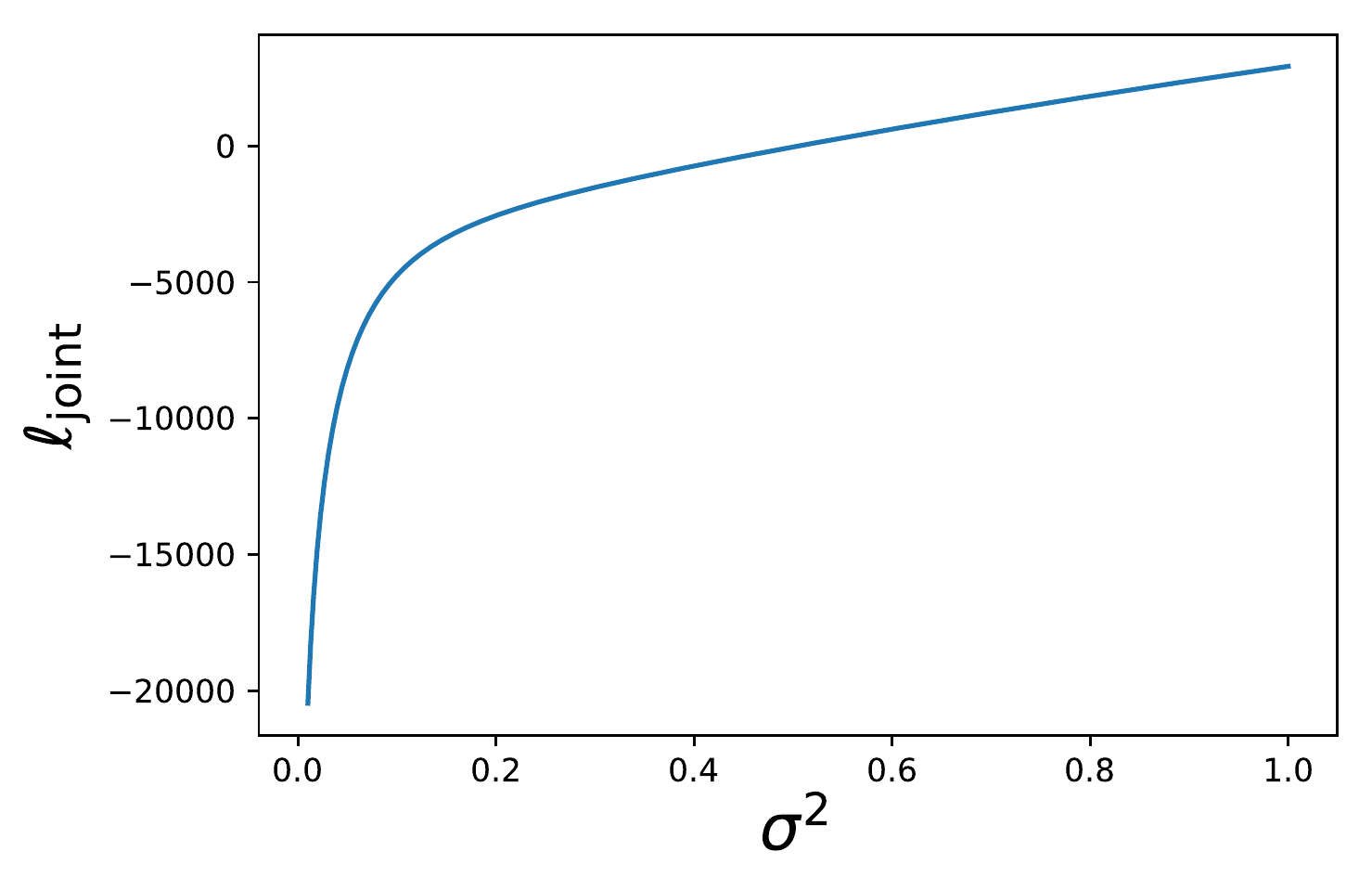}
        \caption{$\ell_\pos(\sigma^2)$ when stationary points do not exist.}
        \label{fig:normal_example_ll_2}
    \end{subfigure}
    ~
    \begin{subfigure}[b]{0.4\textwidth}
        \includegraphics[width=\textwidth]{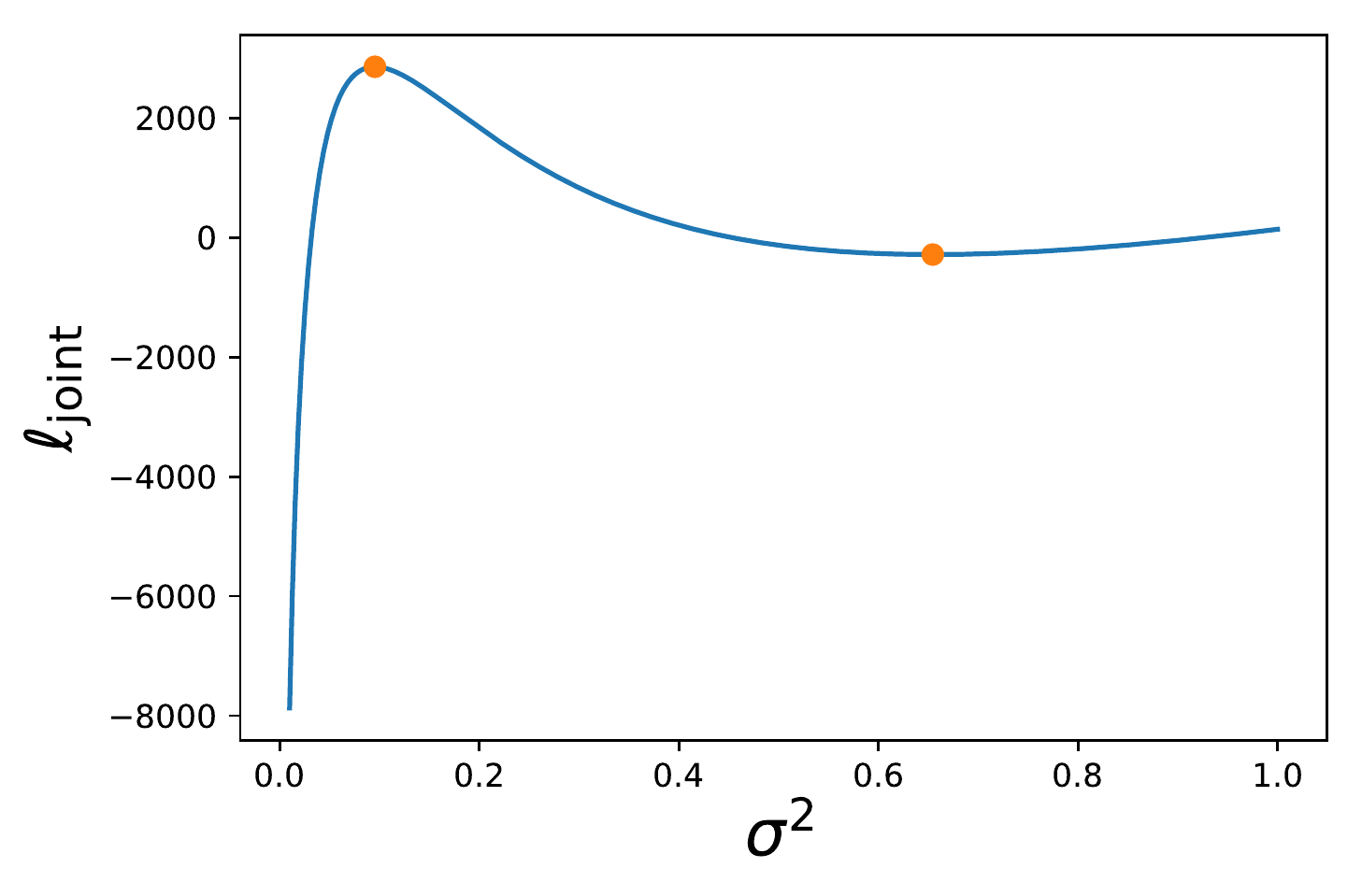}
        \caption{$\ell_\pos(\sigma^2)$ when stationary points exist.}
        \label{fig:normal_example_ll_1}
    \end{subfigure}

    \begin{subfigure}[b]{0.4\textwidth}
        \includegraphics[width=\textwidth]{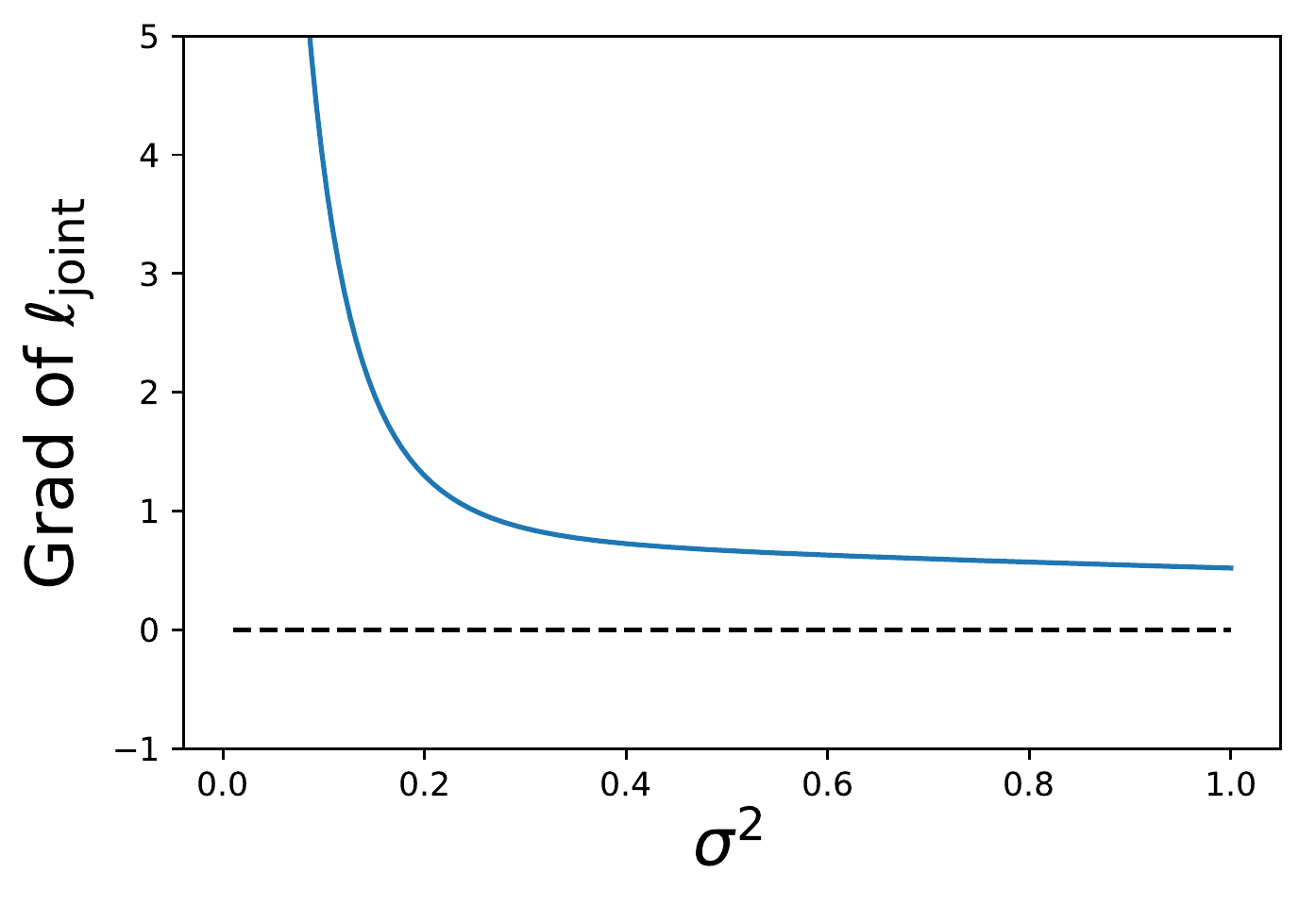}
        \caption{$\nabla_{\sigma^2} \ell_\pos$ when stationary points do not exist.}
        \label{fig:normal_example_grad_2}
    \end{subfigure}
    ~
    \begin{subfigure}[b]{0.4\textwidth}
        \includegraphics[width=\textwidth]{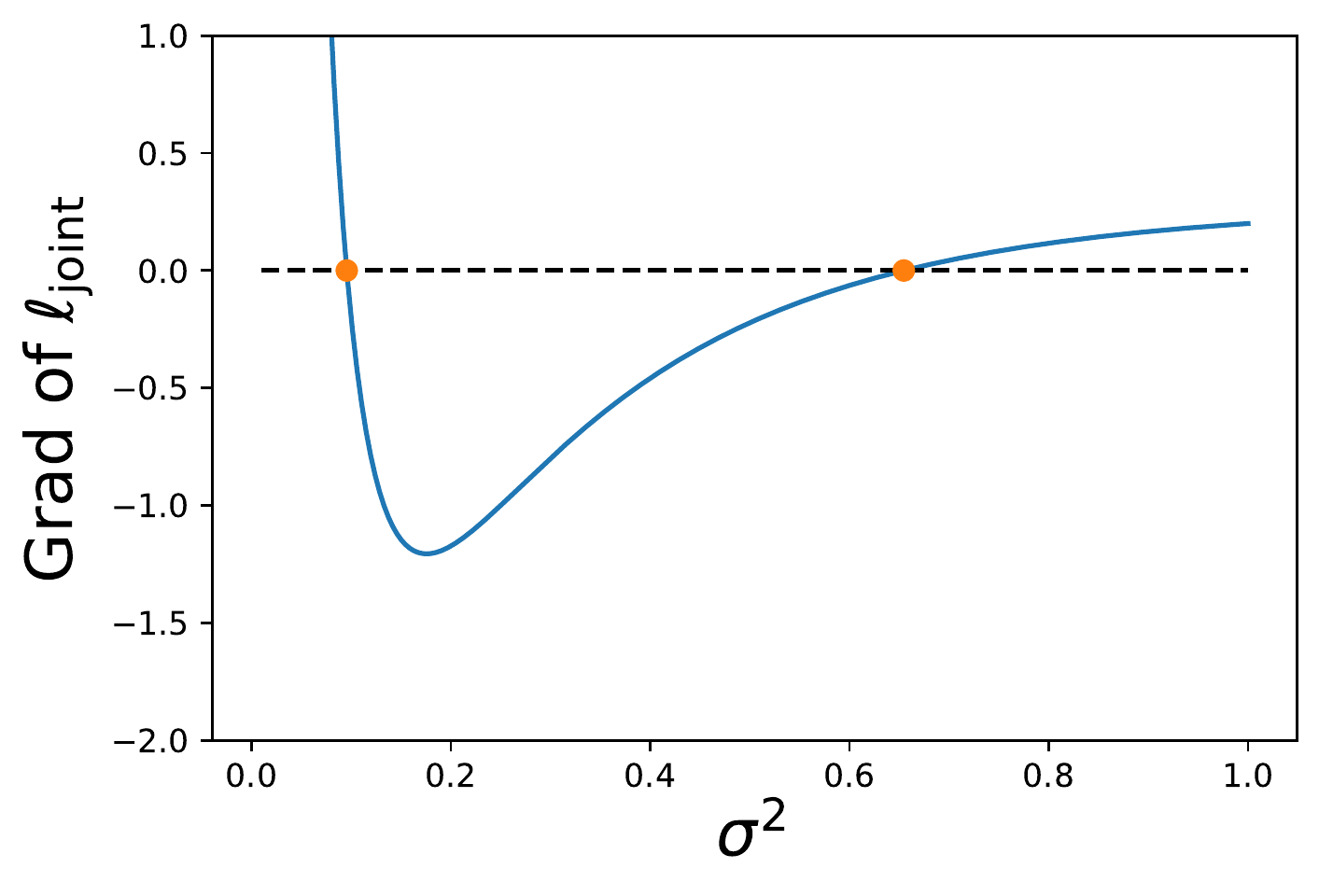}
        \caption{$\nabla_{\sigma^2} \ell_\pos$ when stationary points exist.}
        \label{fig:normal_example_grad_1}
    \end{subfigure}
    \caption{Example of $\ell_\pos(\sigma^2)$ up to a constant, and its gradient w.r.t.~$\sigma^2$. Orange dots denote stationary points.}\label{fig:normal_example_ll_and_grad}
\end{figure}

\section{Derivation of the approximate gradient of predictive log-likelihood in \texorpdfstring{\cref{sec:learning}}{Section 4}}
\label{sec:valid_gd_derivation}

\subsection{Implicit gradient of \texorpdfstring{\shrimaml{}}{Shrinkage-iMAML} in \texorpdfstring{\cref{eq:valid_gd_v1}}{Eq. (7)}}
\label{sec:valid_gd_derivation_v1}

\begin{lemma}(Implicit differentiation)
Let $\hat{\vy}(\vx)$ be the stationary point of function $f(\vx, \vy)$, i.e. $\nabla_{\vy} f(\vx, \vy)|_{\vy=\hat{\vy}(\vx)}=0~\forall \vx$, then the gradient of $\hat{\vy}$ w.r.t. $\vx$ can be computed as
\eq{
\nabla_\vx \hat{\vy}(\vx) = 
- (\nabla_{\vy \vy}^2 f )^{-1} 
\nabla_{\vy \vx}^2f.
}
\label{lem:stationary_derivative}
\end{lemma}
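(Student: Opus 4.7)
The plan is to apply a standard implicit function theorem argument. I would start from the defining stationarity condition
\[
g(\vx) \;:=\; \nabla_{\vy} f(\vx, \vy)\big|_{\vy=\hat{\vy}(\vx)} \;=\; \vzero \qquad \forall \vx,
\]
which holds identically in $\vx$. Since the right-hand side is the constant zero, differentiating both sides with respect to $\vx$ will yield an equation that I can then solve for $\nabla_\vx \hat{\vy}(\vx)$.

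Next, I would apply the multivariate chain rule to the left-hand side. Viewing $\nabla_{\vy} f$ as a function of two arguments $(\vx, \vy)$ and composing with $\vy = \hat{\vy}(\vx)$, the total derivative with respect to $\vx$ is
\[
\nabla^2_{\vy \vx} f \;+\; \bigl(\nabla^2_{\vy \vy} f\bigr)\, \nabla_\vx \hat{\vy}(\vx) \;=\; \vzero,
\]
where the Hessians are evaluated at $(\vx, \hat{\vy}(\vx))$. Rearranging and left-multiplying by the inverse of $\nabla^2_{\vy \vy} f$ (where invertibility is the key assumption to make explicit) gives the claimed formula
\[
\nabla_\vx \hat{\vy}(\vx) \;=\; -\bigl(\nabla^2_{\vy \vy} f\bigr)^{-1} \nabla^2_{\vy \vx} f.
\]

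The only genuine subtlety is justifying that $\hat{\vy}(\vx)$ is differentiable and that $\nabla^2_{\vy\vy}f$ is invertible at $(\vx, \hat{\vy}(\vx))$; both follow from the classical implicit function theorem under the standard assumption that $f$ is twice continuously differentiable and that the Hessian in $\vy$ is non-singular at the stationary point (which is the regularity condition the paper implicitly relies on throughout its MAP estimation of task parameters). I would note this assumption explicitly and then the derivation above is essentially one line of the chain rule.
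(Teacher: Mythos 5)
Your proof is correct and is exactly the standard chain-rule/implicit-function-theorem derivation that the paper relies on (the paper states this lemma without proof). Your explicit note that differentiability of $\hat{\vy}$ and invertibility of $\nabla^2_{\vy\vy}f$ must be assumed is a worthwhile addition rather than a deviation.
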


By applying the chain rule, the gradient of the approximate predictive log-likelihood $\ellval(\hat{\vtheta}_{t})$ (from \cref{eq:train-objective_v1}) w.r.t.\ the meta variables $\vPhi=(\vsigma^2, \vmetav)$ is given by
\eq{
\nabla_{\vPhi} \ellval(\hat{\vtheta}_{t}(\vPhi))
= \nabla_{\hat{\vtheta}_t} \ellval(\hat{\vtheta}_{t}) \nabla_{\vPhi}\hat{\vtheta}_{t}(\vPhi) \,.
\label{eq:chain_rule}
}

Applying \cref{lem:stationary_derivative} to the joint log-density on the training subset in \cref{eq:map_v1}, $\elltr(\vtaskv_t, \vPhi)$. We have
\be
\nabla_{\vPhi}\hat{\vtheta}_{t}(\vPhi) =
-\left(\nabla^2_{\vtaskv_t \vtaskv_t}\elltr\right)^{-1} \nabla^2_{\vtaskv_t \vPhi}\elltr.
\label{eqn:derivative_theta_hat_sigma}
\ee
Plug the equation above to \cref{eq:chain_rule} and we obtain the implicit gradient of $\ellval$ in \cref{eq:valid_gd_v1}.

\subsection{Equivalence between \texorpdfstring{\shrimaml{}}{Shrinkage-iMAML} and iMAML when \texorpdfstring{$\vsigma_m^2$}{module variance} is constant}
\label{sec:equiv_to_imaml}

When all modules share a constant variance, $\sigma_m^2 \equiv \sigma^2$, we expand the log-prior term for task parameters $\vtaskv_t$ in $\elltr$ (\ref{eq:map_v1}) and plug in the normal prior assumption as follows,
\eq{
\log p(\vtaskv_t | \vsigma^2, \vphi)
&=\sum_{m=1}^M \bigg(
-\frac{D_m}{2}\log (2\pi \sigma_m^2) - \frac{\|\vtaskv_{mt} - \vmetav_m\|^2}{2\sigma_m^2}\bigg) \nn\\
&= -\frac{D}{2}\log (2\pi \sigma^2) - \frac{\|\vtaskv_t - \vmetav\|^2}{2\sigma^2}.
}

By plugging the equation above to \cref{eq:valid_gd_v1}, we obtain the update for $\vmetav$ as
\eq{
\Delta_t^{\text{\shrimaml{}}}
&= \nabla_{\vtaskv_t} \ell_t^\val(\vtaskv_t) 
\left(\frac{1}{\sigma^2}\vI -\nabla^2_{\vtaskv_t \vtaskv_t}\log p(\D_t^\tr|\vtaskv_t) \right)^{-1} \frac{1}{\sigma^2} \vI \nn\\
&= \nabla_{\vtaskv_t} \ell_t^\val(\vtaskv_t) 
\left(\vI -\sigma^2 \nabla^2_{\vtaskv_t \vtaskv_t}\log p(\D_t^\tr|\vtaskv_t) \right)^{-1}.
}
This is equivalent to the update of iMAML by defining the regularization scale $\lambda=1/\sigma^2$ and plugging in the definition of $\elltr := -\log p(\D_t^\tr|\vtaskv_t)$ in \cref{sec:background}.

\subsection{Discussion on the alternative procedure for Bayesian parameter learning (\texorpdfstring{\cref{sec:alg_map}}{Section 4.2})}\label{sec_jointmap}

By plugging the MAP of $\vmetav$ (\cref{eq:log_p_train_joint}) into \cref{eq:log_p_sigma2_full} and scaling by $1/T$, we derive the approximate predictive log-likelihood as
\eq{
\hat{\ell}_\pll(\vsigma^2)
=
\frac{1}{T} \sum_{t=1}^T \ellval(\hat{\vtheta}_{t}(\vsigma^2)).
\label{eq:log_phat_sigma2}
}

It is a sensible strategy to estimate both the task parameters $\vtaskv_{1:T}$ and the prior center $\vmetav$ with MAP on the training joint log-density and estimate the prior variance $\vsigma^2$ on the predictive log-likelihood. If $\hat{\vmetav}(\vsigma^2)\rightarrow \bar{\vmetav}(\vsigma^2)$ as $T\rightarrow \infty$ we can think of both $\ellpll(\vsigma^2)$ and $\hat{\ell}_{\pll}(\vsigma^2)$ as approximations to  
\eq{\tilde{\ell}_{\pll}(\vsigma^2)=-\frac{1}{T}\sum_{t=1}^T \log p(\D_{t}^\val|\D_{t}^\tr,\bar{\vmetav}(\vsigma^2),\vsigma^2),
}
which, for $(\D_{t}^\tr,\D_{t}^\val)\overset{\textrm{i.i.d.}}{\sim} \nu$, converges by the law of large numbers as $T\rightarrow \infty$ towards 
\eq{\tilde{\ell}_{\pll}(\vsigma^2)\rightarrow -\mathbb{E}_{\nu(\D_{t}^\tr,\D_{t}^\val)}[\log p(\D_{t}^\val|\D_{t}^\tr,\bar{\vmetav}(\vsigma^2),\vsigma^2)].
}
Similarly to \cref{eq:reinterpretationKL}, it can be shown that minimizing the r.h.s. of \cref{eq:log_phat_sigma2} is  equivalent to minimizing the average KL 
\eq{
\mathbb{E}_{\nu(\D_{t}^\tr)}[\textrm{KL}(\nu(\D_{t}^\val|\D_{t}^\tr)||p\left(\D_{t}^\val | \D_{t}^\tr, \bar{\vmetav}(\vsigma^2),\vsigma^2\right))].
}

\subsection{Meta update of \texorpdfstring{\shrreptile{}}{Shrinkage-Reptile} in \texorpdfstring{\cref{eq:valid_gd_v2}}{Eq. (10)}}
\label{sec:valid_gd_derivation_v2}
The meta update for $\vmetav$ can then be obtained by differentiating (\ref{eq:log_p_train_joint}) with respect to $\vmetav$.

To derive the gradient of \cref{eq:log_phat_sigma2} with respect to $\vsigma$, notice that when $\vmetav$ is estimated as the MAP on the training subsets of all tasks, it becomes a function of $\vsigma^2$. Denote by $\ell^{\mathrm{joint}}(\vTheta, \vsigma^2)$ the objective in \cref{eq:log_p_train_joint} where $\vTheta = (\vtaskv_{1:T}, \vmetav)$ is the union of all task parameters $\vtaskv_t$ and the prior central $\vmetav$. It requires us to apply the implicit function theorem to $\ell^{\mathrm{joint}}(\vTheta, \vsigma^2)$ in order to compute the gradient of the approximate predictive log-likelihood w.r.t.\ $\vsigma^2$. However, the Hessian matrix $\nabla^2_{\vTheta,\vTheta}\ell^{\mathrm{joint}}$ has a size of $D(T+1) \times D(T+1)$ where $D$ is the size of a model parameter $\vtaskv_t$, which becomes too expensive to compute when $T$ is large.

Instead, we take an approximation and ignore the dependence of $\hat{\vmetav}$ on $\vsigma^2$. Then $\vmetav$ becomes a constant when computing $\nabla_{\vsigma^2} \hat{\vtaskv}_t(\vsigma^2)$, and the derivation in \cref{sec:valid_gd_derivation_v1} applies by replacing $\vPhi$ with $\vsigma^2$, giving the implicit gradient in \cref{eq:valid_gd_v2}.

\section{Synthetic Experiments to Compare MAML, Meta-SGD and Shrinkage}
\label{sec:synthetic_exps}

In this section, 
we demonstrate the difference in behavior between learning the learning rate per module and learning the shrinkage prior per module on two synthetic few-shot learning problems.

In the first experiment we show that when the number of task adaptation steps in meta-training is sufficiently large for task parameters to reach convergence, meta-learning the learning rate per module has a similar effect as shrinkage regularization when evaluated at the \textit{same} adaptation step in meta-testing; however, this does not generalize to other adaptation horizons. 
In the second experiment, when the required number of adaptation steps is longer than meta-training allows, the learned learning rate is determined by the local curvature of the task likelihood function whereas the learned shrinkage variance is determined by task similarity from the data generation prior. 
Grouping parameters with similar learned learning rates or variances then
induces different ``modules,'' which correspond to different aspects of
the meta-learning problem (namely, adaptation curvature vs. task similarity).

We compare the following three algorithms:
\begin{enumerate}
    \item vanilla MAML\citep{Finn2017}: does not have modular modeling; learns the  initialization and uses a single learning rate determined via hyper-parameter search.
    \item Meta-SGD\citep{li2017meta}: learns the initialization as well as a separate learning rate for each parameter.
    \item \shrreptile{}: learns the initialization and prior variance $\sigma_m^2$ for each module.
\end{enumerate}
We run each algorithm on
two few-shot learning problems, both of which 
have the same hierarchical normal data generation process:
\eq{
\vtheta_{m,t} &\sim \gauss(\vtheta_{m,t} | \vphi_m, \sigma_m^2), \nn\\
\vx_{t,n} &\sim \gauss(\vx_{t,n} | \vmu(\vtheta_t), \Xi) \nn\,,
}
for each latent variable dimension $m$, task $t$, and data point $n$.
The hyper-parameters $\vphi$ and $\vsigma^2$ are shared across all tasks but unknown, and each parameter dimension $\theta_m$ has different prior variance, $\sigma_m^2$. For simplicity, we let every parameter dimension $m$ correspond to one module for Meta-SGD and \shrreptile{}.
The $n$-th observation $\vx_{t,n}$ for task $t$ is sampled from a Gaussian distribution. The mean is a known function of task parameter $\vtheta$. The observation noise variance $\Xi = \text{diag}(\vxi^2) = \text{diag}(\xi_1^2, \dots, \xi_D^2)$ is a fixed and known diagonal matrix. The difference between the two problems is that $\vmu$ is a linear function of $\vtheta_t$ in the first problem and non-linear in the second.

The task is few-shot density estimation, that is, to estimate the parameters $\vtheta_{\tilde{t}}$ of a new task 
$\T_{\tilde{t}}$ given a few observations $\{\vx_{\tilde{t}, n}\}$, where
$N_t^\tr = N_t^\val$ for all tasks.
To understand the behavior of different algorithms under different task loss landscapes we examine different transformation functions $\vmu$.

Note that the data generation process matches the Bayesian hierarchical model of shrinkage and thus \shrreptile{} may have an advantage in terms of predictive performance. However, the main purpose of this section is to demonstrate 
the \textit{behavior} of these approaches and not to focus on which 
performs better in this simple task.

For each method, we use gradient descent for task adaptation to optimize the training loss (negative log-likelihood), and then evaluate the generalization loss on holdout tasks. MAML meta-learns the initialization of 
\textsc{TaskAdapt}, $\vmetav$, while Meta-SGD meta-learns both $\vmetav$ and a per-parameter learning rate $\alpha_m$, and \shrreptile{} meta-learns the shrinkage prior mean $\vmetav$ and per-parameter variance $\vsigma_m^2$. 
Hyperparameters for each algorithm (i.e., learning rates, number of adaptation steps, and initial values of $\vsigma$) were chosen by extensive random search.
We chose the values that minimized the generalization loss in meta-test.
Because the model is small, we are able to run MAML and Meta-SGD for up to 200 task adaptation steps.

\subsection{Experiment 1: Linear transformation}

\begin{figure}[tbh]
    \centering
    \begin{subfigure}{0.48\textwidth}
        \includegraphics[width=\textwidth]{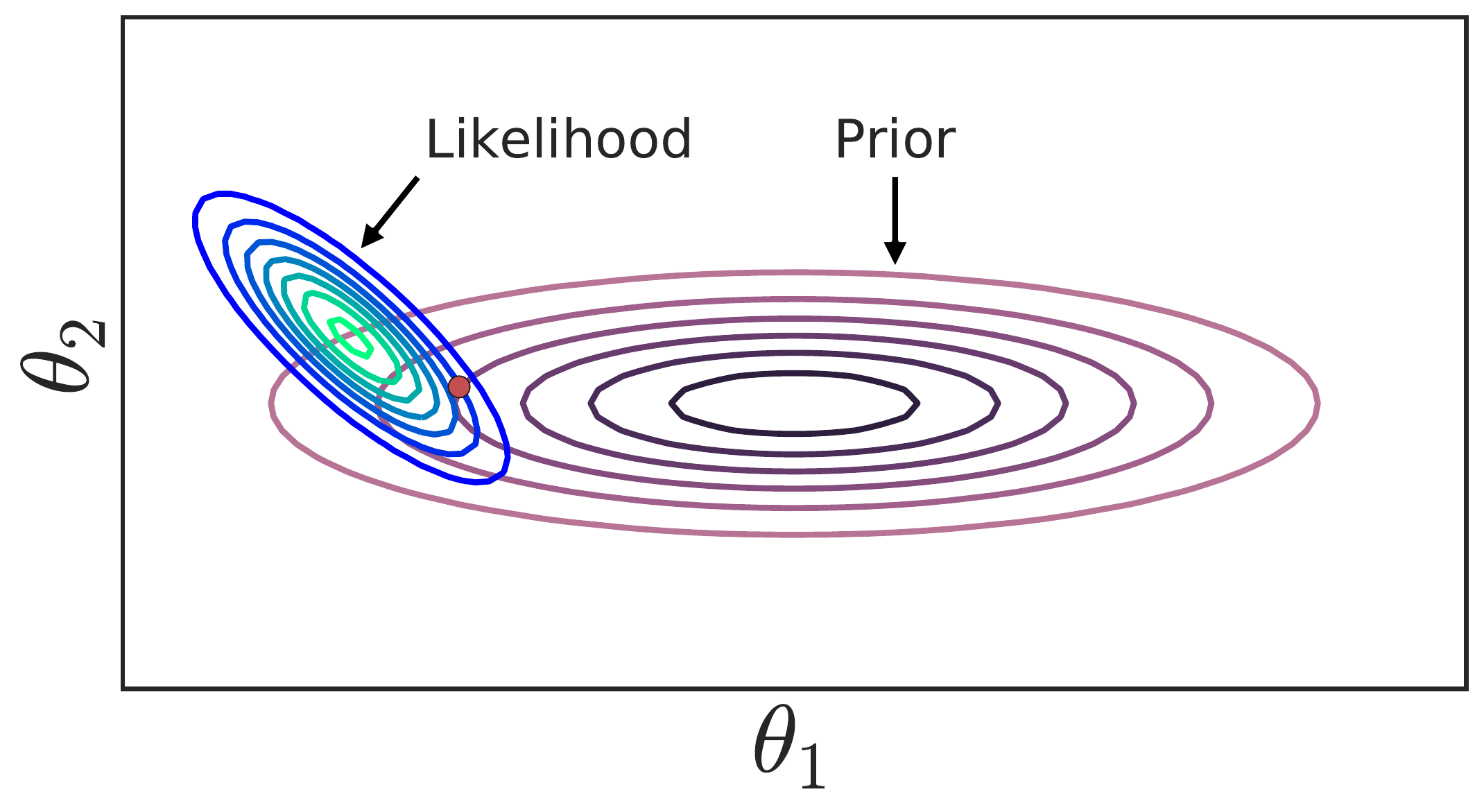}
            \caption{Illustration of the hierarchical normal distribution with linear $\mu$.}
    \label{fig:syn_exp_1_theta}
    \end{subfigure}
        \hspace{0.02\textwidth}
      \begin{subfigure}{0.48\textwidth}
        \includegraphics[width=\textwidth]{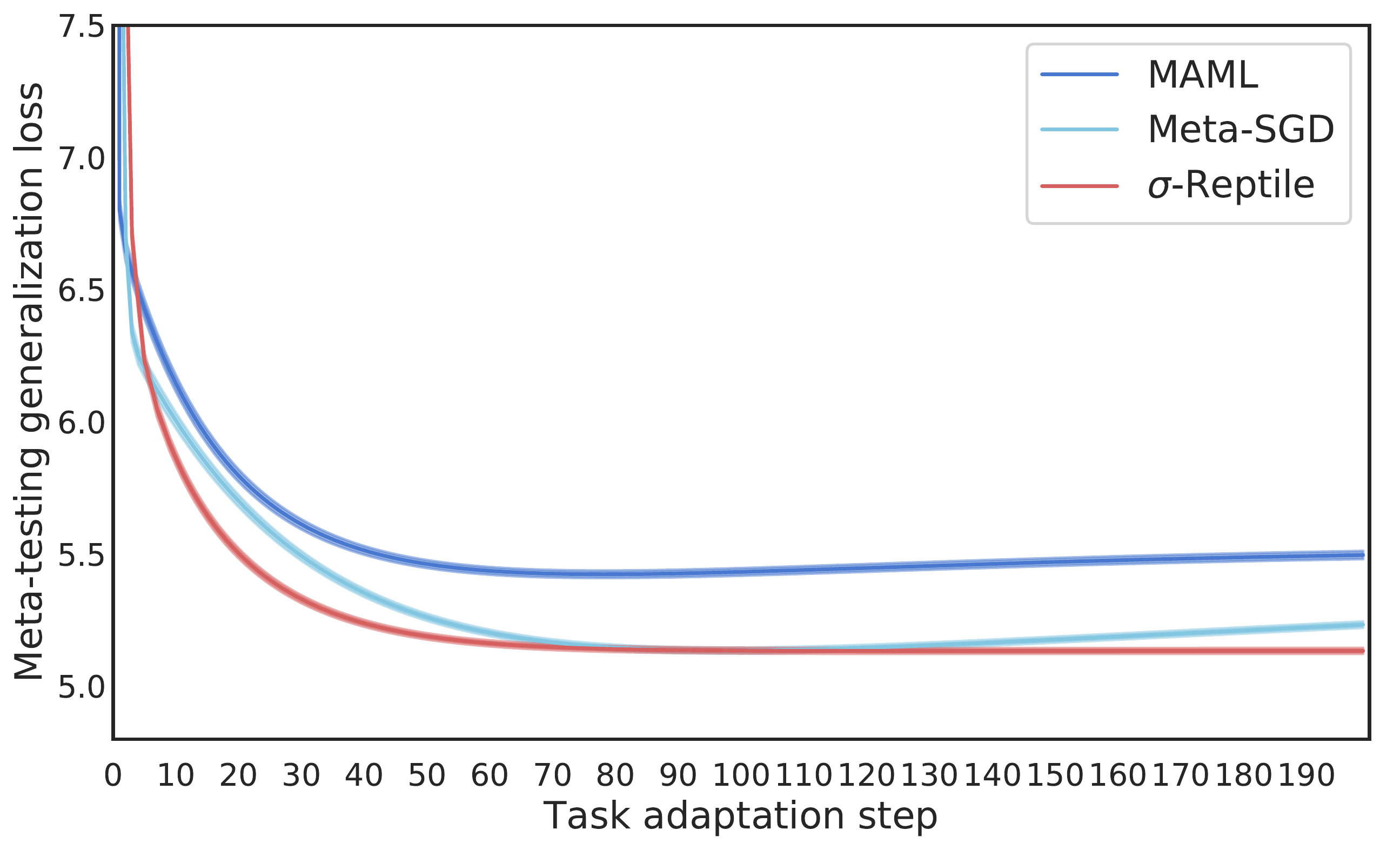}
    \vspace{-1em}
    \caption{Generalization loss in meta-testing.}
    \label{fig:syn_exp_1_loss}
    \end{subfigure}
\caption{Experiment 1: Linear transform.}
\label{fig:syn_exp_1}
\end{figure}

\begin{figure}[tbh]
    \centering
    \includegraphics[width=\textwidth]{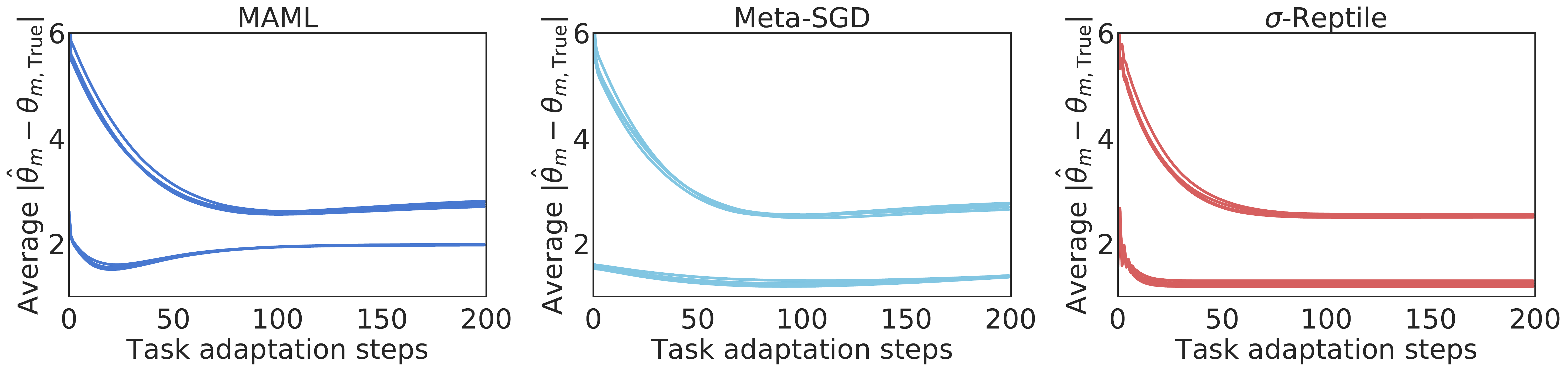}
    \caption{Mean absolute error of the estimate of each parameter as a function of task adaptation step.}
    \label{fig:synthetic_exp1_MAE_per_module}
\end{figure}

We begin with a simple model -- a joint normal posterior distribution 
over $\vtheta_t$ with parameters
\eq{
M=&8, \nonumber \\
D=&9, \nonumber \\
\vphi =& \vone_M, \nonumber \\
\vsigma =& [8,8,8,8,2,2,2,2],  \nonumber \\
\vxi =& [8,8,8,8,5,5,5,5,1], \nonumber 
}
and transformation
\eq{
\vmu(\vtheta_t) = [ \vI_M,  \nicefrac{\vone_M}{\sqrt{M}} ]^\top \vtheta_t. \nonumber
} 
The mean of the observations is thus $\vtheta$ in the first $M$ dimensions
and $\tfrac{1}{\sqrt{M}} \sum_m \theta_m$ in the final dimension.
To make each task nontrivial, we let $\vxi$ be small in the final dimension 
(i.e., $\xi_M$ is small) so that
the posterior of $\vtheta$ is restricted to a small subspace near the
$\sum_m\theta_m = \frac{\sqrt{M}}{N_t^\tr} \sum_n x_{t,n}$ hyperplane.
Gradient descent thus converges slowly regardless of the number of observations.
\cref{fig:syn_exp_1_theta} shows an example of this model for the first two dimensions. 

Clearly, there are two distinct modules, $\vtheta_{1:4}$ and $\vtheta_{5:8}$ but,
in this experiment,
we do not give the module structure to the algorithms and instead
treat each dimension as a separate module.
This allows us to evaluate how well the algorithms can identify the 
module structure from data.

Note that for every task the loss function is quadratic and its Hessian matrix is
constant in the parameter space. 
It is therefore an ideal situation for Meta-SGD to learn an
optimal preconditioning matrix (or learning rate per dimension).

\cref{fig:syn_exp_1_loss} shows the generalization loss on meta-testing tasks.
With the small number of observations for each task, 
the main challenge in this task is overfitting.
Meta-SGD and \shrreptile{} obtain the same minimum generalization loss, and both 
are better than the non-modular MAML algorithm. 
Importantly, Meta-SGD reaches the minimum loss at step 95, which is the number
of steps in meta-training, and then begins to overfit. In contrast, \shrreptile{}
does not overfit due to its learned regularization.

\cref{fig:synthetic_exp1_MAE_per_module} further explains the different behavior of
the three algorithms. The mean absolute error (MAE) for each of the 8 parameter dimensions is
shown as a function of task adaptation step. MAML shares a single learning rate for all
parameters, and thus begins to overfit in different dimensions at different steps, resulting in worse performance. 
Meta-SGD is able to learn two groups of learning rates, one per each ground-truth
module. It learns to coordinate the two learning rates so that they reach the lowest error
at the same step, after which it starts to overfit. The learning rate is limited by the curvature enforced by the last observation dimension.
\shrreptile{} shares a single learning rate so the error of every dimension drops at about the same speed initially and each dimension reaches its minimum at different steps. It learns two groups of variances so that all parameters are properly regularized and maintain their error once converged, instead of overfitting. 

\subsection{Experiment 2: Nonlinear transform.}

\begin{figure}[tbh]
    \centering
    \begin{subfigure}{0.48\textwidth}
        \includegraphics[width=\textwidth]{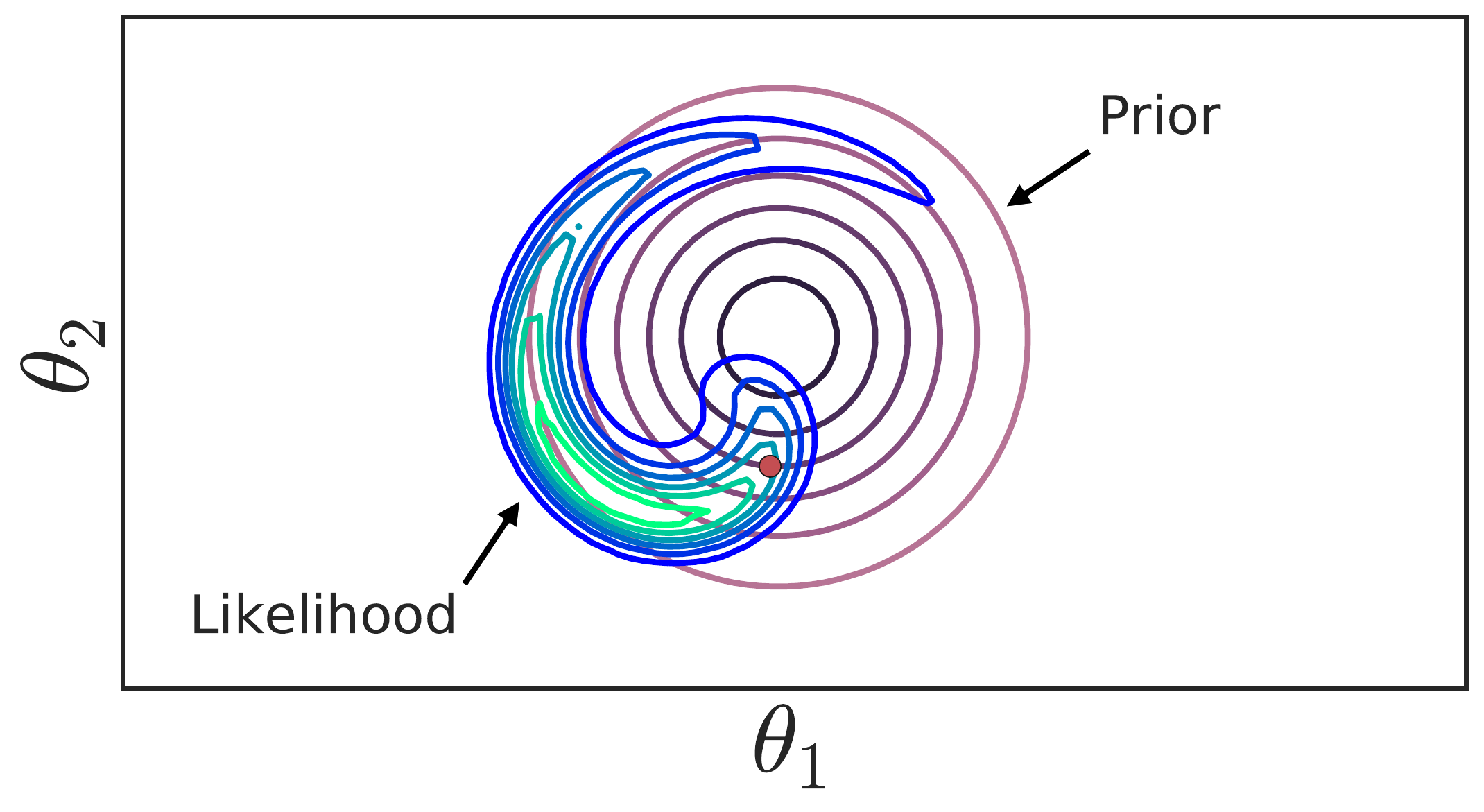}
            \caption{Illustration of the hierarchical normal distribution with non-linear $\mu$.}
    \label{fig:syn_exp_2_theta}
    \end{subfigure}
        \hspace{0.02\textwidth}
      \begin{subfigure}{0.48\textwidth}
        \includegraphics[width=\textwidth]{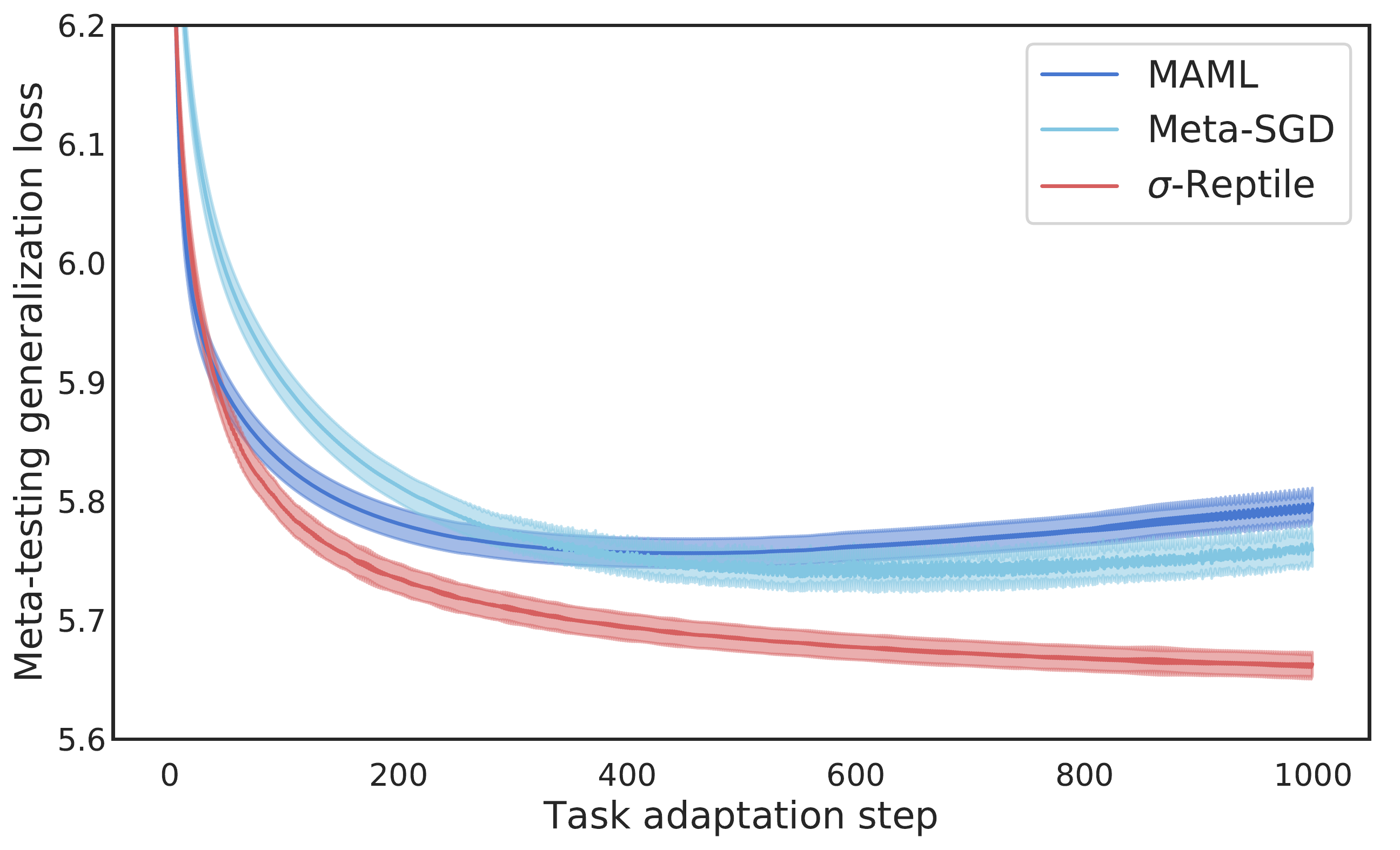}
    \vspace{-1em}
    \caption{Generalization loss in meta-testing.}
    \label{fig:syn_exp_2_loss}
    \end{subfigure}
    \caption{Experiment 2: Nonlinear spiral transform.}
\label{fig:syn_exp_2}
\end{figure}

In the second experiment, we explore a more challenging optimization scenario
where the mean is a nonlinear transformation of the task parameters.
Specifically,
\eq{
M=& 10, \nonumber \\
D=& 10, \nonumber \\
\vsigma =& [4,4,4,4,4,4,4,4,8,8],  \nonumber \\
\vphi =& 2 \cdot \vone_M, \nonumber \\
\vxi =& 10\cdot\vone_D. \nonumber 
}
The true modules are $\vtheta_{1:8}$ and $\vtheta_{9:10}$.
The transformation $\vmu_t(\vtheta_t)$ is a ``swirl'' effect 
that rotates non-overlapping pairs of consecutive parameters
with an angle proportional to their $L_2$ distance from the origin.
Specifically, each consecutive non-overlapping pair $(\mu_{t,d}, \mu_{t, d+1})$ is defined as
\eq{
\begin{bmatrix}
\mu_{t,d}\\
\mu_{t,d+1}
\end{bmatrix} &= \mathrm{Rot}\left(\omega \sqrt{\theta_{t,d}^2 + \theta_{t,d+1}^2}\right)
\cdot
\begin{bmatrix}
\theta_{t,d}\\
\theta_{t,d+1}
\end{bmatrix}, \quad \text{for~~} d=1, 3, ..., M-1, \nonumber
}
where
\eq{
\mathrm{Rot}(\varphi) = \begin{bmatrix}
\cos \varphi & -\sin \varphi\\
\sin \varphi & \cos \varphi\\
\end{bmatrix},
}
and $\omega = \nicefrac{\pi}{5}$ is the angular velocity of the rotation. 
This is a nonlinear volume-preserving mapping that forms a spiral in the 
observation space. \cref{fig:syn_exp_2_theta} shows an example of the prior
and likelihood function in $2$-dimensions of the parameter space.

Compared to the previous example, this highly nonlinear loss surface with
changing curvature is more realistic. First-order
optimizers are constrained by the narrow valley formed
by the ``swirly'' likelihood function, and thus all algorithms require hundreds of 
adaptation steps to minimize the loss.

In this case, the best per-parameter learning rate learned by Meta-SGD
is restricted by the highest curvature in the \textbf{likelihood} function along the
optimization trajectory.
In contrast, the optimal per-parameter variance estimated by \shrreptile{} depends
on the \textbf{prior} variance $\sigma_m^2$ in the data generating process, regardless
of the value of $\omega$, given that optimization eventually converges. 

As a consequence, Meta-SGD and \shrreptile{} exhibit very different behaviors in their
predictive performance in \cref{fig:syn_exp_2_loss}. Meta-SGD overfits after
about 700 steps, while \shrreptile{} keeps improving after 1000 steps.

Also, because MAML and Meta-SGD require backpropagation through the adaptation process,
when the number of adaptations steps is higher than 100, we notice that meta-training
becomes unstable. As a result, the best MAML and Meta-SGD hyperparameter choice 
has fewer than 100 adaptation steps in meta-training. These methods then fail to generalize to the longer
optimization horizon required in this problem.

We do not show the per-parameter MAE trajectory as in the previous section because 
this optimization moves through a spiraling, highly coupled trajectory
in the parameter space, and thus per-parameter MAE is not a good metric to measure the
progress of optimization.

\section{Algorithm Implementation Details}
\label{sec:alg_details}

\subsection{Implementation details for iMAML}

In this work, we implement and compare our algorithm to iMAML-GD --- the version of iMAML that uses gradient descent within task adaptation \citep[Sec.\ 4]{rajeswaran2019meta} --- as this better matches the proximal gradient descent optimizer used in our shrinkage algorithms.

In our implementation of conjugate gradient descent, to approximate the inverse Hessian we apply a damping term following \citet{rajeswaran2019meta} and restrict the number of conjugate gradient steps to 5 for numerical stability. The meta update is then
\eq{
\Delta_t^{\text{iMAML}} = 
\Big((1 + d) \vI + \tfrac{1}{\lambda} \nabla^2_{\vtaskv_t}\elltr(\vtaskv_t) 
\Big)^{-1} \nabla_{\vtaskv_t} \ellval(\vtaskv_t),
\label{eq:imaml_damped}
}
where $d$ is the damping coefficient. We treat $d$ and the number of conjugate gradient steps as hyperparameters and choose values using hyperparameter search.

\subsection{Implementation details for shrinkage prior algorithms}

As in iMAML, we apply damping to the Hessian matrix when running conjugate gradient descent to approximate the product $\nabla_{\vtaskv_t} \ell_t^\val(\hat\vtaskv_t) \vH_{\hat\vtheta_t \hat\vtheta_t}^{-1}$ in \shrimaml{} (Eq.~\ref{eq:valid_gd_v1}) and \shrreptile{} (Eq.~\ref{eq:valid_gd_v2}), 
\eq{
\vH_{\hat\vtheta_t \hat\vtheta_t} = -\tilde{d} \vI - \vSigma^{-1} - 
\nabla_{\hat\vtheta_t \hat\vtheta_t} \log p\left(\D_{t}^\tr|\vtheta_{t}\right),
}
where $\vSigma = \mathrm{Diag}(\sigma_1^2 \vI_{D_1}, \sigma_2^2 \vI_{D_2}, \dots, \sigma_M^2 \vI_{D_M})$, and $\mathrm{Diag}(\dots, \vB_m, \dots)$ denotes a block diagonal matrix with $m$-th block $\vB_m$. Note that the damped update rule reduces to that of iMAML when $\sigma_m^2 = 1/\lambda, \forall m$ and $\tilde{d} = d\lambda$.

Additionally, we apply a diagonal pre-conditioning matrix in the same structure as $\vSigma$, $\vP = \mathrm{Diag}(p_1 \vI_{D_1}, p_2 \vI_{D_2}, \dots, p_M \vI_{D_M})$ with $p_m = \max\{\sigma_m^{-2}/10^3, 1\}$ to prevent an ill-conditioned Hessian matrix when the prior variance becomes small (strong prior).

We also clip the value of $\sigma_m^2$ to be in $[10^{-5}, 10^{5}]$. This clipping improves the stability of meta learning, and the range is large enough to not affect the MAP estimate of $\hat{\vtaskv}_t$. 

Finally, we incorporate a weak regularizer on the shrinkage variance to encourage sparsity in the discovered adaptable modules. The regularized objective for learning $\vsigma^2$ becomes
\eq{
\frac{1}{T} \ell_\pll + \beta \log \mathrm{IG}(\vsigma^2) \,,
}
where $\mathrm{IG}$ is the inverse Gamma distribution with shape $\alpha=1$ and scale $\beta$. Unless otherwise stated, we use $\beta=10^{-5}$ for sinusoid and image experiments, and $\beta=10^{-7}$ for text-to-speech experiments. We find that this regularization simply reduces the learned $\sigma_m^2$ of irrelevant modules without affecting generalization performance.

\subsection{Proximal Gradient Descent and Proximal Adam with L2 regularization}
\label{sec:proximal_gd_and_adam}

The pseudo-code of Proximal Gradient Descent~\citep{singer2009proxgd} with an L2 regularization is presented in \cref{alg:prox-gd}. We also modify the Adam optimizer~\citep{kingma2014adam} to be a proximal method and present the pseudo-code in \cref{alg:prox-adam}.

\begin{figure}[tbh!]
\centering
\begin{minipage}{0.42\textwidth}
    \centering
    \begin{algorithm}[H]
        \SetAlgoLined
        \KwIn{Parameter $\vtheta_t$, gradient $\vg_t$, step size $\alpha_t$, regularization center $\vphi$, L2 regularization scale $\lambda$.} \;
        $\vtheta_{t+\frac{1}{2}} = \vtheta_t - \alpha_t \vg_t$ \;
        $\vtheta_{t+1} = (\vtheta_{t+\frac{1}{2}} - \vphi)/(1 + \lambda \alpha_t) + \vphi$ \;
        \Return $\vtheta_{t+1}$
    \caption{Proximal Gradient Descent with L2 Regularization.}
    \label{alg:prox-gd}
    \end{algorithm}    
\end{minipage}
~
\begin{minipage}{0.52\textwidth}
    \centering
    \begin{algorithm}[H]
        \SetAlgoLined
        \KwIn{Parameter $\vtheta_t$, gradient $\vg_t$, step size $\alpha_t$, regularization center $\vphi$, L2 regularization scale $\lambda$, $\epsilon$ for Adam.} \;
        $\vtheta_{t+\frac{1}{2}}, \hat{\vv}_{t+1} = \mathrm{Adam}(\vtheta_t, \vg_t, \alpha_t)$ \;
        $\vtheta_{t+1} = (\vtheta_{t+\frac{1}{2}} - \vphi) / (1 + \lambda \alpha_t / \sqrt{\hat{\vv}_{t+1}+\epsilon}) + \vphi$ \;
        \Return $\vtheta_{t+1}$
    \caption{Proximal Adam with L2 Regularization.}
    \label{alg:prox-adam}
    \end{algorithm}    
\end{minipage}
\end{figure}

\section{Experiment Details and Additional Short Adaptation Experiments}
\label{sec:apx_experiment}

In our experiments, we treat each layer (e.g., the weights and bias for a convolutional layer are a single layer) as a module, including batch-normalization layers which we adapt as in
previous work~\citep{Finn2017}.
Other choices of modules are straightforward but we leave exploring these for future work.
For the chosen hyperparameters, we perform a random search
for each experiment and choose the setting that performs
best on validation data.

\subsection{Augmented Omniglot experiment details}
\label{sec:apx_manyshot_omniglot}

We follow the many-shot Omniglot protocol of~\citet{flennerhag2019leap}, which takes the 
$46$ Omniglot alphabets that have $20$ or more character classes and creates one $20$-way classification 
task for each alphabet by sampling $20$ character classes from that alphabet. These classes are then
kept fixed for the duration of the experiment. Of the $46$ alphabets, $10$ are set aside as a held-out
test set, and the remainder are split between train and validation. 
The assignments of alphabets to splits
and of classes to tasks is determined by a random seed at the beginning of each experiment.
For each character class, there are $20$ images in total, $15$ of which are set aside 
for training (i.e., task adaptation) and $5$ for validation. This split is
kept consistent across all experiments.
All Omniglot images are downsampled to $28 \times 28$.
Note that this protocol differs significantly from the 
standard few-shot Omniglot protocol (discussed below),
where each task is created by selecting $N$ different characters (from any alphabet),
randomly rotating each character by $\{0, 90, 180, 270\}$ degrees, and then randomly selecting
$K$ image instances of that (rotated) character. 

At each step of task adaptation and when computing the validation loss, a batch of images is sampled
from the task. Each of these images is randomly augmented by re-scaling it by a factor 
sampled from $[0.8, 1.2]$, and translating it by a factor sampled from $[-0.2, 0.2]$.
In the large-data regime, images are also randomly rotated by an angle sampled from $\{0, \dots, 359\}$ degrees.
In the small-data regime, no rotation is applied.

We use the same convolutional network architecture as \citet{flennerhag2019leap}, which
differs slightly from the network used for few-shot Omniglot (detailed below). Specifically, the few-shot Omniglot
architecture employs convolutions with a stride of 2 with no max pooling, whereas the 
architecture for many-shot Omniglot uses a stride of 1 with $2 \times 2$ max pooling.
In detail, the architecture for many-shot Omniglot consists of $4$ convolutional blocks,
each made up of a $3 \times 3$ convolutional layer with $64$ filters, a batch-normalization layer,
a ReLU activation, and a $2 \times 2$ max-pooling layer, in that order. The output of the final
convolutional block is fed into an output linear layer and then a cross-entropy loss.

\begin{table*}[htb!]
    \caption{Hyperparameters for the large-data augmented Omniglot classification experiment. Chosen with random search on validation task set.} 
    \label{table:aug_omniglot_hypers}
    \begin{center}
    \resizebox{\textwidth}{!}{        \begin{tabular}{lcccc}
            \toprule
            ~ & Reptile  & iMAML & \shrreptile{} & \shrimaml{} \\
            \midrule
            Meta-training \\
            \midrule
            ~~Meta optimizer & SGD & Adam & Adam & Adam \\
            ~~Meta learning rate ($\vmetav$) & 1.2 & 1.8e-3 & 6.2e-3 & 5.4e-3  \\
            ~~Meta learning rate ($\log \vsigma^2$) & - & - & 1.6e-2 & 0.5  \\
            ~~Meta training steps & 5k & 5k & 5k & 5k \\
            ~~Meta batch size (\# tasks) & 20 & 20 & 20 & 20 \\
            ~~Damping coefficient & - & 0.1 & 0.16 & 9e-2 \\
            ~~Conjugate gradient steps & - & 1 & 4 & 5 \\
            \midrule
                                    \multicolumn{5}{l}{Task adaptation (adaptation step for meta-test is in parentheses)} \\
            \midrule
            ~~Task optimizer & Adam & ProximalGD & ProximalGD & ProximalGD \\
            ~~Task learning rate & 9.4e-3 & 0.5 & 0.52 & 0.37 \\
            ~~Task adaptation steps  & 100 (100) & 100 (100) & 100 (100) & 100 (100) \\
            ~~Task batch size (\# images) & 20  & 20 & 20 & 20 \\
            \bottomrule
        \end{tabular}
    }
    \end{center}
\end{table*}

\subsubsection{Large-data regime}

In the large-data regime, we use $30$ alphabets for training, each with $15$ image instances
per character class. Hyperparameters for each algorithm in this domain are shown in
\cref{table:aug_omniglot_hypers}. For iMAML, we use $\lambda = 1.3$e$-4$.

\cref{fig:module_aug_omniglot} shows the learned variances and resulting test accuracies when adapting different modules. We find in this dataset all the shrinkage algorithms choose the last linear layer for adaptation, and the performance matches that of adapting all layers with learned variance within a $95\%$ confidence interval of 0.08.

\begin{table*}[htb!]
    \caption{Hyperparameters for the small-data augmented Omniglot experiment. Chosen with random search on validation task set.} 
    \label{table:small_data_omni_hypers}
    \begin{center}
        \begin{tabular}{lcccc}
            \toprule
            ~  & Reptile  & iMAML &  \shrreptile{} & \shrimaml{} \\
            \midrule
            Meta-training \\
            \midrule
            ~~Meta optimizer & SGD & Adam & Adam & Adam \\
            ~~Meta learning rate ($\vmetav$) & 0.1 & 1e-3 & 1.4e-4 & 2e-4 \\
            ~~Meta learning rate ($\log \vsigma^2$) & - & - & 5e-3 & 1.3 \\
            ~~Meta training steps & 1000 & 1000 & 1000 & 1000 \\
            ~~Meta batch size (\# tasks) & 20 & 20 & 20 & 20 \\
            ~~Damping coefficient & - & 0.5 & 1e-3 & 2e-3 \\
            ~~Conjugate gradient steps & - & 2 & 5 & 3 \\
            ~~Regularization scale ($\beta$) & - & - & 1e-6 & 1e-7 \\
            \midrule
                        \multicolumn{5}{l}{Task adaptation
                        } \\
            \midrule
            ~~Task optimizer & Adam & ProximalGD & ProximalGD & ProximalGD \\
            ~~Task learning rate & 4e-3 & 0.4 & \textit{(see \cref{table:small_data_omni_hypers_shrinkage})} & \textit{(see \cref{table:small_data_omni_hypers_shrinkage})} \\
            ~~Task adaptation steps & 100 & 100 & 100 & 100 \\
            ~~Task batch size (\# images) & 20 & 20 & 20 & 20 \\
            \bottomrule
        \end{tabular}
    \end{center}
\end{table*}

\begin{table*}[hbt!]
\caption{Task optimizer learning rate for different numbers of training instances per character class in small-data augmented Omniglot.}
\label{table:small_data_omni_hypers_shrinkage}
  \begin{center}
      \begin{tabular}{lccccc}
      \toprule
      \# train instances & 1 & 3 & 5 & 10 & 15 \\
      \midrule
      \shrreptile{} task LR & 0.4 & 0.31 & 0.22 & 0.12 & 0.03 \\
      \shrimaml{} task LR & 0.25 & 0.17 & 0.14 & 0.1 & 0.05 \\
      \bottomrule
      \end{tabular}
  \end{center}
\end{table*}

\begin{figure*}[hbt!]
\centering
\begin{subfigure}{.45\textwidth}
\centering
\includegraphics[width=.95\textwidth]{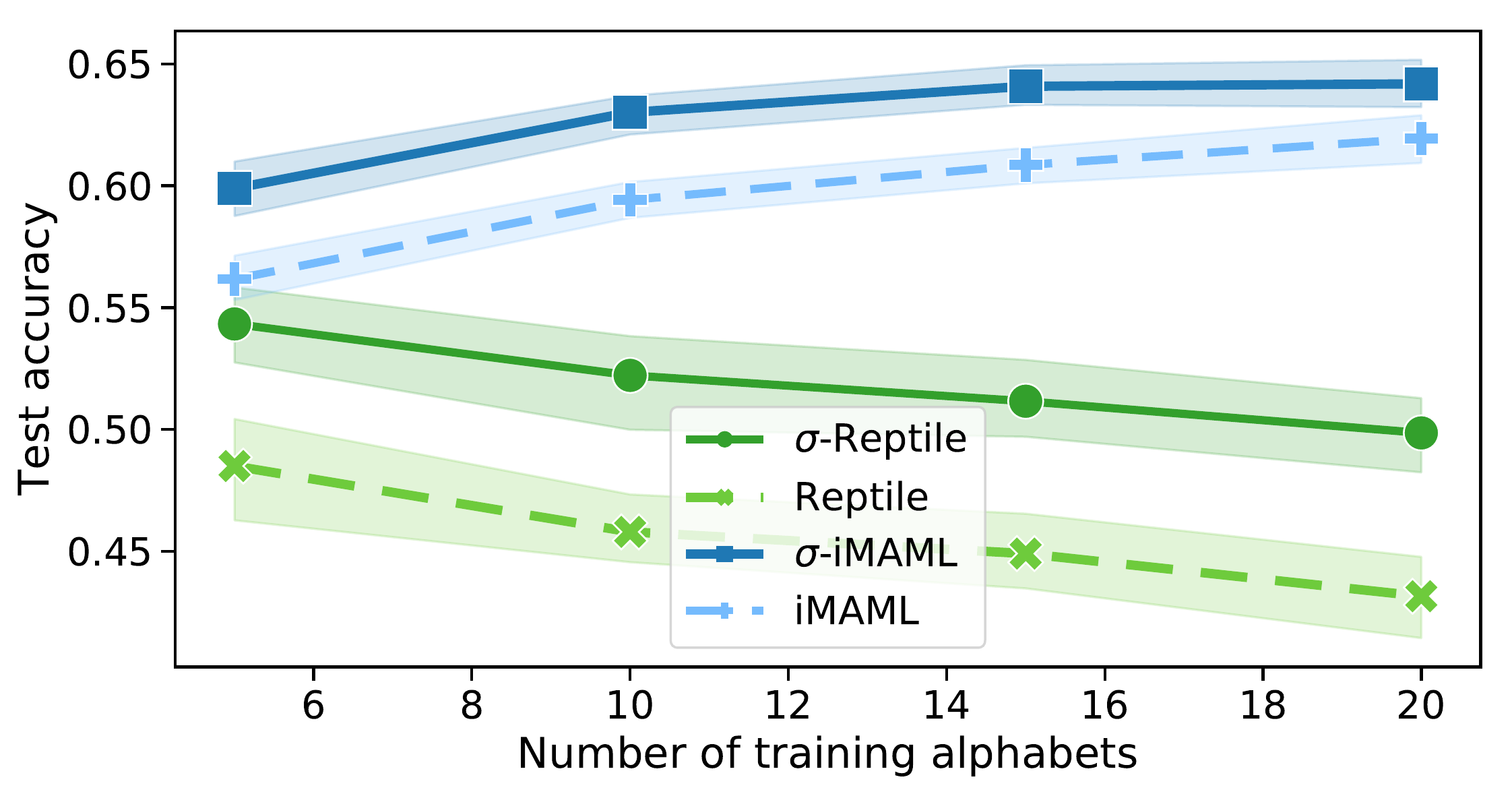}
\caption{$1$ image instance per character.}
\end{subfigure}
~
~
\begin{subfigure}{.45\textwidth}
\centering
\includegraphics[width=.95\textwidth]{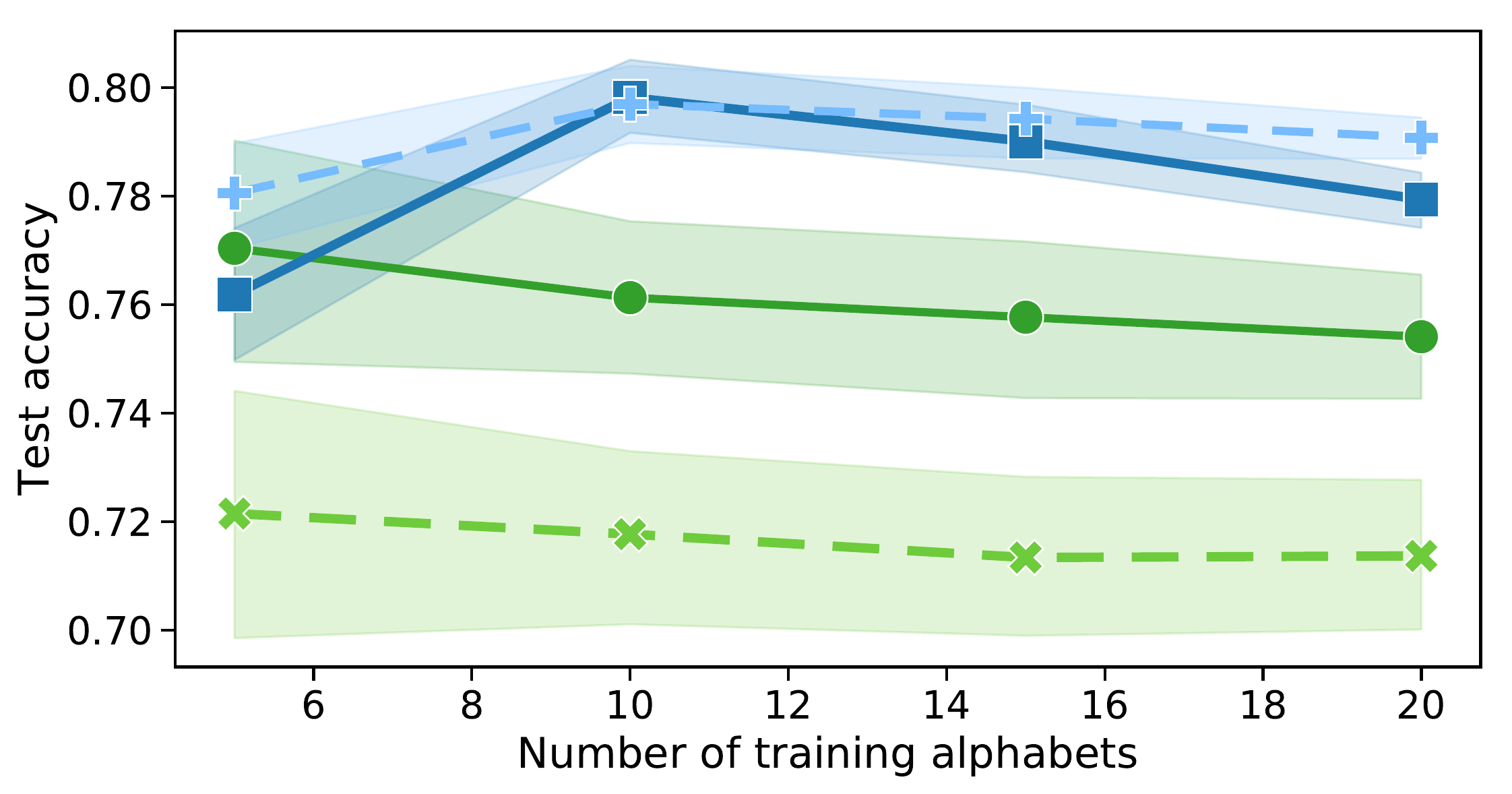}
\caption{$3$ image instance per character.}
\end{subfigure}
\\
\begin{subfigure}{.45\textwidth}
\centering
\includegraphics[width=.95\textwidth]{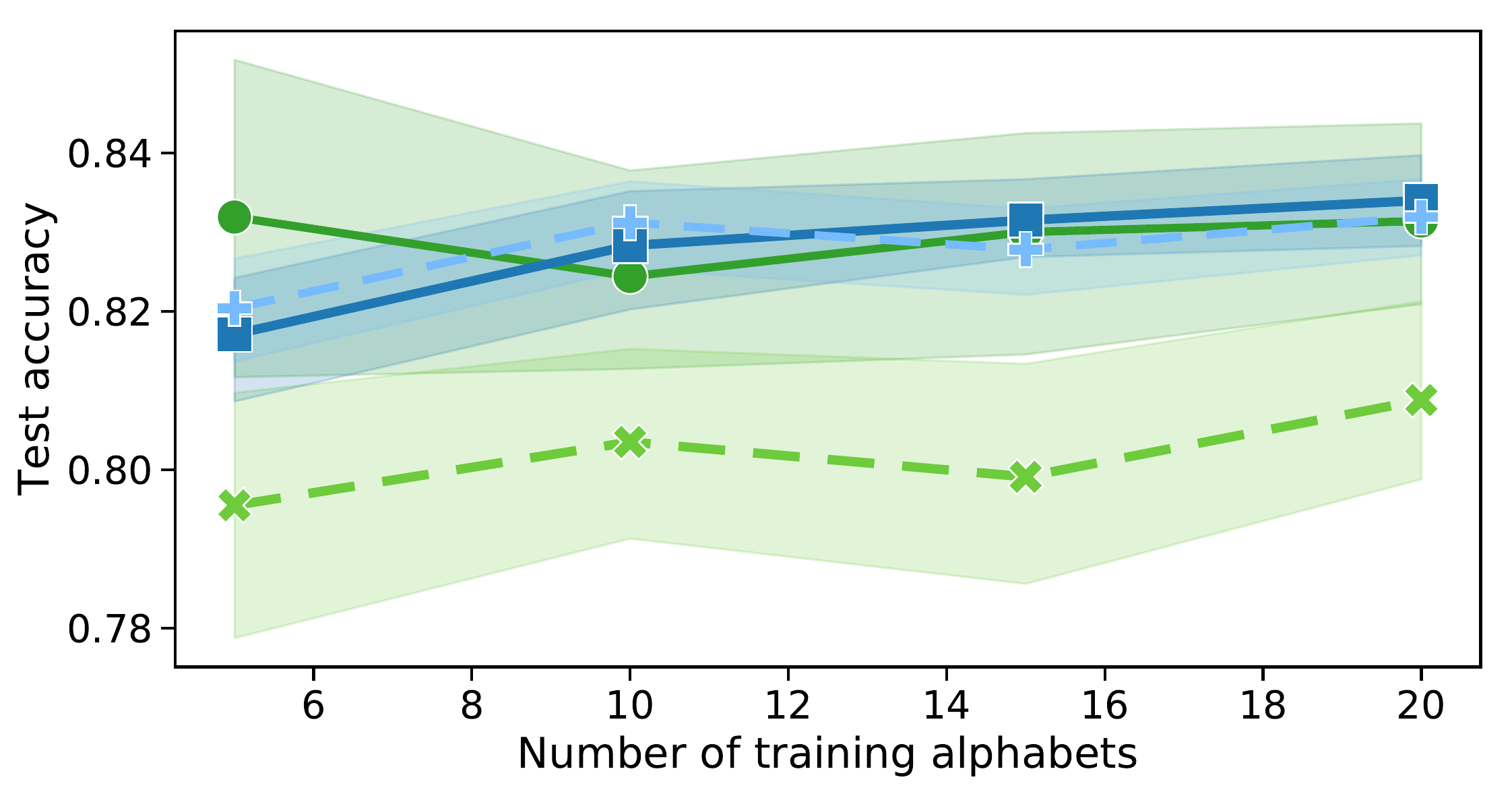}
\caption{$5$ image instance per character.}
\end{subfigure}
~
~
\begin{subfigure}{.45\textwidth}
\centering
\includegraphics[width=.95\textwidth]{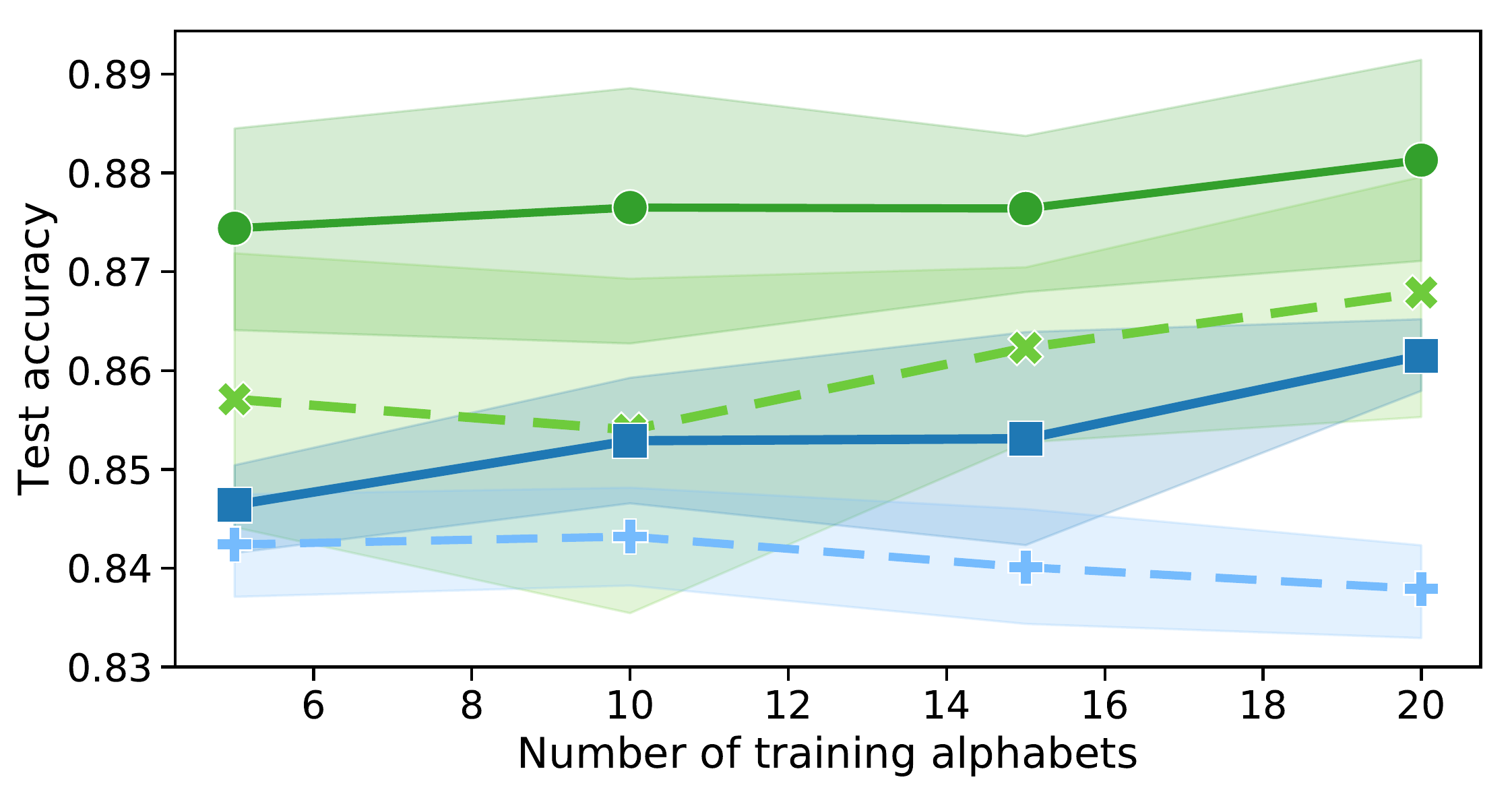}
\caption{$10$ image instance per character.}
\end{subfigure}
\\
\centering
\begin{subfigure}{.45\textwidth}
\centering
\includegraphics[width=.95\textwidth]{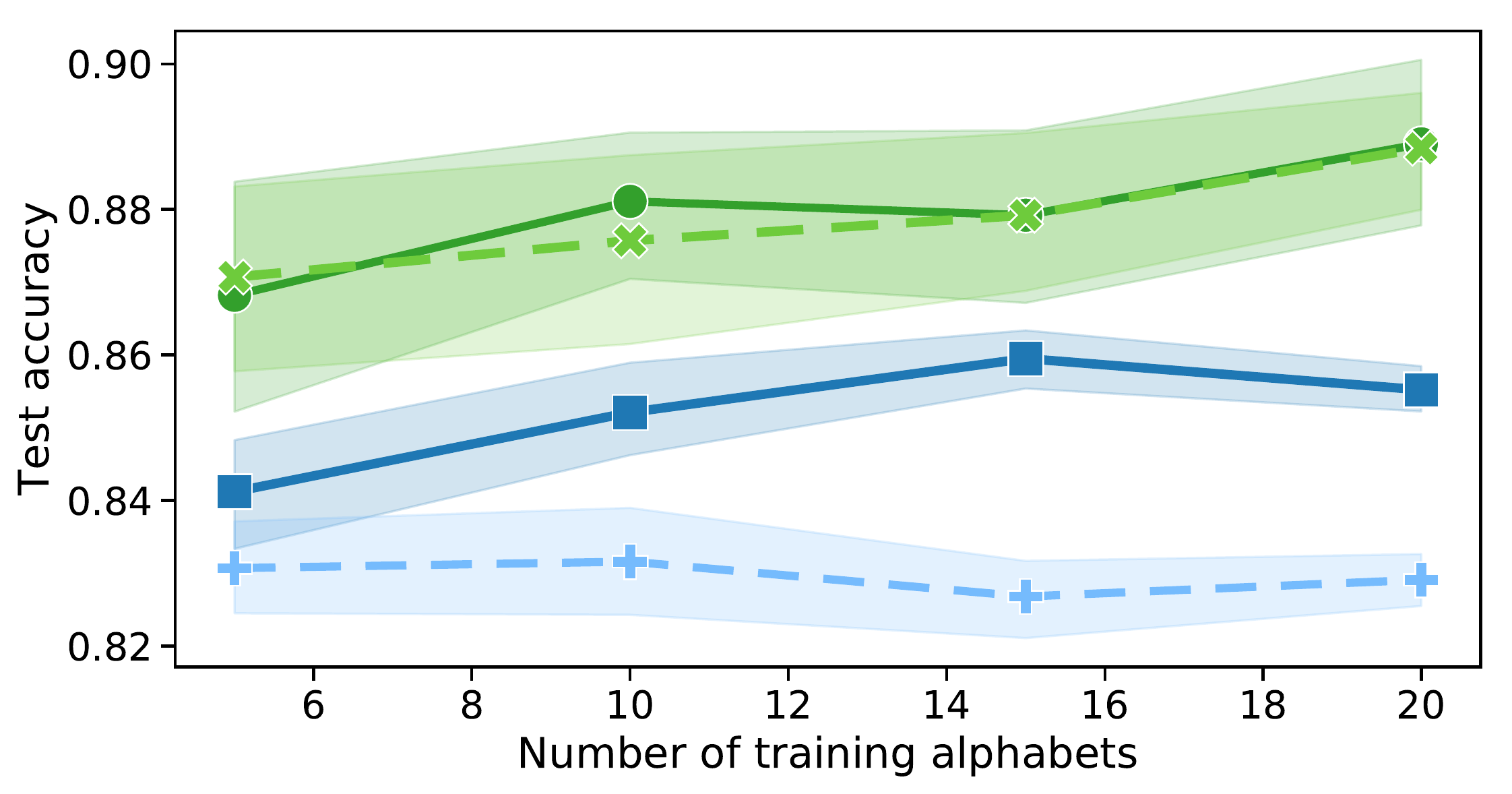}
\caption{$15$ image instance per character.}
\end{subfigure}
\caption{Test accuracy on augmented Omniglot in the small-data regime as a function of the number of training alphabets. In each plot, the number of instances per class is fixed. Each data point is the average of $10$ runs and $95\%$ confidence intervals are shown.}
\label{fig:aug_omni_reduced_instances_appendix}
\end{figure*}

\begin{figure*}[hbt]
\centering
\begin{subfigure}{.45\textwidth}
\centering
\includegraphics[width=.95\textwidth]{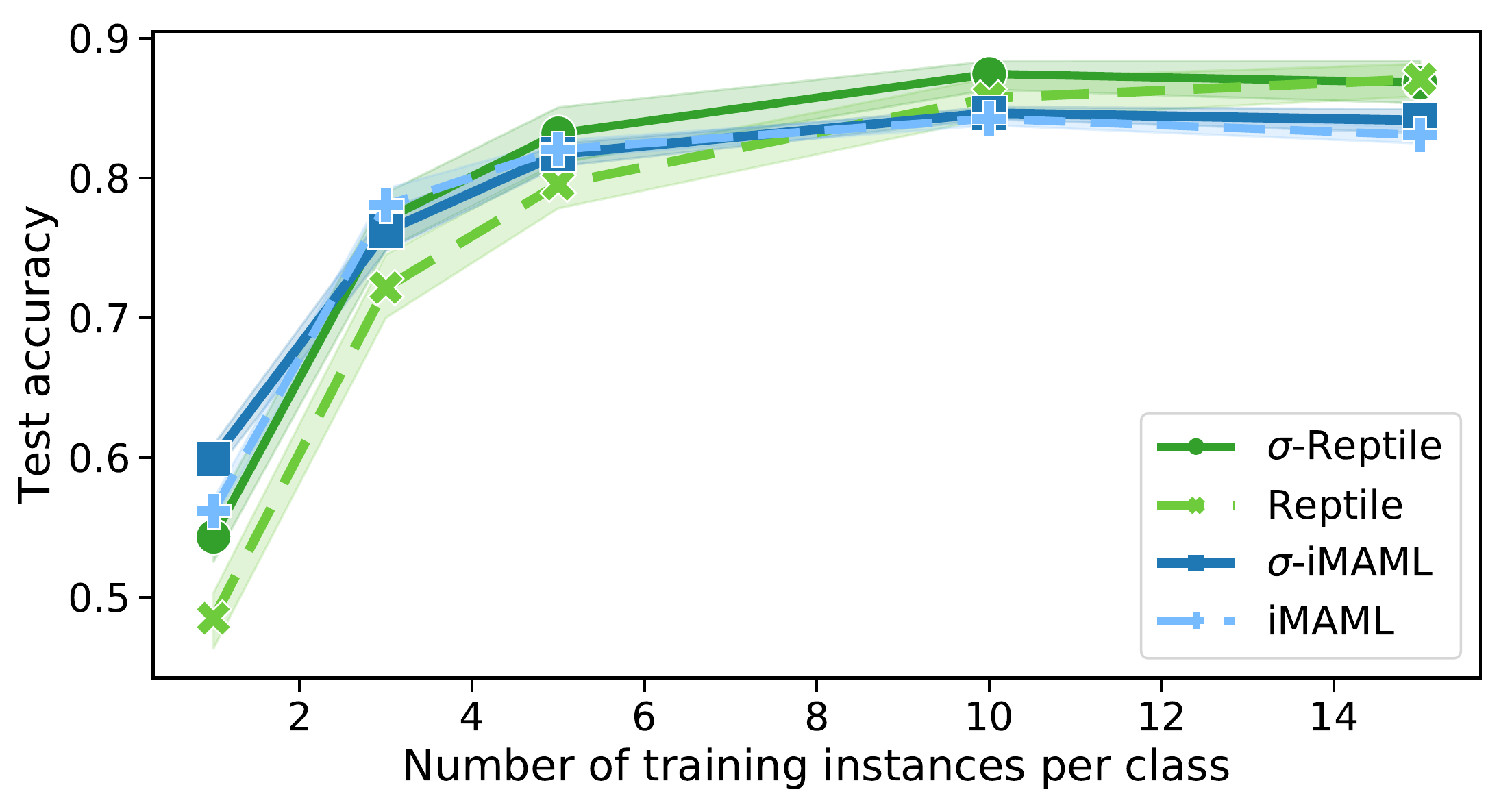}
\caption{$5$ training alphabets.}
\end{subfigure}
~
~
\begin{subfigure}{.45\textwidth}
\centering
\includegraphics[width=.95\textwidth]{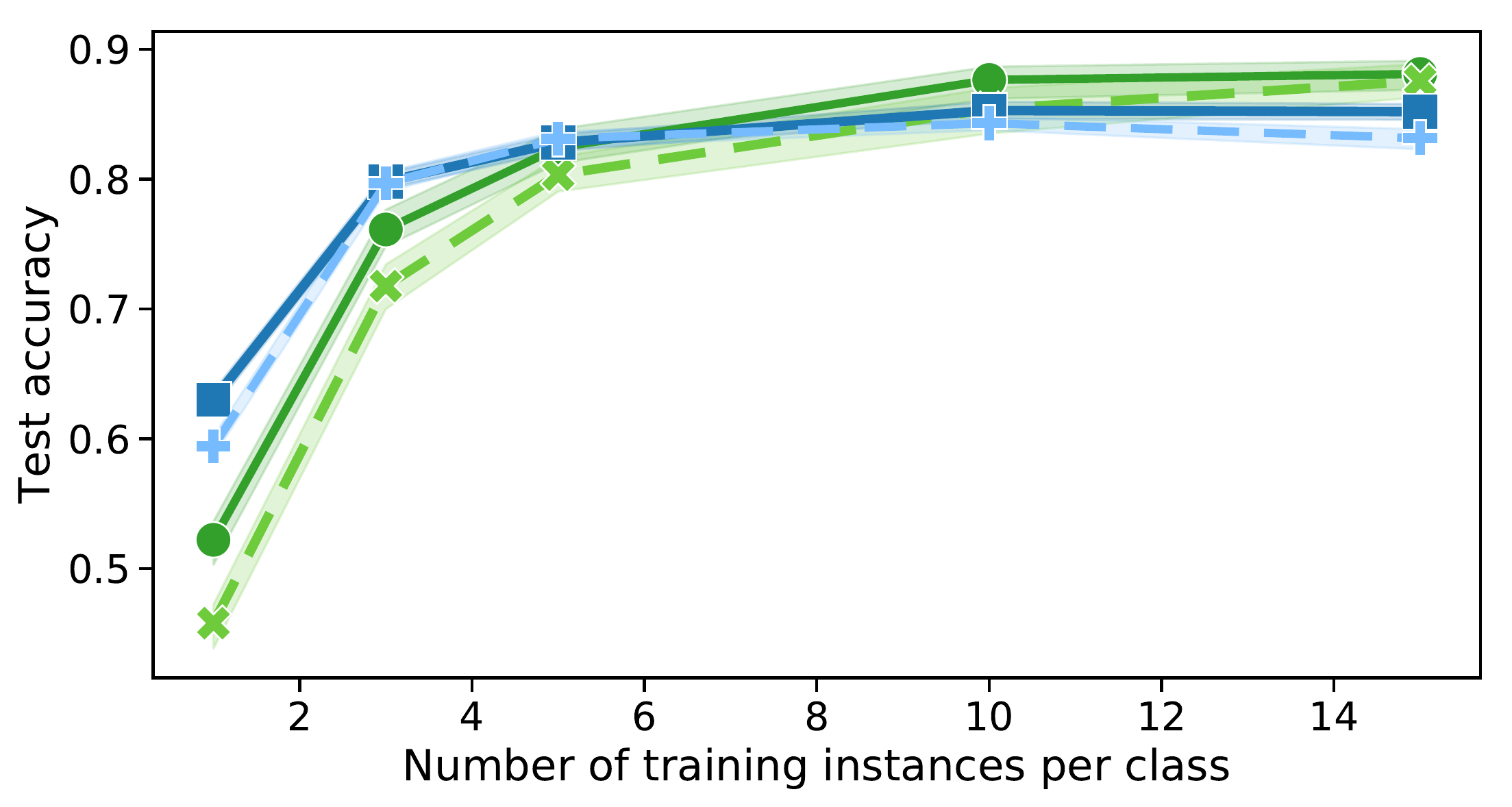}
\caption{$10$ training alphabets.}
\end{subfigure}
\\
\begin{subfigure}{.45\textwidth}
\centering
\includegraphics[width=.95\textwidth]{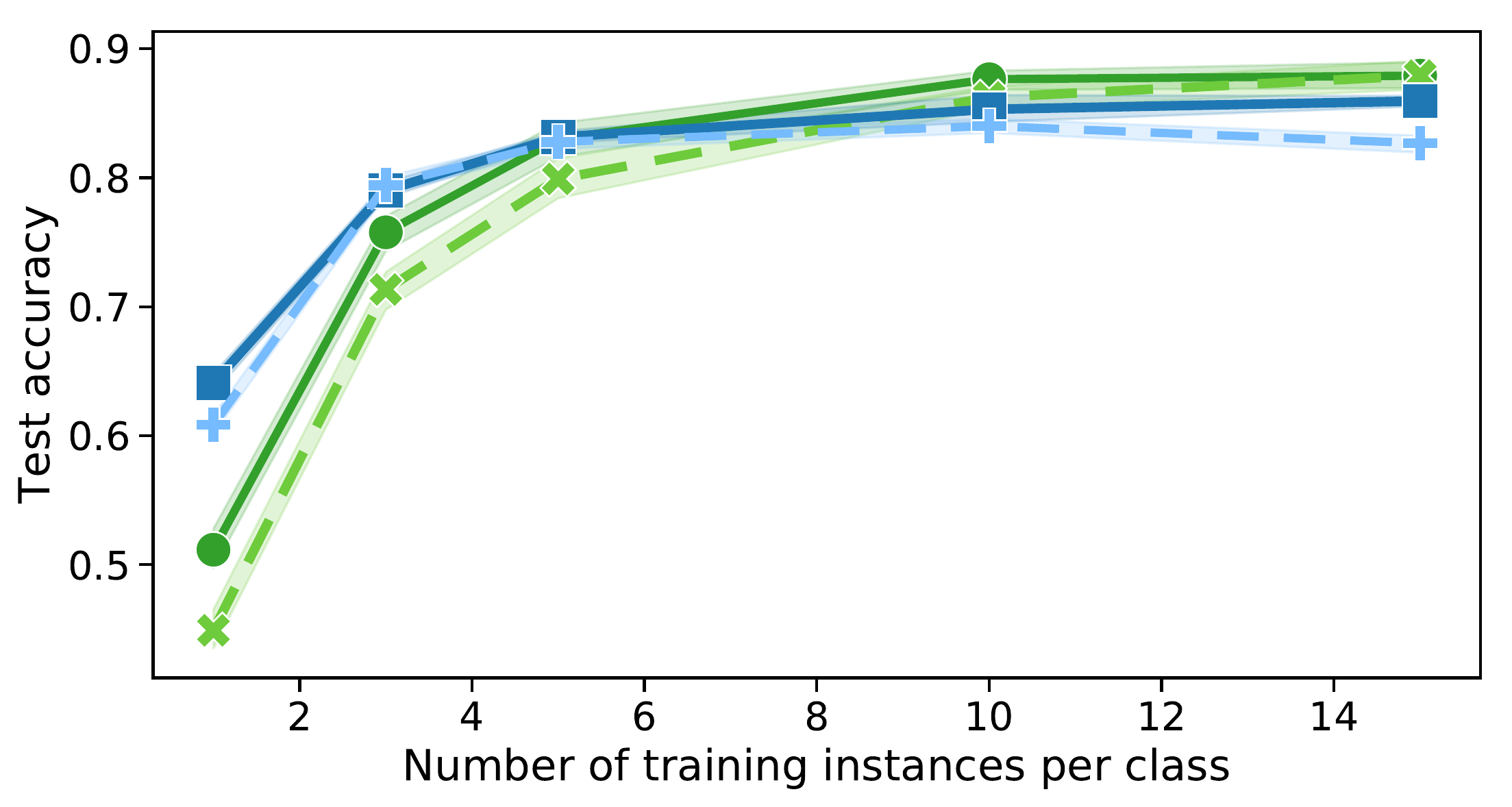}
\caption{$15$ training alphabets.}
\end{subfigure}
~
~
\begin{subfigure}{.45\textwidth}
\centering
\includegraphics[width=.95\textwidth]{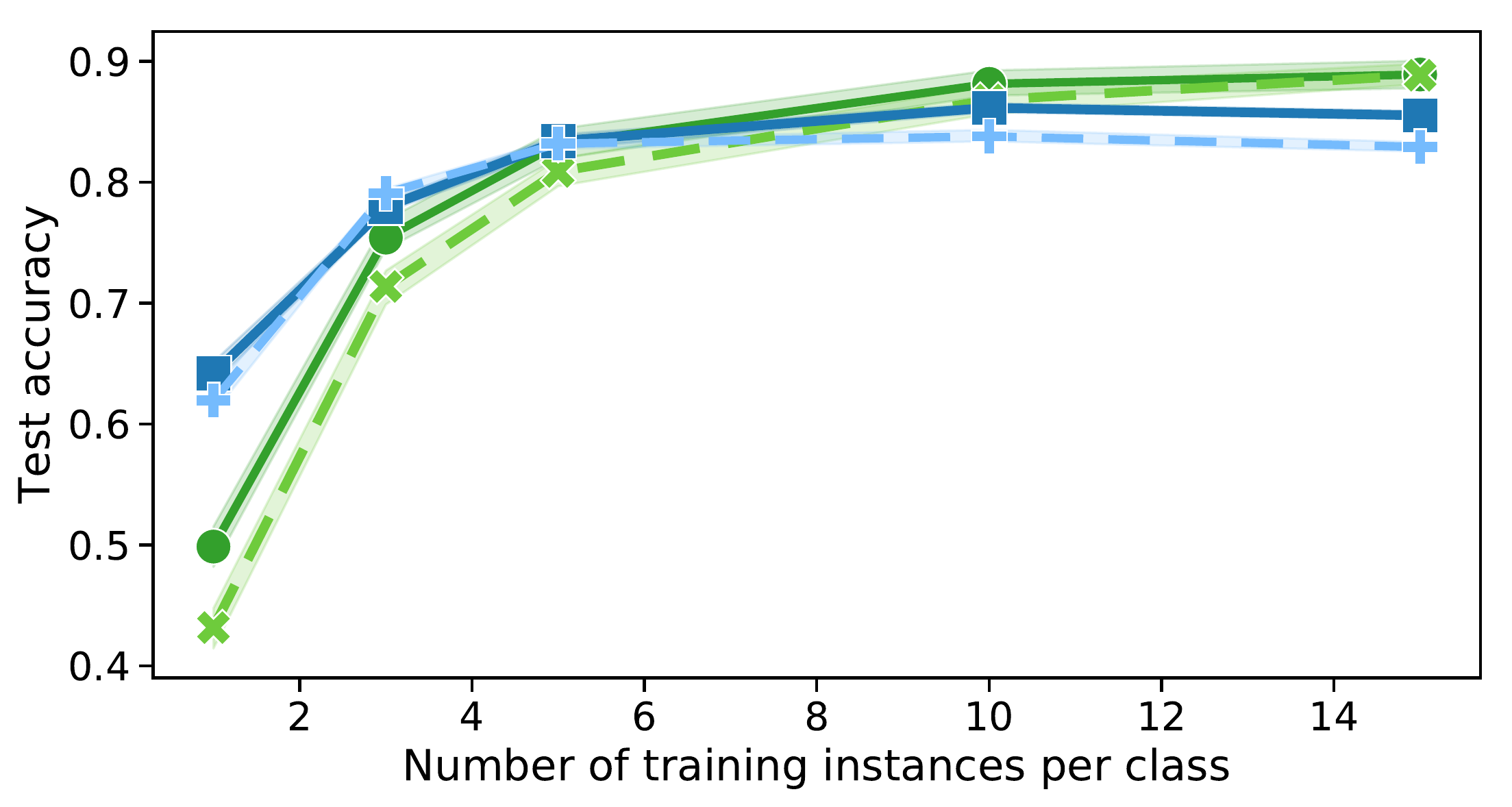}
\caption{$20$ training alphabets.}
\end{subfigure}
\caption{Test accuracy on augmented Omniglot in the small-data regime as a function of the number of training instances. In each plot, the number of training alphabets is fixed. Each data point is the average of $10$ runs and $95\%$ confidence intervals are shown.}
\label{fig:aug_omni_reduced_alphabets_appendix}
\end{figure*}

\subsubsection{Small-data regime}

In the small-data regime, we evaluate the performance of Reptile, iMAML,
\shrreptile{}, and \shrimaml{} across variants of the many-shot augmented Omniglot 
task with different numbers of training alphabets and training image instances per character class,
without rotation augmentation.
We train each algorithm $10$ times with each of $5, 10, 15,$ and $20$ training alphabets and 
$1, 3, 5, 10,$ and $15$ training instances per character class.

We meta-train all algorithms for $1000$ steps using $100$ steps of adaptation per task. We use a meta-batch size of $20$, meaning that the same task can appear multiple times within a batch, although the
images will be different in each task instance due to data augmentation. At meta-evaluation time, we again perform $100$ steps of task adaptation on the task-train instances of each test alphabet, and report the accuracy on the task-validation instances.

Hyperparameters for each of the $4$ algorithms that we evaluate on 
small-data augmented Omniglot are listed in 
\cref{table:small_data_omni_hypers}. 
These were chosen based on a comprehensive hyperparameter search on separate validation data.
For Reptile, we use a linear learning rate decay that reaches $0$ at the end of meta-training, as in~\citet{Nichol2018}. 
For iMAML, we use an L2 regularization scale of $\lambda = 1/\sigma^2 = 1$e$-3$.
For \shrimaml{} and \shrreptile{}, we use a different task optimizer learning rate for different numbers of 
instances. These learning rates are presented in \cref{table:small_data_omni_hypers_shrinkage}.

Due to the space limit, we show results with 1 training instance per character class and results with 15 training alphabets in 
\cref{fig:aug_omni_reduced} of the main text.
The full results of the $20$ experimental conditions are presented in
\cref{fig:aug_omni_reduced_instances_appendix,fig:aug_omni_reduced_alphabets_appendix}, which respectively
show the performance of each method as the number of training instances and alphabets vary.
As discussed in the main text, each shrinkage variant outperforms or matches its corresponding non-shrinkage variant
in nearly every condition. Improvements by the shrinkage variants are more consistent and pronounced 
when the amount of training data is limited.

\subsection{Few-shot Text-to-Speech experiment details}
\label{sec:apx_tts}

\begin{figure}[tbh]
\centering
\includegraphics[width=.95\textwidth]{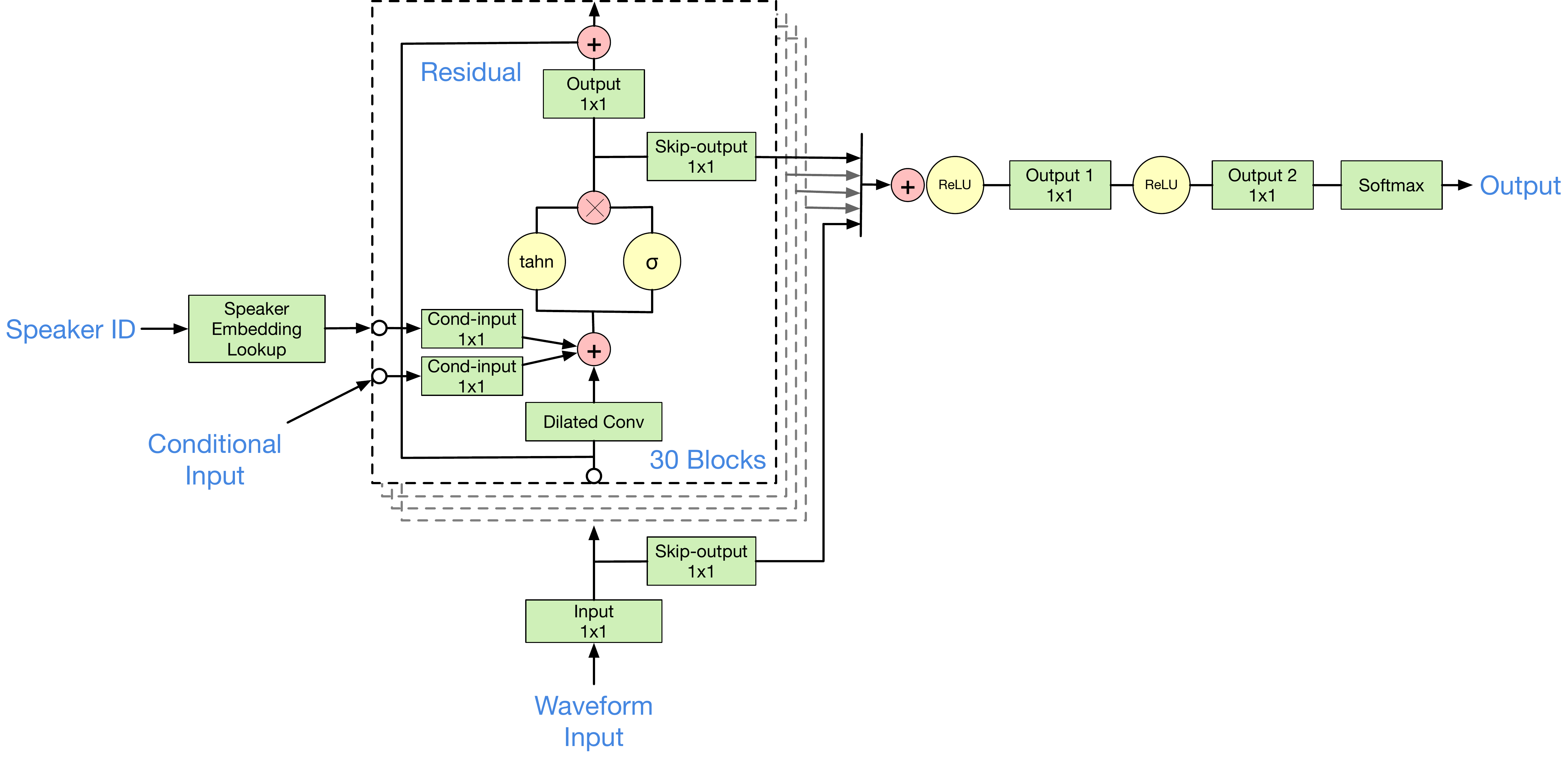}
\caption{Architecture of the multi-speaker WaveNet model. The single-speaker WaveNet model used by Reptile and \shrreptile{}, does not include the speaker embedding lookup table and the corresponding conditional input layer.}
\label{fig:wavenet}
\end{figure}

For few-shot text-to-speech synthesis, various works \citep{arik2018neural,chen2019sample,jia2018transfer,taigman2018voiceloop} 
have made use of speaker-encoding networks or trainable speaker embedding vectors to adapt to a new voice based on a small amount of speech data. These works achieved success to some extent when there were a few training utterances, but the performance saturated quickly beyond 10 utterances \citep{jia2018transfer} due to the bottleneck in the speaker specific components. \citet{arik2018neural} and \citet{chen2019sample} found that the performance kept improving with more utterances by fine-tuning the entire TTS model, but the adaptation process had to be terminated early to prevent overfitting. As such, some modules may still be underfit while others have begun to overfit, similar to the behavior seen for MAML in \cref{fig:synthetic_exp1_MAE_per_module}.

In this paper, we examine the advantages of shrinkage for a WaveNet model \citep{van2016wavenet}. The same method applies to other TTS architectures as well. In preliminary experiments, we found iMAML and \shrimaml{} meta-learn much more slowly than Reptile and \shrreptile{}. We conjecture that this is because iMAML and \shrimaml{} compute meta-gradients based only on the validation data from the last mini-batch of task adaptation. In contrast, the meta update for $\vmetav$ from Reptile and \shrreptile{} accumulates the task parameter updates computed from training mini-batches through the adaptation process. With a task adaptation horizon of 100 steps, this leads to significantly different data efficiencies. As a result, we only evaluate Reptile and \shrreptile{} for this experiment.

The WaveNet model is an augoregressive generative model. At every step, it takes the sequence of waveform samples generated up to that step, and the concatenated fundamental frequency (f0) and linguistic feature sequences as inputs, and predicts the sample at the next step. The sequence of fundamental frequency controls the dynamics of the pitch in an utterance. The short-time frequency is important for WaveNet to predict low level sinusoid-like waveform features. The linguistic features encode the sequence of phonemes from text. They are used by WaveNet to generate speech with corresponding content. The dynamics of the fundamental frequency in f0 together with the phoneme duration contained in the linguistic feature sequence contains important information about the prosody of an utterance (word speed, pauses, emphasis, emotion, etc), which change at a much slower time-scale. While the fundamental frequency and prosody in the inputs contain some information about the speaker identity, the vocal tract characteristics of a voice that is unique to each speaker cannot be inferred from the inputs, and has to be learned by the WaveNet model from waveform samples through task adaptation.

The full architecture of a multi-speaker WaveNet model used by SEA-Emb and SEA-All in \citet{chen2019sample} is shown in \cref{fig:wavenet}. For Reptile and \shrreptile{}, we use a single-speaker model architecture that excludes the speaker embedding lookup table and associated conditional input layers in the residual blocks. The single-speaker model is comprised of one input convolutional layer with $1\times 1$ kernel, one input skip-output $1\times 1$ layer, 30 residual of dilated causal-convolutional blocks (each including 4 layers), and 2 $1\times 1$ output layers before feeding into a 256-way softmax layer. We treat every layer as a module, for a total of 123 modules (the output layer of the last block is not used for prediction).

To speed up meta-learning, we first pretrain a single-speaker model with all training speaker data mixed, and initialize \shrreptile{} and Reptile with that pretrained model. This is reminiscent of other meta-learning works for few-shot image classification that use a pretrained multi-head model to warm-start when the backbone model (e.g. ResNet) is large. This pretrained TTS model learns to generate speech-like samples but does not maintain a consistent voice through a single sentence In contrast to other meta-learning works that fix the feature extracting layers, we then meta-learn all WaveNet layers to identify most task-specific modules.

We run the multispeaker training and pre-training for meta-learning methods for 1M steps on a proprietary dataset of 61 speakers each with 2 hours of speech data with 8-bit encoding sampled at 24K frequency. For \shrreptile{} and Reptile, we further run meta-learning for 8K meta-steps with 100 task adaptation steps in the inner loop for the same set of speakers. Each task consists of 100 utterances (roughly 8 minutes of speech). We evaluate the model on 10 holdout speakers, in two data settings, with 100 utterances or 50 utterances (roughly 4 minutes of speech) per task. We run up to 10,000 task adaptation steps in meta-test. SEA-All and Reptile both finetune all parameters. They overfit quickly in meta-testing after about 1,500 and 3,000 adaptation steps with 4-min and 8-min of data, respectively. We therefore early terminate task adaptation for these algorithms to prevent overfitting.

To measure the sample quality of naturalness, we have human evaluators 
rate each sample on a five-point Likert Scale (1:~Bad, 2:~Poor, 3:~Fair, 4:~Good, 5:~Excellent) and then we compute the mean opinion score (MOS). This is the standard approach to evaluate the quality of TTS models.

Voice similarity is computed as follows. We use a pretrained speaker verification model \citep{wan2017generalized} that outputs an embedding vector, $d(x)$, known as a $d$-vector, for each utterance $x$. We first compute the mean of $d$-vectors from real utterances of each test speaker, $t$.
$\bar{d}_t := \sum_n d(x_{t,n}) / N_t$. Given a model adapted to speaker $t$, we compute the sample similarity for every sample utterance $x_i$ as
\eq{
\mathrm{sim}(x_i, t) = \mathrm{cos}(d(x_i), \bar{d}_t) \,.\nn
}

\subsection{Additional short adaptation experiment: sinusoid regression}
\label{sec:apx_sin}

We follow the standard sinusoid regression protocol of \citet{Finn2017} in which each task consists of regressing input to output of a sinusoid $y = a\sin(x - b)$ uniformly sampled with amplitude $a\in[0.1, 5]$ and phase $b\in[0, \pi]$. Each task is constructed by sampling $10$ labelled data points from input range $x\in[-5, 5]$. We learn a regression function with a 2-layer neural network with 40 hidden units and ReLU nonlinearities and optimise the mean-squared error (MSE) between predictions and true output values.

Our method is agnostic to the choice of modules. For this small model, consider each set of network parameters as a separate module. In total, we define 6 modules: $\{b_i, w_i\}$, for $i=0,1,2$, where $b_i$ and $w_i$ denote the bias and weights of each layer. We run each shrinkage variant for 100K meta-training steps, and evaluate the generalization error on 100 holdout tasks. The hyperparameters of all three shrinkage algorithm variants are given in \cref{table:sine_hypers}.

We show the learned variance from \shrmaml{}, \shrimaml{} and \shrreptile{} in \cref{fig:sin}(\subref{fig:sin_pll_bp_var},\subref{fig:sin_pll_var},\subref{fig:sin_map_var}) respectively. In all experiments, we observe that the learned variances $\sigma_m^2$ for the first 2 modules ($b_0, w_0$) are significantly larger than the rest. 
This implies that our method discovers that these modules are task-specific and should change during adaptation whereas the other modules are task-independent. 

To confirm that the learned variances correspond to task-specificity, we adapt one layer at a time on holdout tasks and keep the other layers fixed. \cref{fig:sin} shows that adapting only the discovered first layer results in both low error and accurately reconstructed sinusoids, whereas adapting the other modules does not.

\begin{table*}[tbh!]
    \caption{Hyperparameters for the few-shot sinusoid regression experiment. Chosen with random search on validation task set.} 
    \label{table:sine_hypers}
    \begin{center}
        \begin{tabular}{lccc}
            \toprule
            ~  & \shrmaml &  \shrreptile{} & \shrimaml{} \\
            \midrule
            Meta-training \\
            \midrule
            ~~Meta optimizer & Adam & Adam & Adam \\
            ~~Meta learning rate ($\vmetav$) &  9.8e-4 & 3.0e-3 & 5.7e-3  \\
            ~~Meta learning rate ($\log \vsigma^2$) &  4.4e-3 & 1.4e-4 & 1.8e-3  \\
            ~~Meta training steps & 100k & 100k & 100k \\
            ~~Meta batch size (\# tasks)  & 5 & 5 & 5 \\
            ~~Damping coefficient & - & 6e-3 & 0.5 \\
            ~~Conjugate gradient steps & - & 7 & 1 \\
            \midrule
                        \multicolumn{3}{l}{Task adaptation
                        } \\
            \midrule
            ~~Task optimizer & ProximalGD & ProximalGD & ProximalGD \\
            ~~Task learning rate & 8.3e-4 & 1.4e-4 & 3.9e-4 \\
            ~~Task adaptation steps & 68 & 100 & 100 \\
            ~~Task batch size (\# data points) & 10 & 10 & 10 \\
            \bottomrule
        \end{tabular}
    \end{center}
\end{table*}

\begin{figure*}[tbh!]
\centering
\begin{subfigure}{.3\textwidth}
\includegraphics[width=\textwidth]{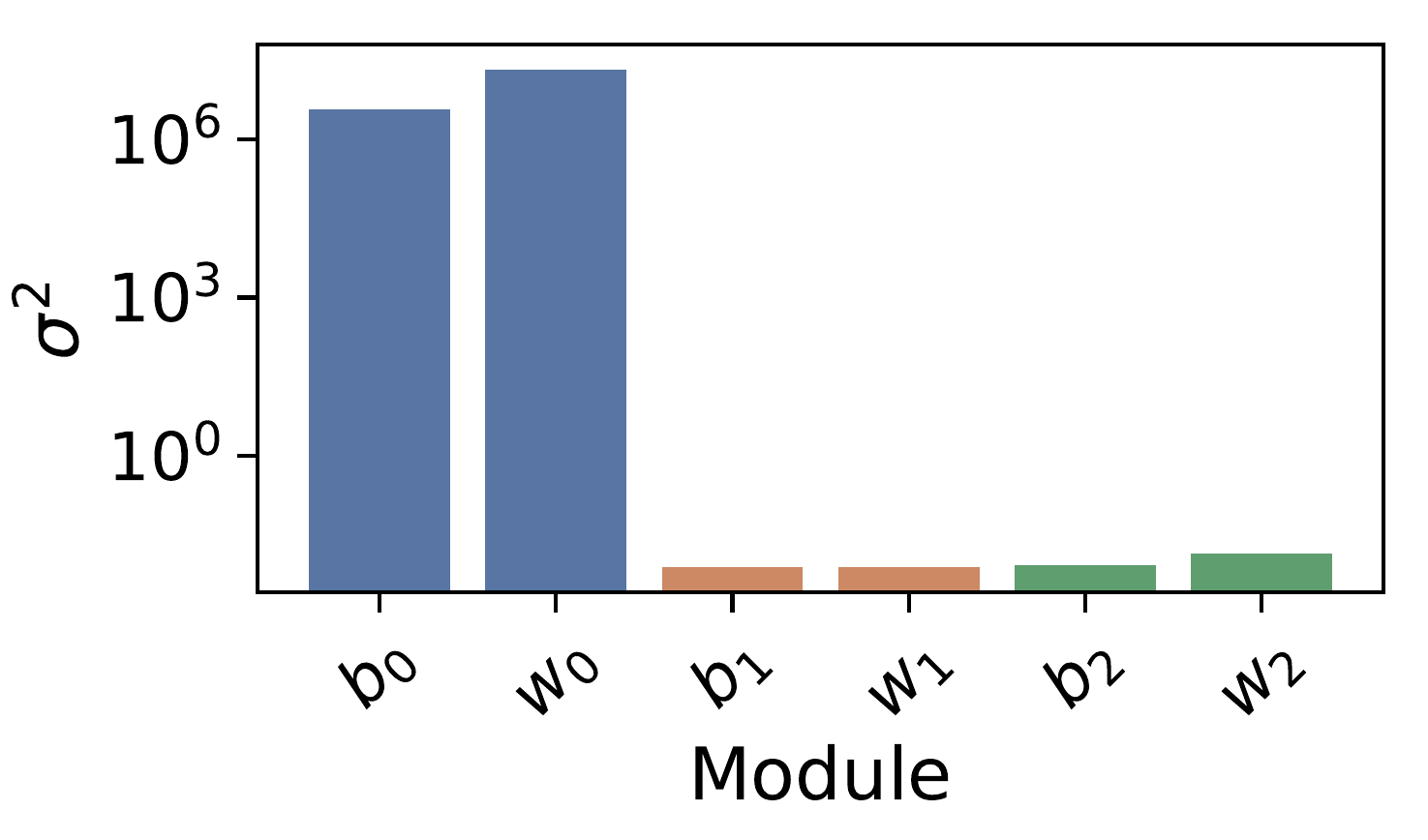}
\caption{Learned $\vsigma$ for each module (\shrmaml{}).}
\label{fig:sin_pll_bp_var}
\end{subfigure}
~
\begin{subfigure}{.3\textwidth}
\includegraphics[width=\textwidth]{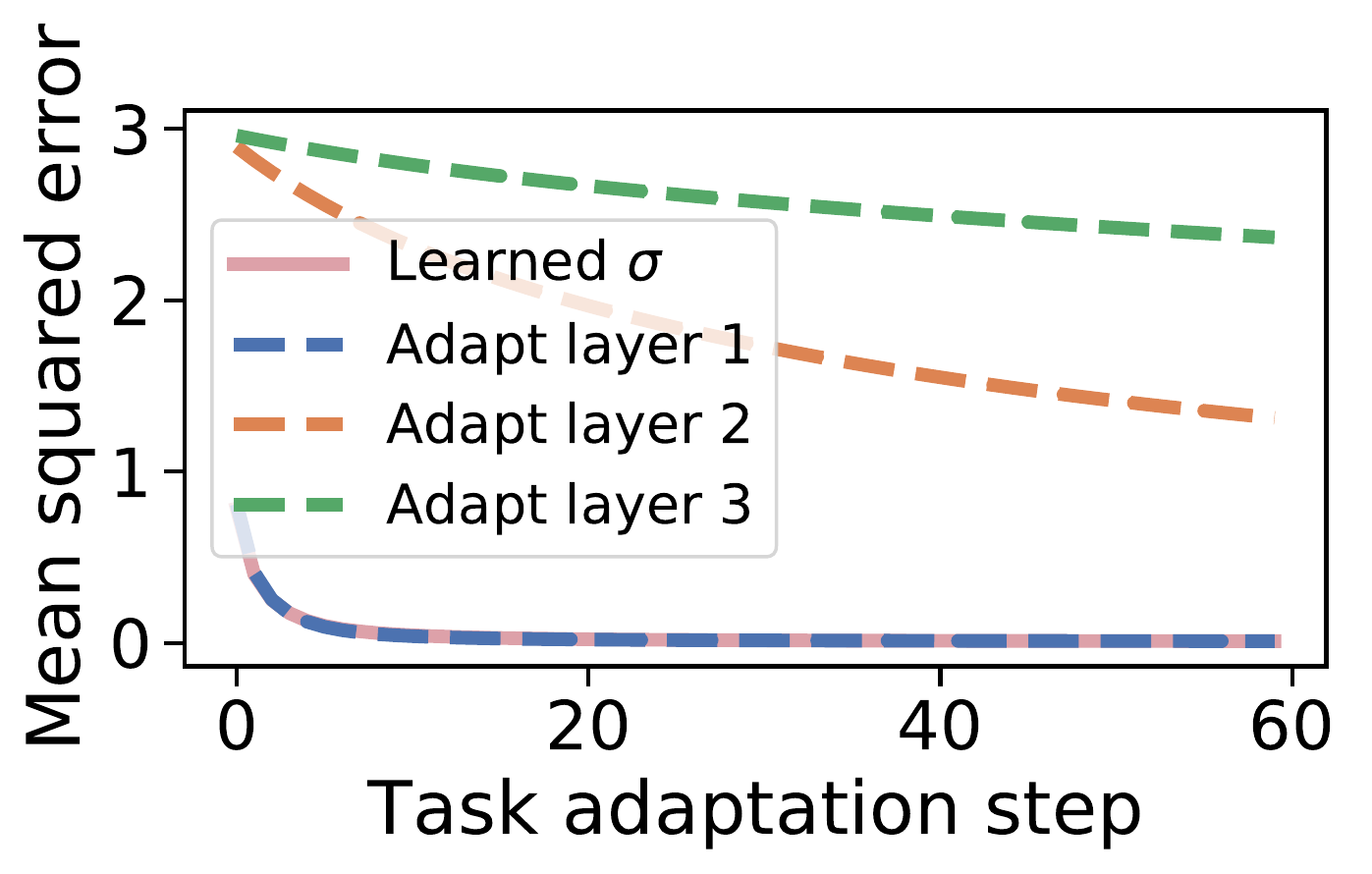}
\caption{MSE under different adaptation schemes.}
\label{fig:sin_pll_bp_loss}
\end{subfigure}
~
\begin{subfigure}{.3\textwidth}
\includegraphics[width=\textwidth]{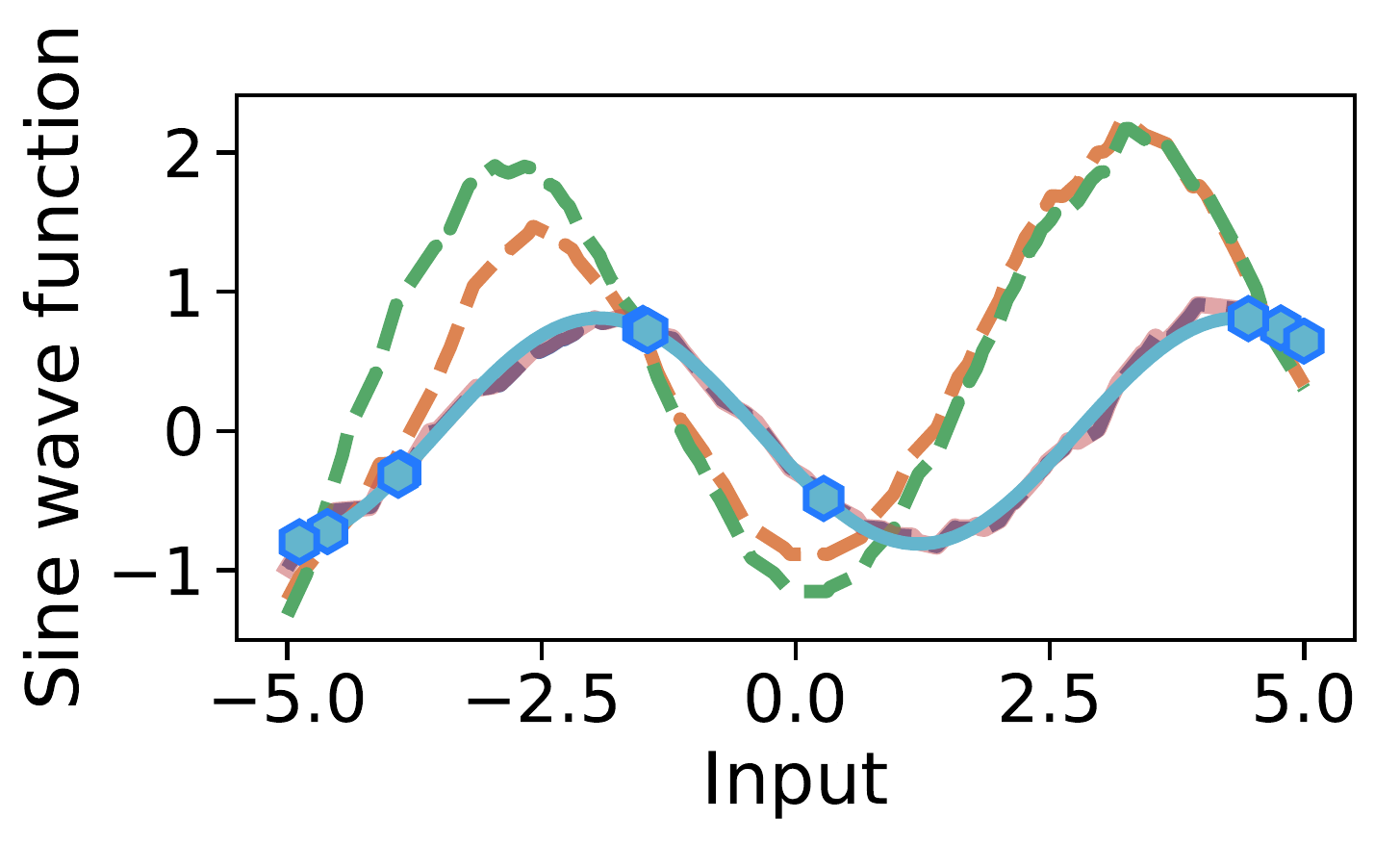}
\caption{Prediction after 60 steps of adaptation.}
\label{fig:sin_pll_bp_pred}
\end{subfigure}
\\
\begin{subfigure}{.3\textwidth}
\includegraphics[width=\textwidth]{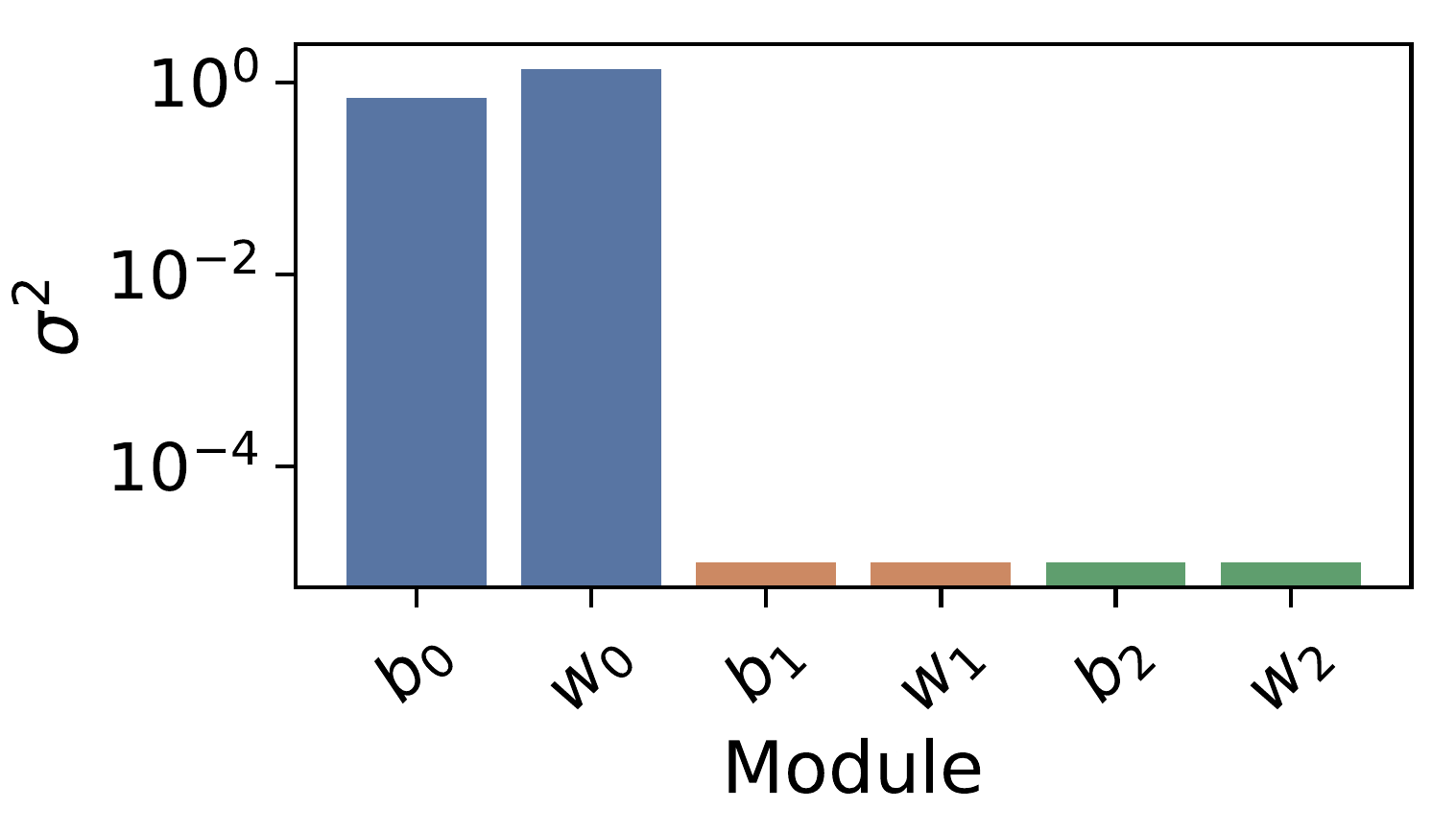}
\caption{Learned $\vsigma$ for each module (\shrimaml{}).}
\label{fig:sin_pll_var}
\end{subfigure}
~
\begin{subfigure}{.3\textwidth}
\includegraphics[width=\textwidth]{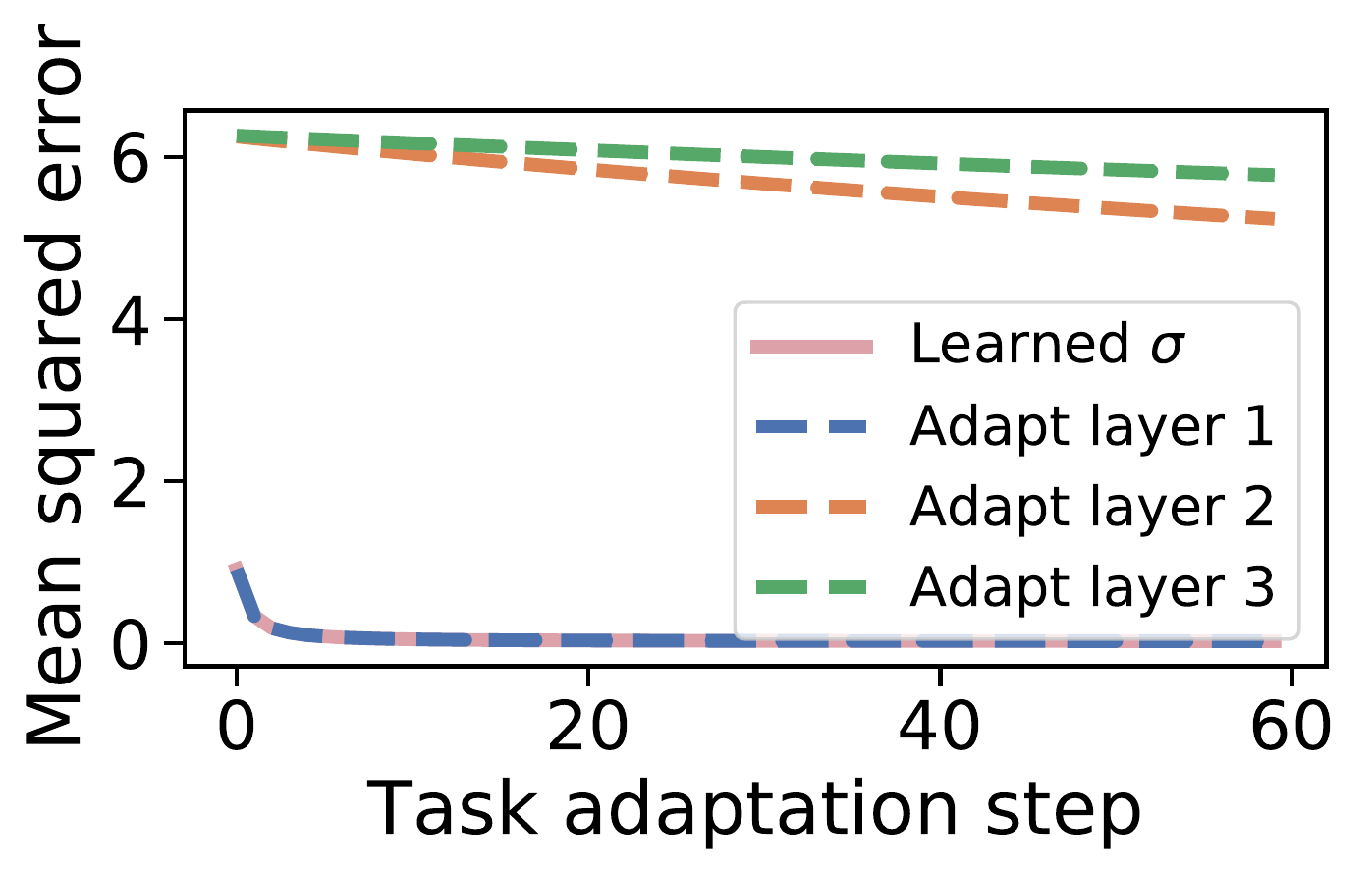}
\caption{MSE under different adaptation schemes.}
\label{fig:sin_pll_loss}
\end{subfigure}
~
\begin{subfigure}{.3\textwidth}
\includegraphics[width=\textwidth]{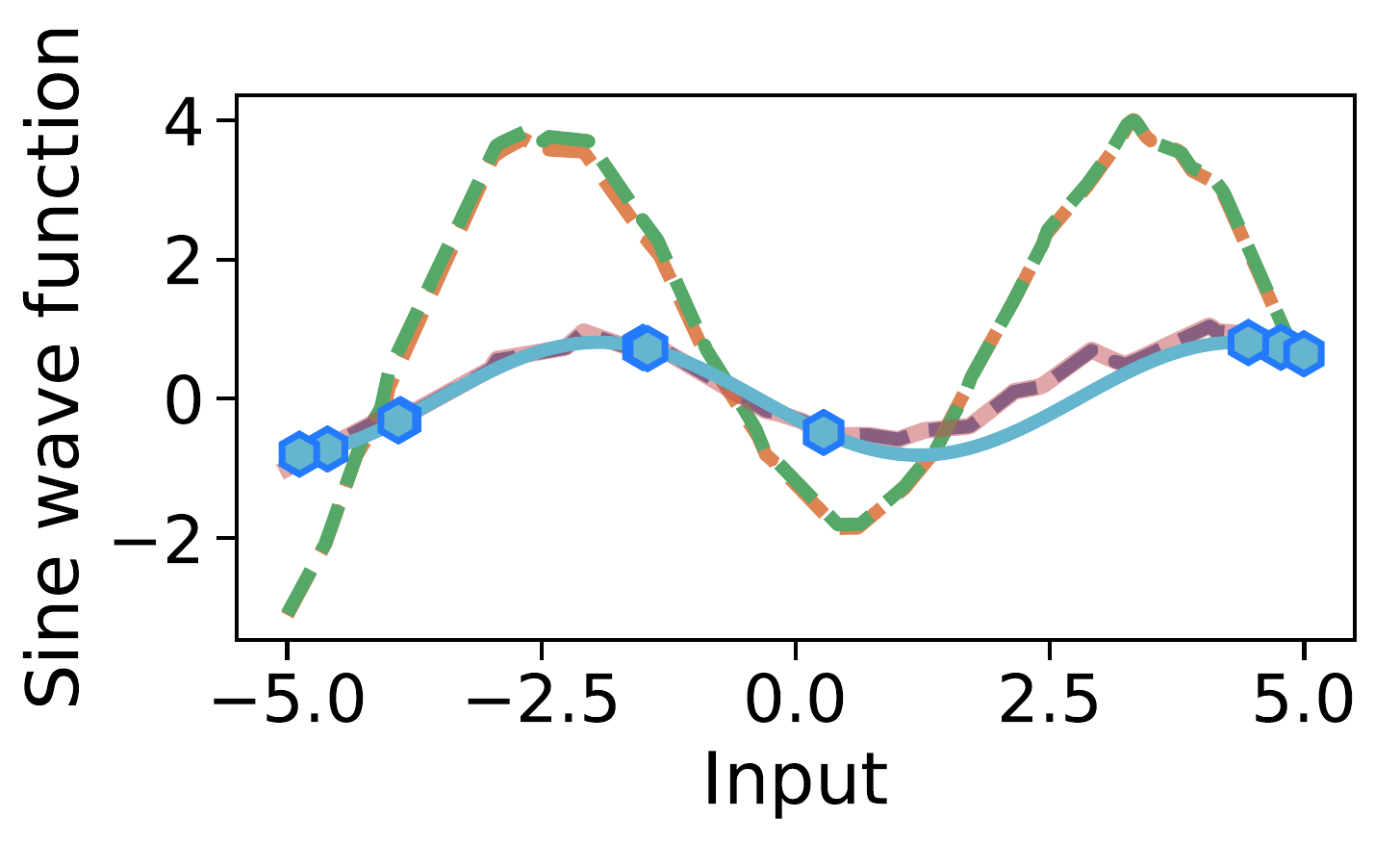}
\caption{Prediction after 60 steps of adaptation.}
\label{fig:sin_pll_pred}
\end{subfigure}
\\
\begin{subfigure}{.3\textwidth}
\includegraphics[width=\textwidth]{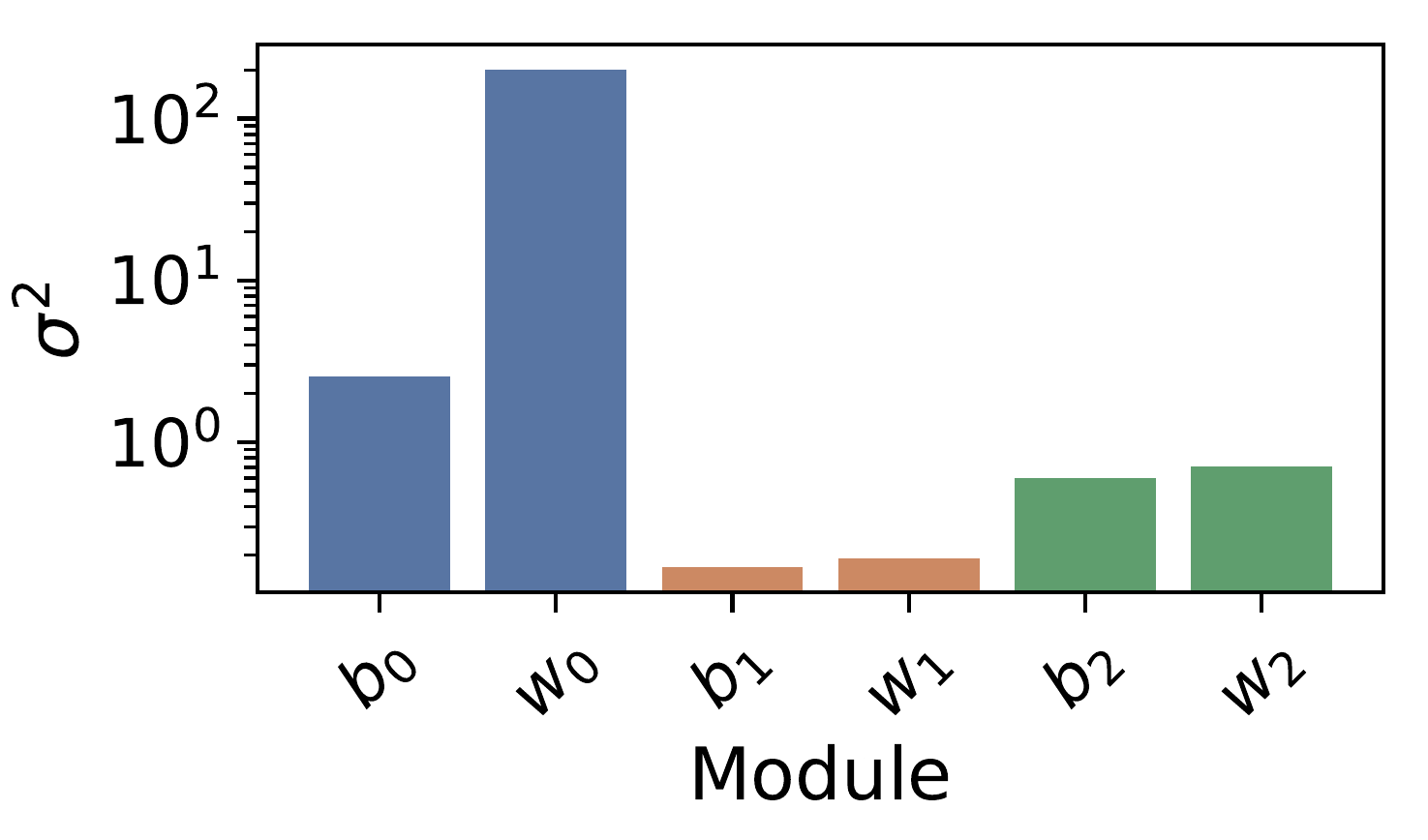}
\caption{Learned $\sigma$ for each module (\shrreptile{}).}
\label{fig:sin_map_var}
\end{subfigure}
~
\begin{subfigure}{.3\textwidth}
\includegraphics[width=\textwidth]{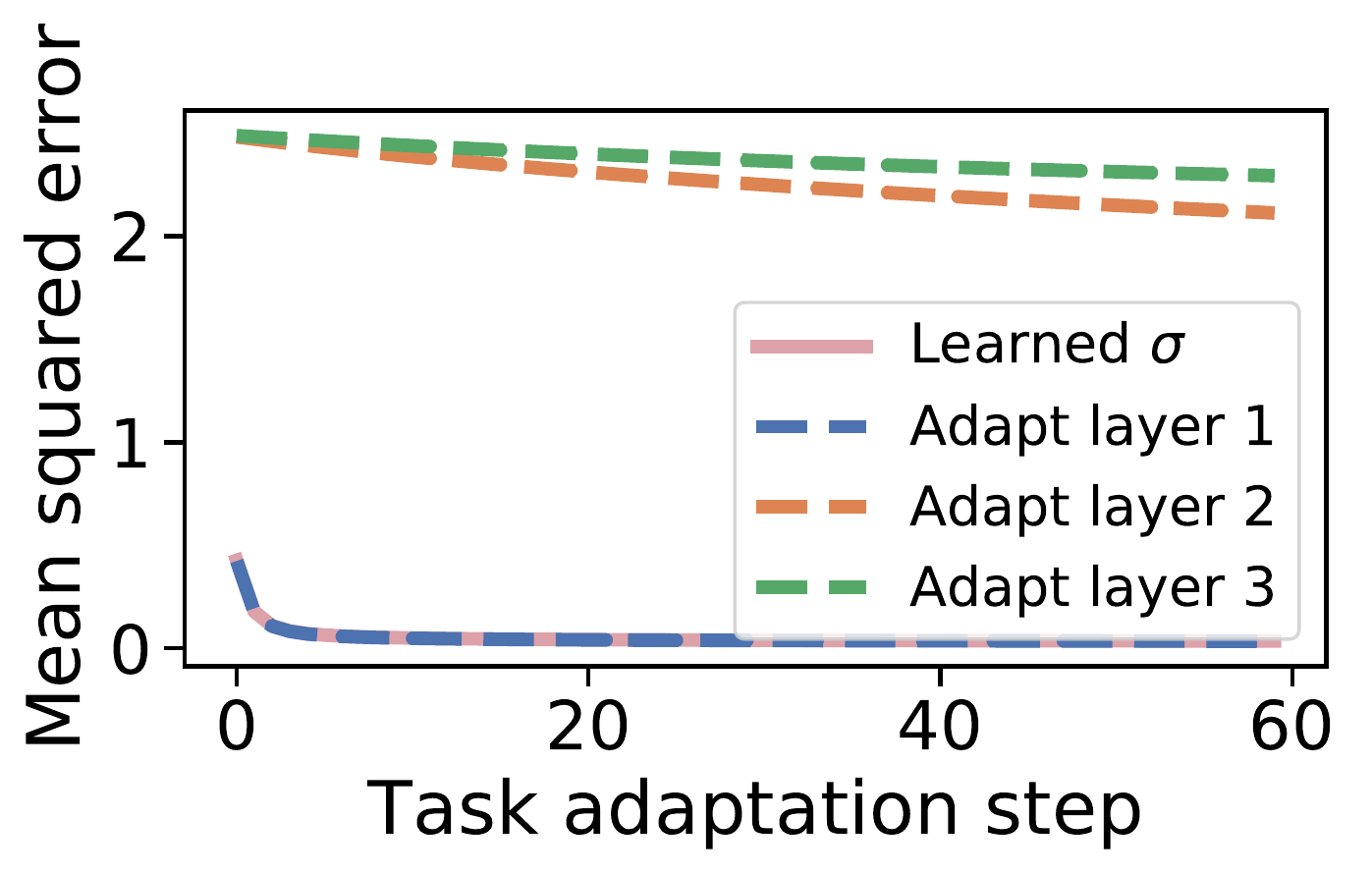}
\caption{MSE under different adaptation schemes.}
\label{fig:sin_map_loss}
\end{subfigure}
~
\begin{subfigure}{.3\textwidth}
\includegraphics[width=\textwidth]{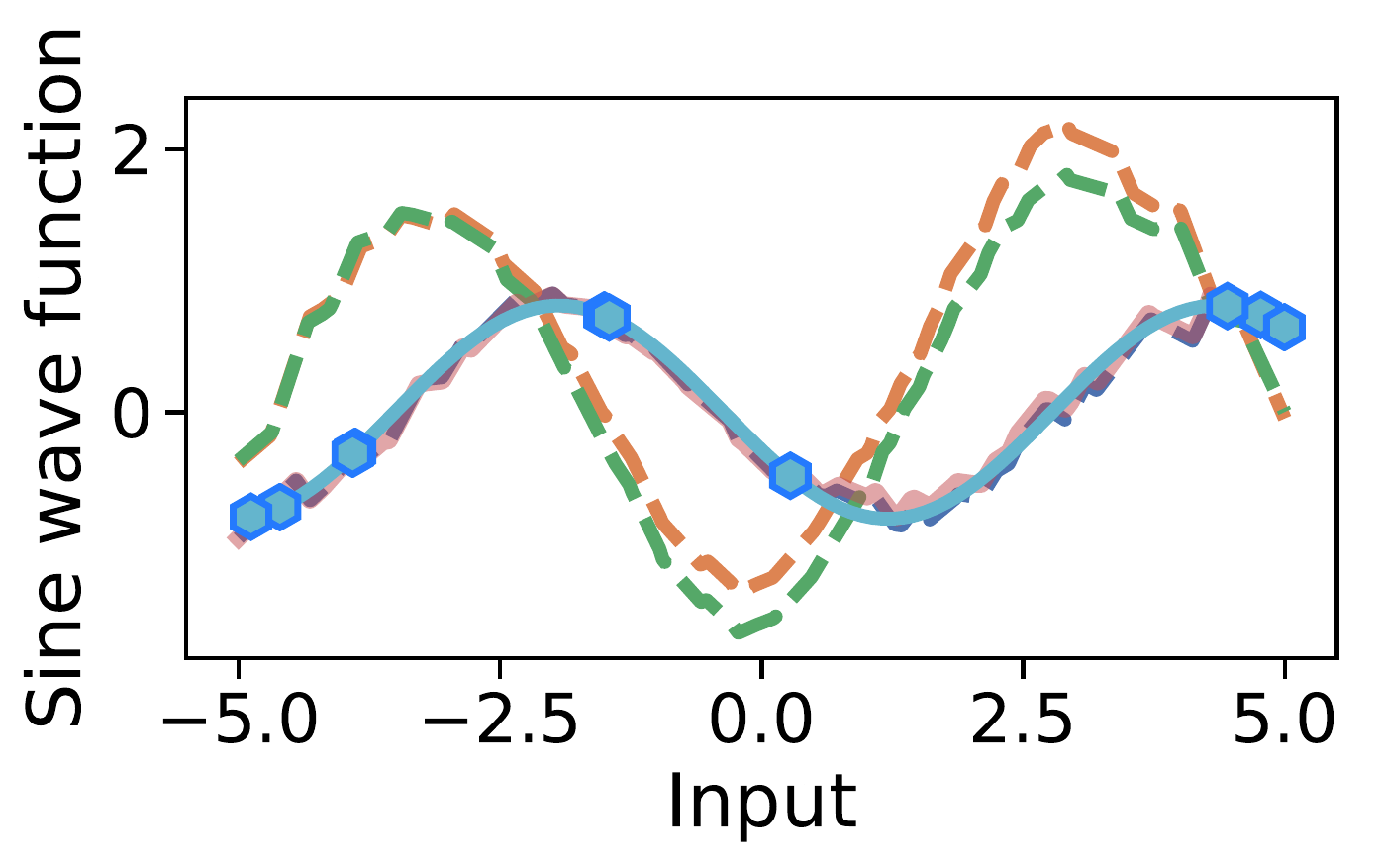}
\caption{Prediction after 60 steps of adaptation.}
\label{fig:sin_map_pred}
\end{subfigure}
\caption{Sinusoid regression with \shrmaml{} (top row), \shrimaml{} (middle row), and \shrreptile{} (bottom row). 
In the left column (\subref{fig:sin_pll_bp_var},\subref{fig:sin_pll_var},\subref{fig:sin_map_var}) we show the learned $\sigma_m$ for each module. 
In the middle column (\subref{fig:sin_pll_bp_loss},\subref{fig:sin_pll_loss},\subref{fig:sin_map_loss}) we show the mean squared error, averaged over 100 tasks, as a function of task adaptation step, while adapting only a single module (the dashed lines) or using the learned $\vsigma$ (pink).
Finally, the right column (\subref{fig:sin_pll_bp_pred},\subref{fig:sin_pll_pred},\subref{fig:sin_map_pred}) shows predictions under each adaptation scheme. Note that the model trained using the learned $\vsigma$ (pink) overlaps with the model with the first layer adapted (dark blue) in (\subref{fig:sin_pll_bp_loss}--\subref{fig:sin_pll_bp_pred},\subref{fig:sin_pll_loss}--\subref{fig:sin_pll_pred},\subref{fig:sin_map_loss}--\subref{fig:sin_map_pred}).}
\label{fig:sin}
\end{figure*}

\subsection{Additional short adaptation experiment: few-shot image classification}

We next look at two standard benchmarks for few-shot image classification, Omniglot and \textit{mini}ImageNet, which perform task adaptation in meta-training for up to 20 steps.

\subsubsection{Few-shot Omniglot}
\label{sec:apx_fewshot_omniglot}

The Omniglot dataset~\citep{lake2015omniglot,santoro2016meta} consists of 20 samples of 1623 characters from 50 different alphabets. 
The dataset is augmented by creating new characters that are rotations of each of the existing characters
by $0, 90, 180,$ or $270$ degrees.
We follow the standard $N$-way $K$-shot classification setting where a task is generated by randomly sampling $N$ characters and training the model on $K$ instances of each ~\citep{santoro2016meta,vinyals2016matching,ravi2017optimization}. 
Character classes are partitioned into separate meta-train and meta-test splits, and the instances (images) of each character are also split into separate (task) train and (task) validation subsets. 

We use the same $4$-block convolutional architecture as in \citet{Finn2017}.
This architecture consists of $4$ convolutional blocks,
each made up of a $3 \times 3$ convolutional layer with $64$ filters and stride $2$, a batch-normalization layer~\citep{ioffe2015batchnorm},
and a ReLU activation, in that order. The output of the final
convolutional block is fed into an output linear layer and a softmax, and trained with the
cross-entropy loss.
All images are downsampled to $28 \times 28$.

Hyperparameters for the six algorithms evaluated in this experiment are presented in \cref{table:omni_hypers}. Module discovery and classification performance for this dataset are discussed below.

\begin{table*}[htb]
    \caption{Hyperparameters for the few-shot Omniglot classification. Chosen with random search on the validation task set.} 
    \label{table:omni_hypers}
    \begin{center}
    \resizebox{\textwidth}{!}{        \begin{tabular}{lcccccc}
            \toprule
            ~  & MAML & Reptile  & iMAML & \shrmaml &  \shrreptile{} & \shrimaml{} \\
            \midrule
            Meta-training \\
            \midrule
            ~~Meta optimizer & Adam & SGD & Adam & Adam & SGD & Adam \\
            ~~Meta learning rate ($\vmetav$) & 4.8e-3 & 1.8 & 1e-3 & 1.7e-4 & 1.6 & 7.8e-3  \\
            ~~Meta learning rate ($\log \vsigma^2$) & - & - & - & 7.4e-3 & 7e-4 & 4e-3  \\
            ~~Meta training steps & 60k & 100k & 60k & 60k & 100k & 60K \\
            ~~Meta batch size (\# tasks) & 32 & 5 & 32 & 32 & 5 & 32 \\
            ~~Damping coefficient & - & - & 1.0 & - & 6e-3 & 0.18, \\
            ~~Conjugate gradient steps & - & - & 4 & - & 1 & 2 \\
            \midrule
                                    \multicolumn{7}{l}{Task adaptation (adaptation step and batch size for meta-test are in parentheses)} \\
            \midrule
            ~~Task optimizer & SGD & Adam & ProximalGD & ProximalGD & Adam & ProximalGD \\
            ~~Task learning rate & 0.968 & 8e-4 & 0.23 & 0.9 & 6e-4 & 1.3 \\
            ~~Task adaptation steps & 1 (50) & 5 (50) & 19 (50) & 3 (50) & 8 (50) & 8 (50) \\
            ~~Task batch size (\# images) & 5 (5) & 10 (5)  & 5 (5) & 5 (5) & 10 (5) & 5 (5) \\
            \bottomrule
        \end{tabular}
    }
    \end{center}
\end{table*}

\subsubsection{\textit{mini}ImageNet}
\label{sec:apx_miniimagenet}

The \textit{mini}ImageNet dataset~\citep{vinyals2016matching,ravi2017optimization} 
is, as the name implies, a smaller and easier many-task, few-shot variant of the ImageNet dataset.
However, its images are larger and more challenging than those of Omniglot.
\textit{mini}ImageNet consists of $100$ classes ($64$ train, $12$ validation, and $24$ test)
with images downsampled to $84 \times 84$. We follow the standard \textit{mini}ImageNet protocol
and train in the $N$-way $K$-shot paradigm using the same $4$-block convolutional architecture
as in previous work~\citep{vinyals2016matching,Finn2017}.
Each convolutional block consists of a $3 \times 3$ convolutional layer with $32$ filters,
a batch-normalization layer~\citep{ioffe2015batchnorm}, and a ReLU activation, in that order.
As in Omniglot, the output of the final convolutional block is fed into an output linear layer
and a softmax, and trained with the cross-entropy loss.

Hyperparameters for each algorithm on \textit{mini}ImageNet are presented in \cref{table:mini_hypers}. For iMAML, we use $\lambda = 0.14$.
Module discovery and classification performance for this dataset are discussed below.

\begin{table*}[htb]
    \caption{Hyperparameters for the few-shot \textit{mini}ImageNet classification experiment. Chosen with random search on validation task set.} 
    \label{table:mini_hypers}
    \begin{center}
    \resizebox{\textwidth}{!}{        \begin{tabular}{lcccccc}
            \toprule
            ~  & MAML & Reptile  & iMAML & \shrmaml &  \shrreptile{} & \shrimaml{} \\
            \midrule
            Meta-training \\
            \midrule
            ~~Meta optimizer & Adam & SGD & Adam & Adam & Adam & Adam \\
            ~~Meta learning rate ($\vmetav$) & 1e-3 & 0.45 & 2.5e-4 & 1e-3 & 6e-4 & 2e-4  \\
            ~~Meta learning rate ($\log \vsigma^2$) & - & - & - & 5e-2 & 2.5e-2 & 0.2  \\
            ~~Meta training steps & 60k & 100k & 60k & 60k & 100k & 60k \\
            ~~Meta batch size (\# tasks) & 4 & 5 & 4 & 5 & 3 & 5 \\
            ~~Damping coefficient & - & - & 6.5e-2 & - & 1e-2 & 2e-3 \\
            ~~Conjugate gradient steps & - & - & 5 & - & 3 & 2 \\
            \midrule
                                    \multicolumn{7}{l}{Task adaptation (adaptation step and batch size for meta-test are in parentheses)} \\
            \midrule
            ~~Task optimizer & SGD & Adam & ProximalGD & ProximalGD & ProximalGD & ProximalGD \\
            ~~Task learning rate & 1e-2 & 1.5e-3 & 4e-2 & 0.1 & 1.35 & 1.6 \\
            ~~Task adaptation steps & 5 (50) & 5 (50) & 17 (50) & 5 (50) & 7 (50) & 1 (50) \\
            ~~Task batch size (\# images) & 5 (5) & 10 (5) & 5 (5) & 5 (5) & 10 (5) & 5 (5) \\
            \bottomrule
        \end{tabular}
    }
    \end{center}
\end{table*}

\begin{figure*}[tbh!]
\centering
\begin{minipage}{1.0\textwidth}
\centering
\begin{subfigure}{.3\textwidth}
\includegraphics[width=\textwidth]{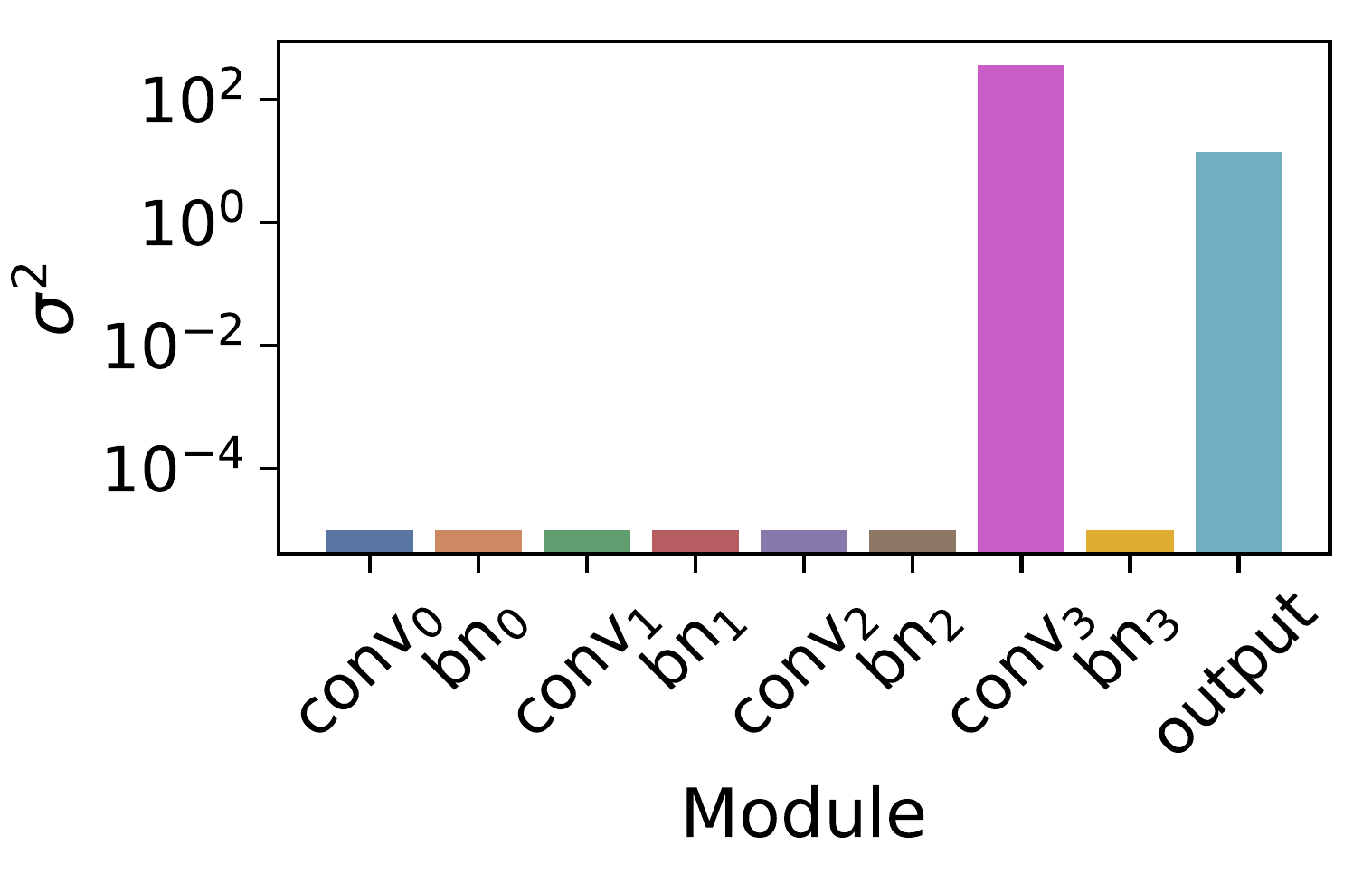}
\caption{\hspace{-0.1em}Learned $\sigma^2$ for \shrmaml{}.}
\label{fig:omniglot_shrinkage_maml_var_appendix}
\end{subfigure}
~
\begin{subfigure}{.3\textwidth}
\includegraphics[width=\textwidth]{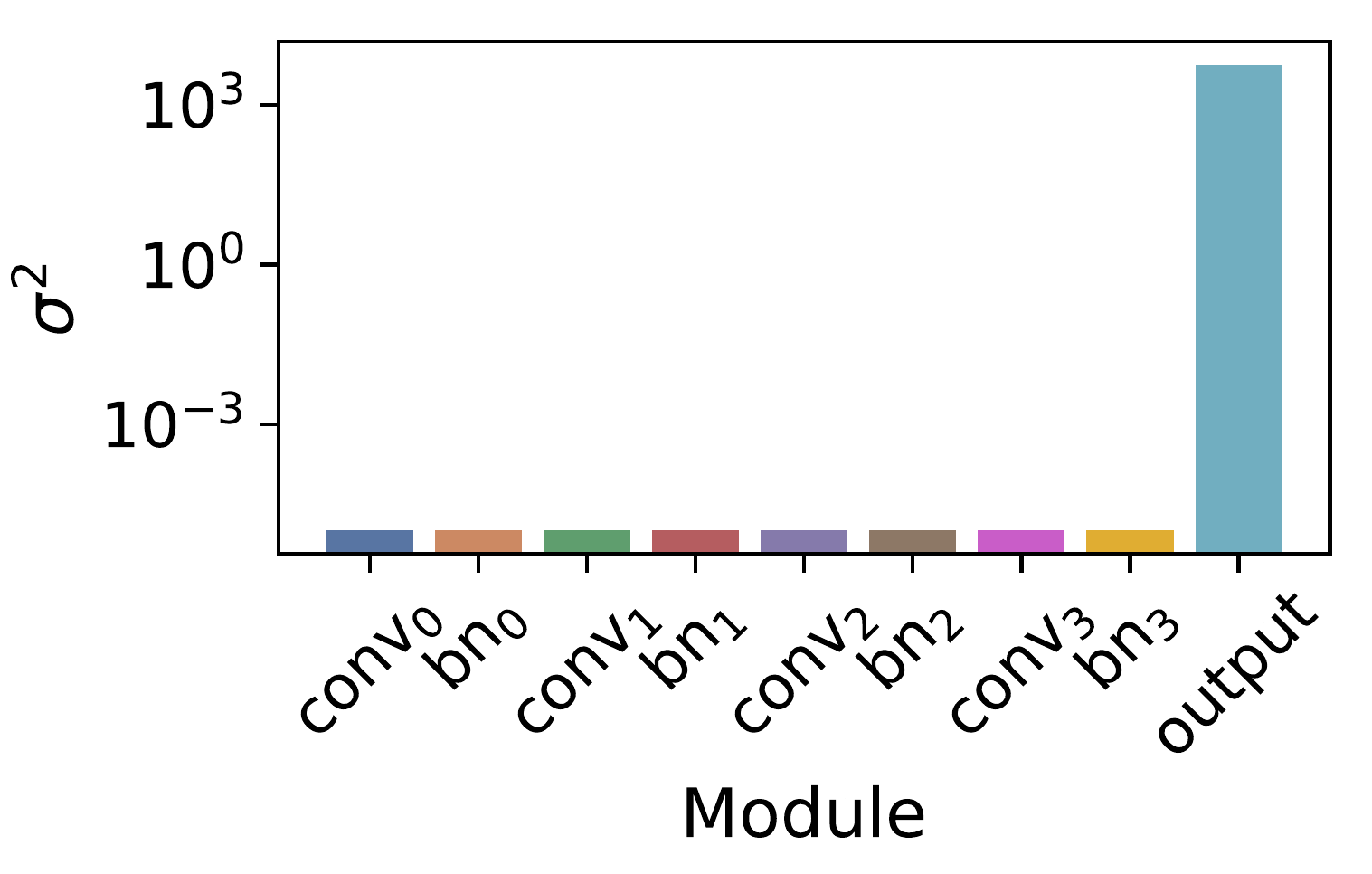}
\caption{Learned $\sigma^2$, \shrimaml{}.}
\label{fig:omniglot_shrinkage_imaml_var_appendix}
\end{subfigure}
~
\begin{subfigure}{.3\textwidth}
\includegraphics[width=\textwidth]{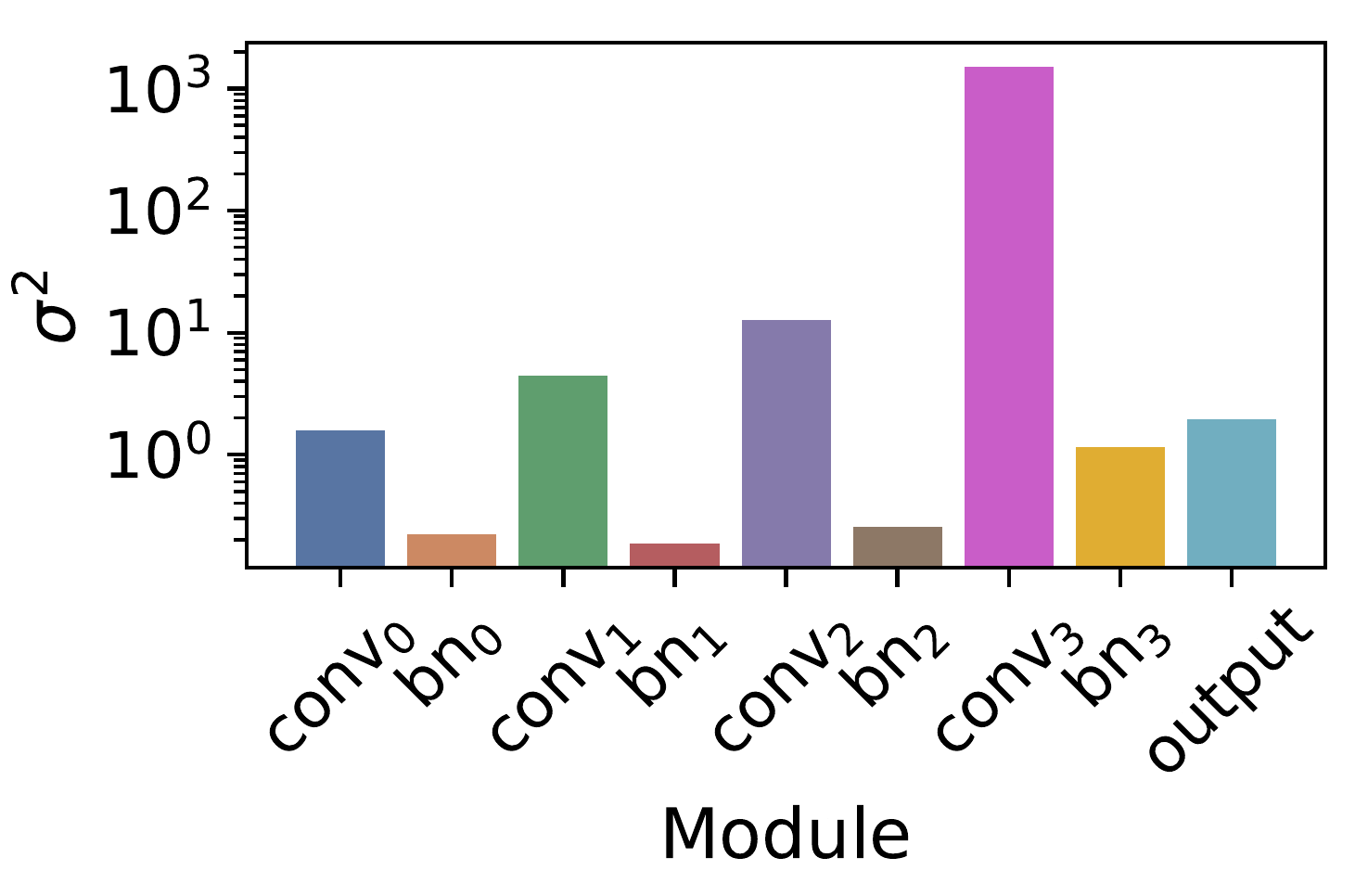}
\caption{Learned $\sigma^2$ for \shrreptile{}.}
\label{fig:omniglot_shrinkage_reotile_var_appendix}
\end{subfigure}
\\
\begin{subfigure}{.3\textwidth}
\includegraphics[width=\textwidth]{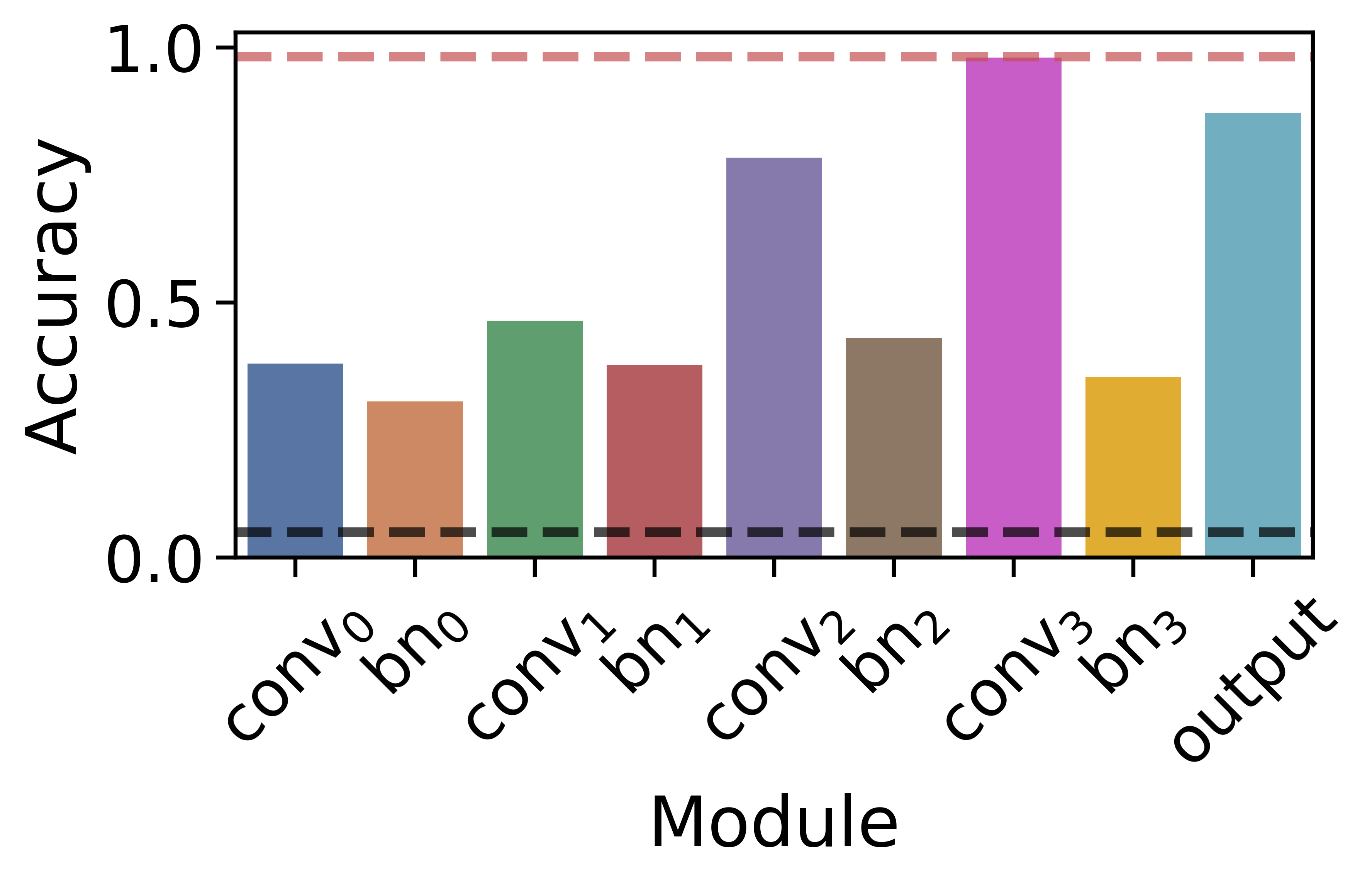}
\caption{Accuracy for \shrmaml{}.}
\label{fig:omniglot_shrinkage_maml_acc_appendix}
\end{subfigure}
~
\begin{subfigure}{.3\textwidth}
\includegraphics[width=\textwidth]{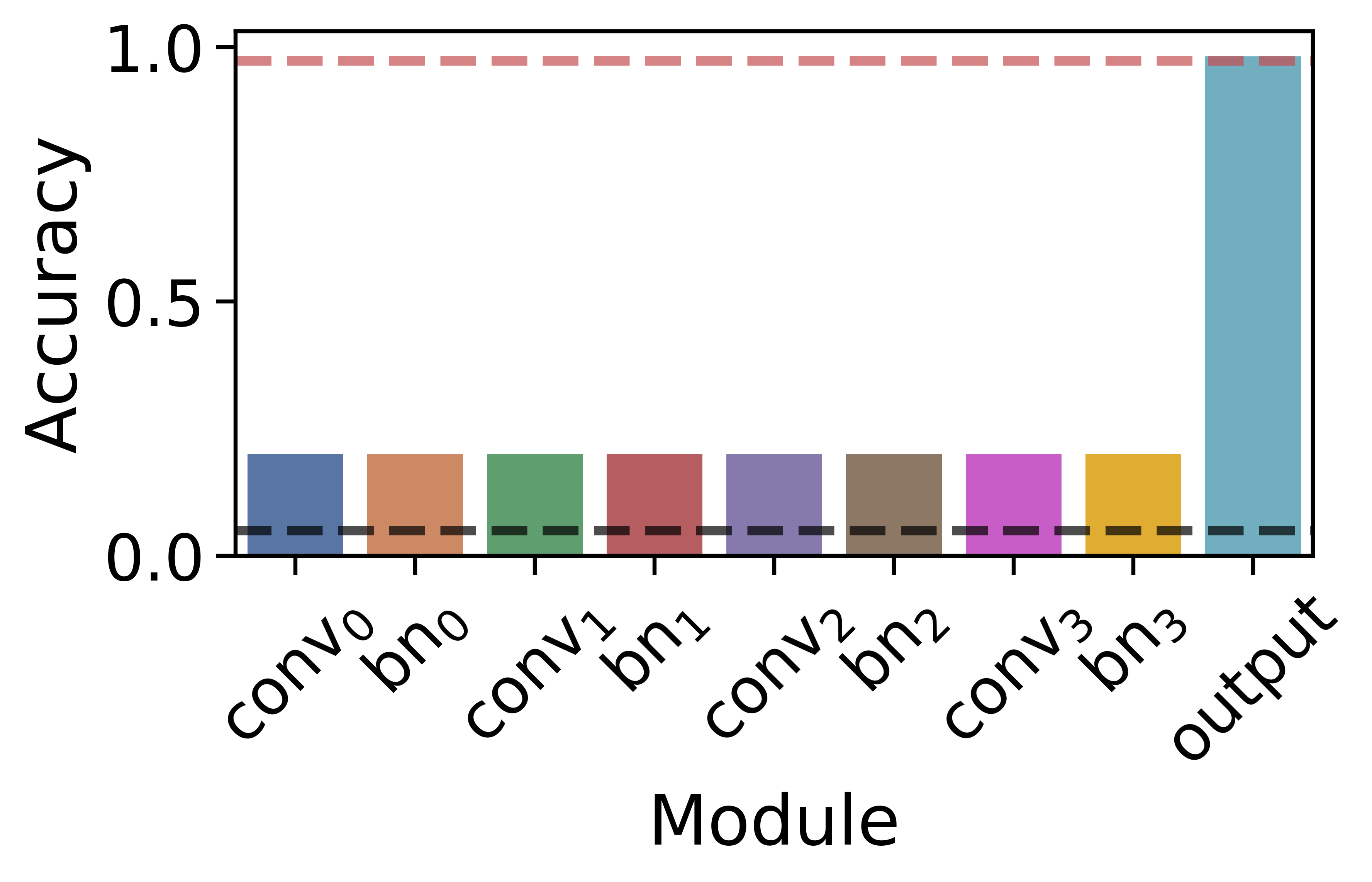}
\caption{Accuracy, \shrimaml{}.}
\label{fig:omniglot_shrinkage_imaml_acc_appendix}
\end{subfigure}
~
\begin{subfigure}{.3\textwidth}
\includegraphics[width=\textwidth]{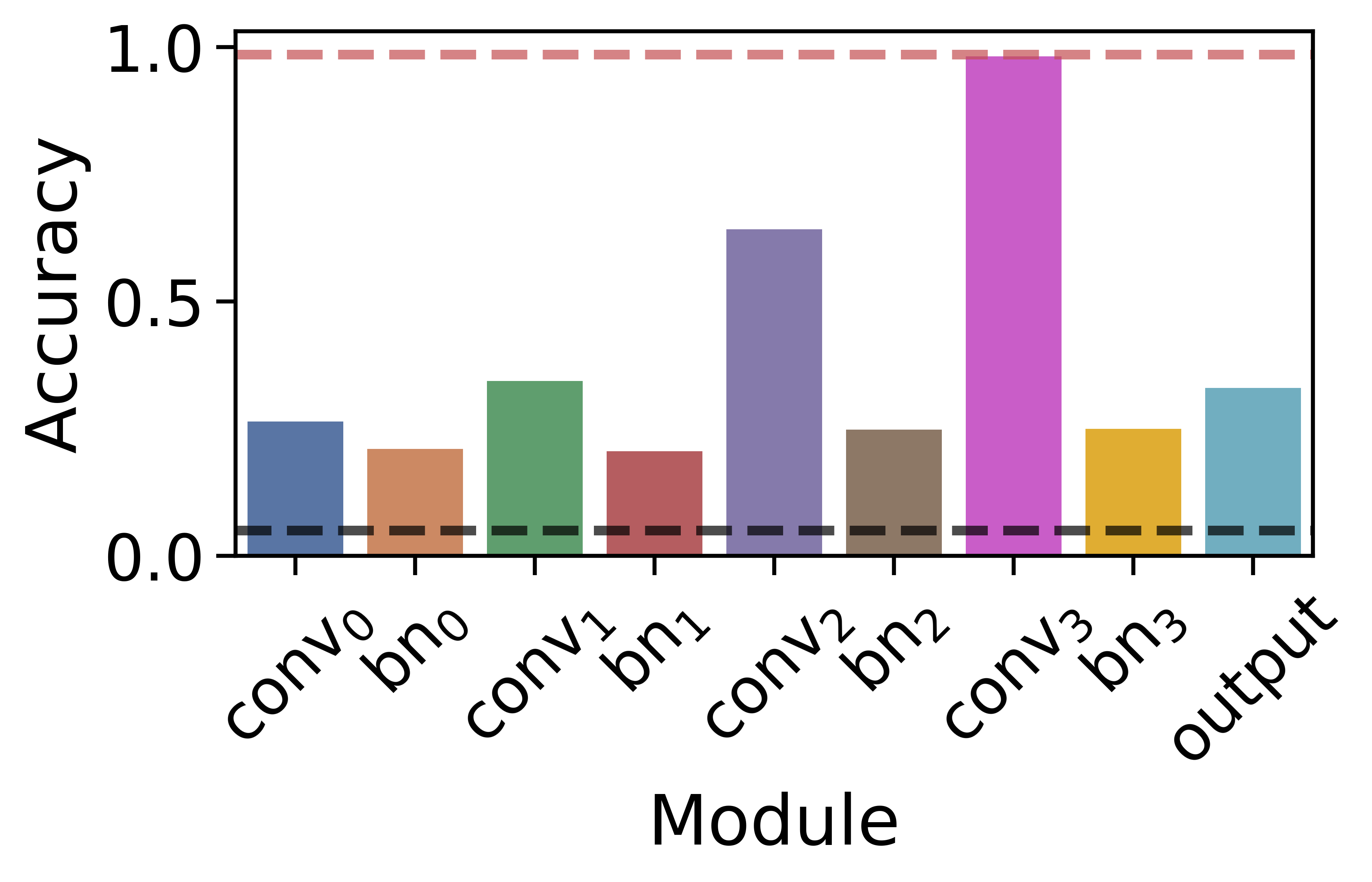}
\caption{Accuracy for \shrreptile{}.}
\label{fig:omniglot_shrinkage_reptile_acc_appendix}
\end{subfigure}
\caption{Learned variances and test accuracies on Omniglot. (a) \& (b) \& (c) show the learned variance per module with \shrmaml{}, \shrimaml{} and \shrreptile{}, respectively. (d) \& (e) \& (f) show the average test accuracy at the end of task adaptation with \shrmaml{}, \shrimaml{} and \shrreptile{}. Each bar shows the accuracy after task adaptation either with all layers frozen except one. Colors map to the colors of (a) \& (b) \& (c). 
$\mathrm{bn}_i$ and $\mathrm{conv}_i$ denote $i$-th  batch normalization layer and convolutional layer, and $\mathrm{output}$ is the linear output layer. The pink dashed line shows the accuracy after adaptation with the learned $\sigma$ and the black dashed line is the chance accuracy.
}
\label{fig:omniglot_appendix}
\end{minipage}
\end{figure*}

\begin{figure*}[bth!]
\centering
\begin{subfigure}{.3\textwidth}
\includegraphics[width=\textwidth]{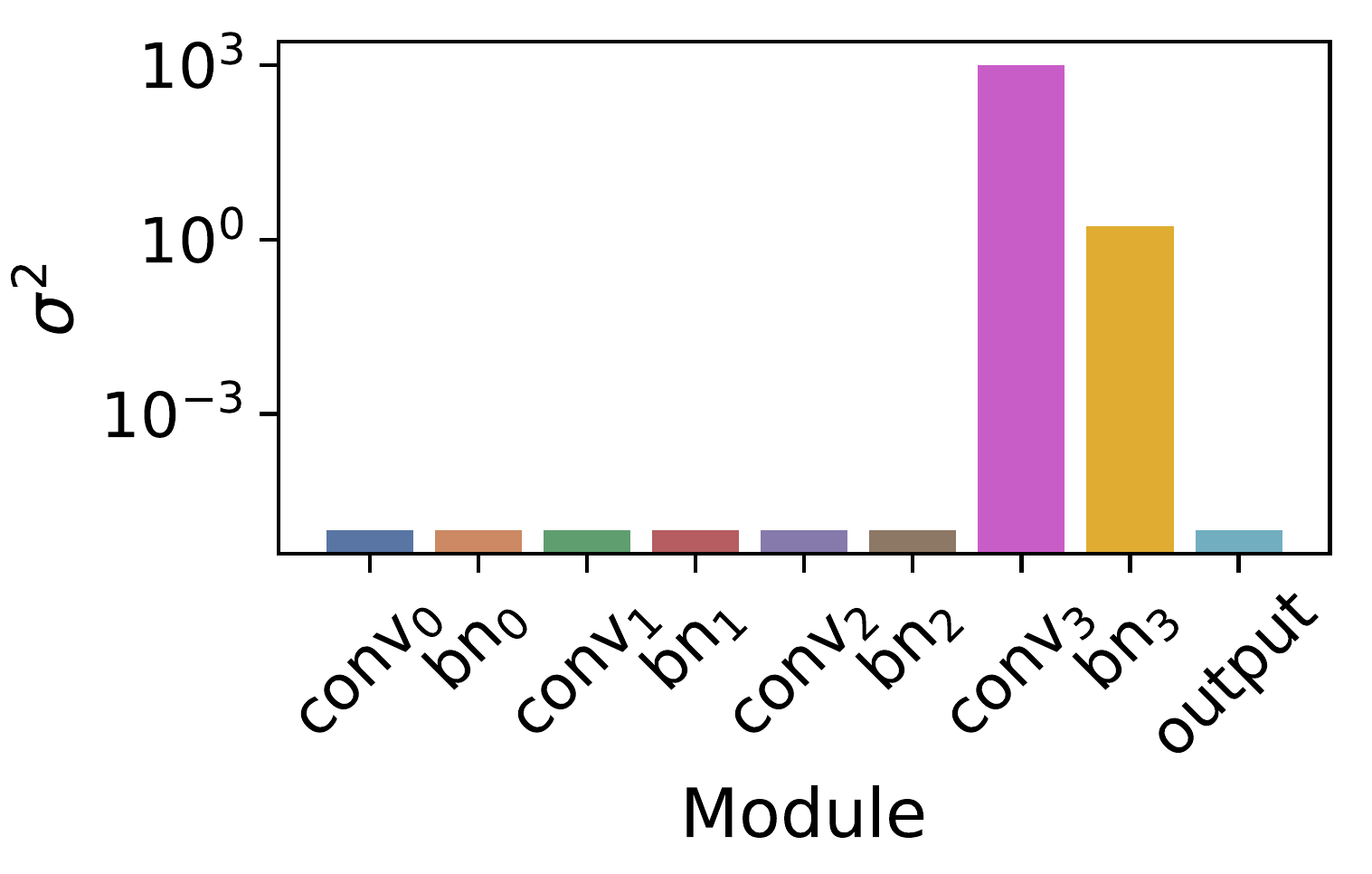}
\caption{\hspace{-0.1em}Learned $\sigma^2$ for \shrmaml{}.}
\label{fig:mini_shrinkage_maml_var_appendix}
\end{subfigure}
~
\begin{subfigure}{.3\textwidth}
\includegraphics[width=\textwidth]{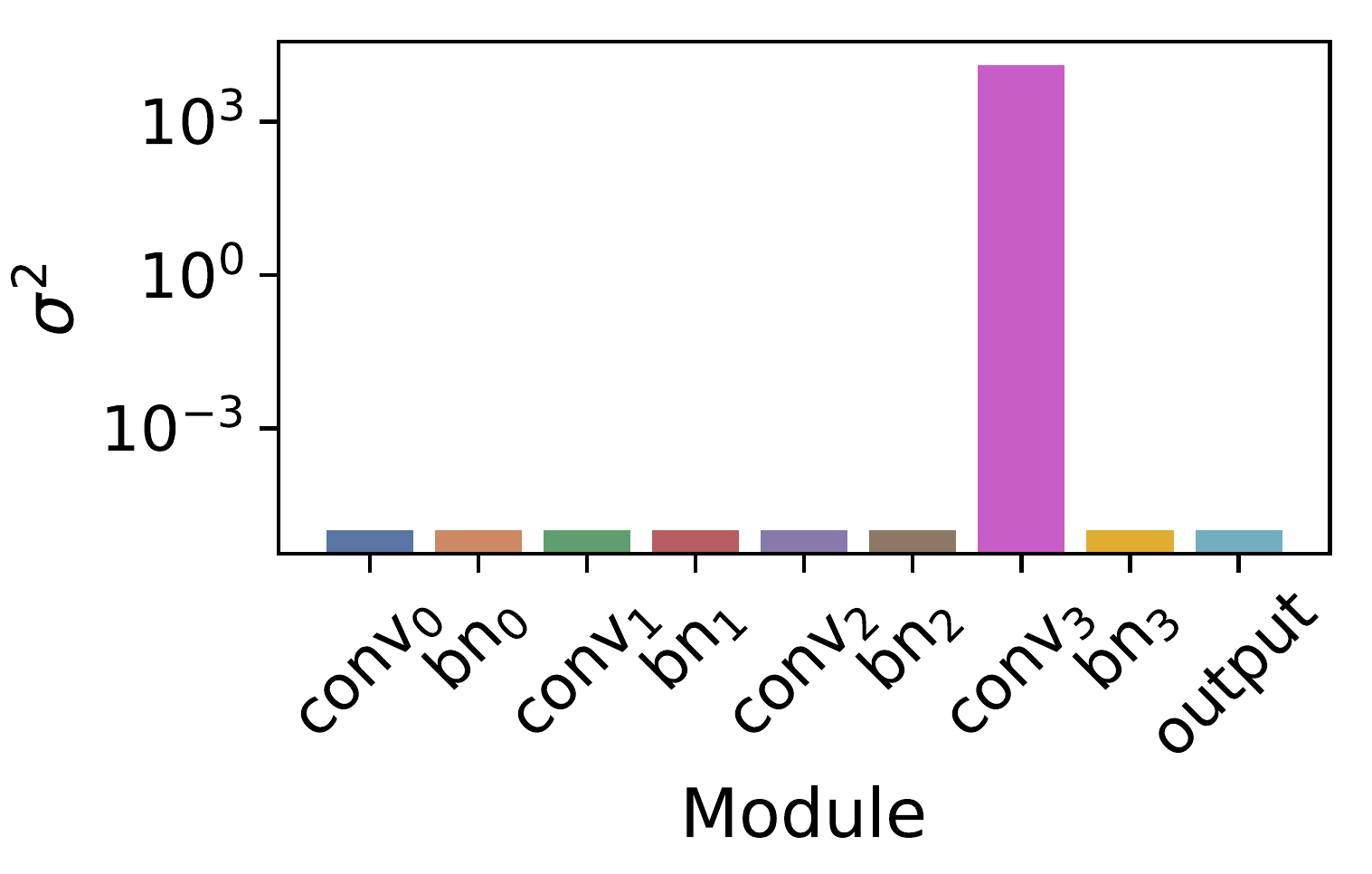}
\caption{Learned $\sigma^2$, \shrimaml{}.}
\label{fig:mini_shrinkage_imaml_var_appendix}
\end{subfigure}
~
\begin{subfigure}{.3\textwidth}
\includegraphics[width=\textwidth]{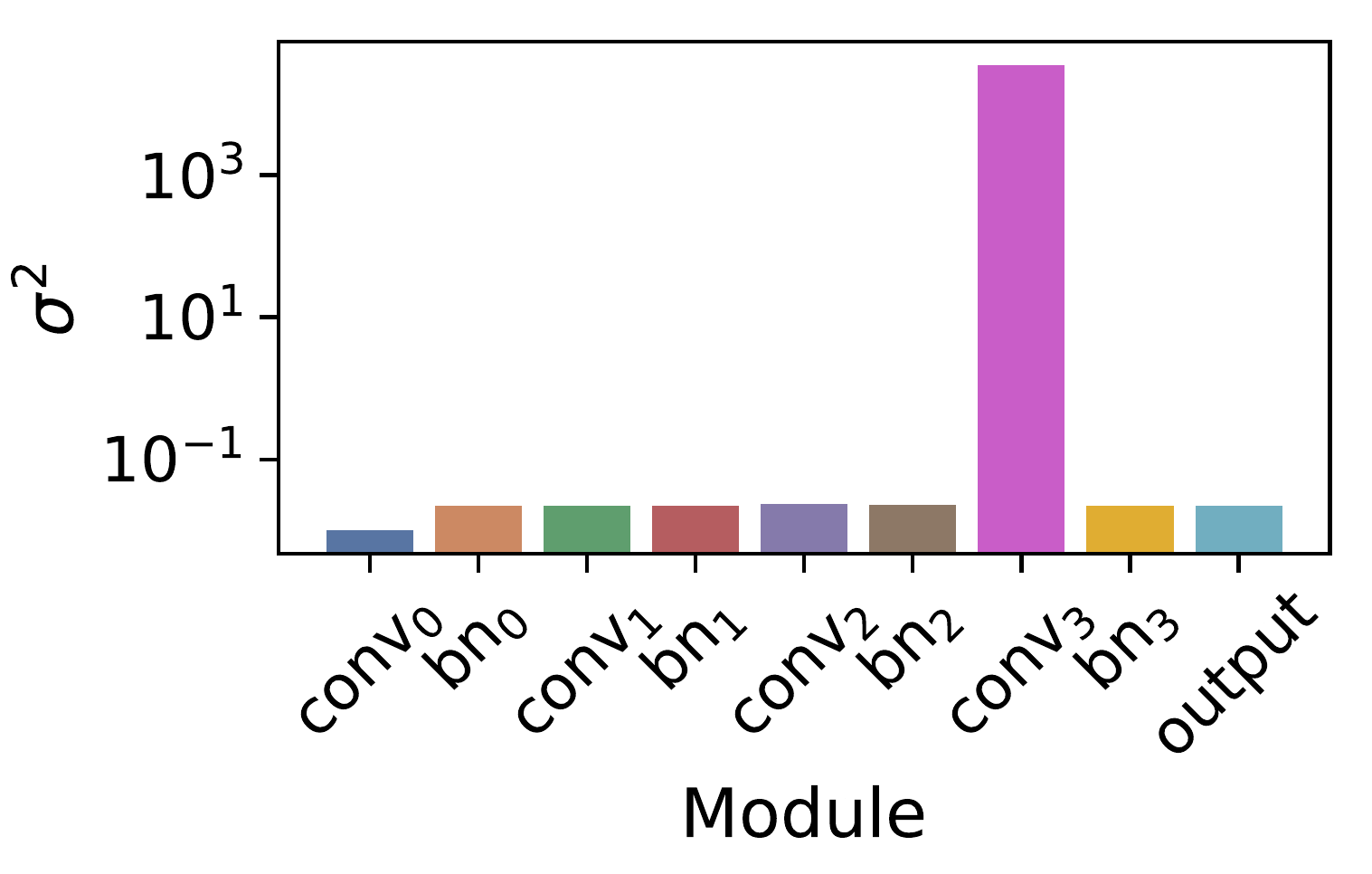}
\caption{Learned $\sigma^2$ for \shrreptile{}.}
\label{fig:mini_shrinkage_reotile_var_appendix}
\end{subfigure}
\\
\begin{subfigure}{.3\textwidth}
\includegraphics[width=\textwidth]{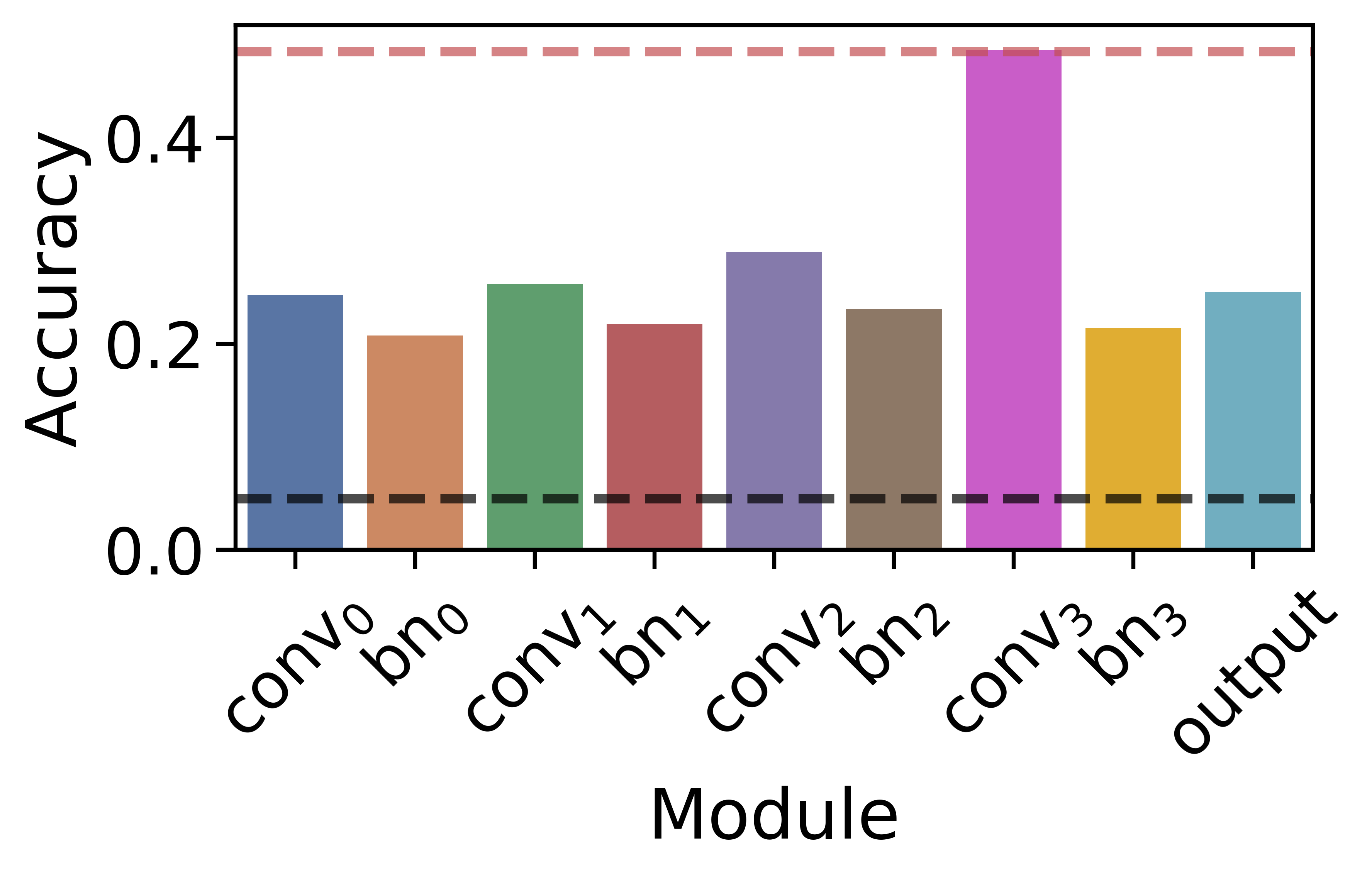}
\caption{Accuracy for \shrmaml{}.}
\label{fig:mini_shrinkage_maml_acc_appendix}
\end{subfigure}
~
\begin{subfigure}{.3\textwidth}
\includegraphics[width=\textwidth]{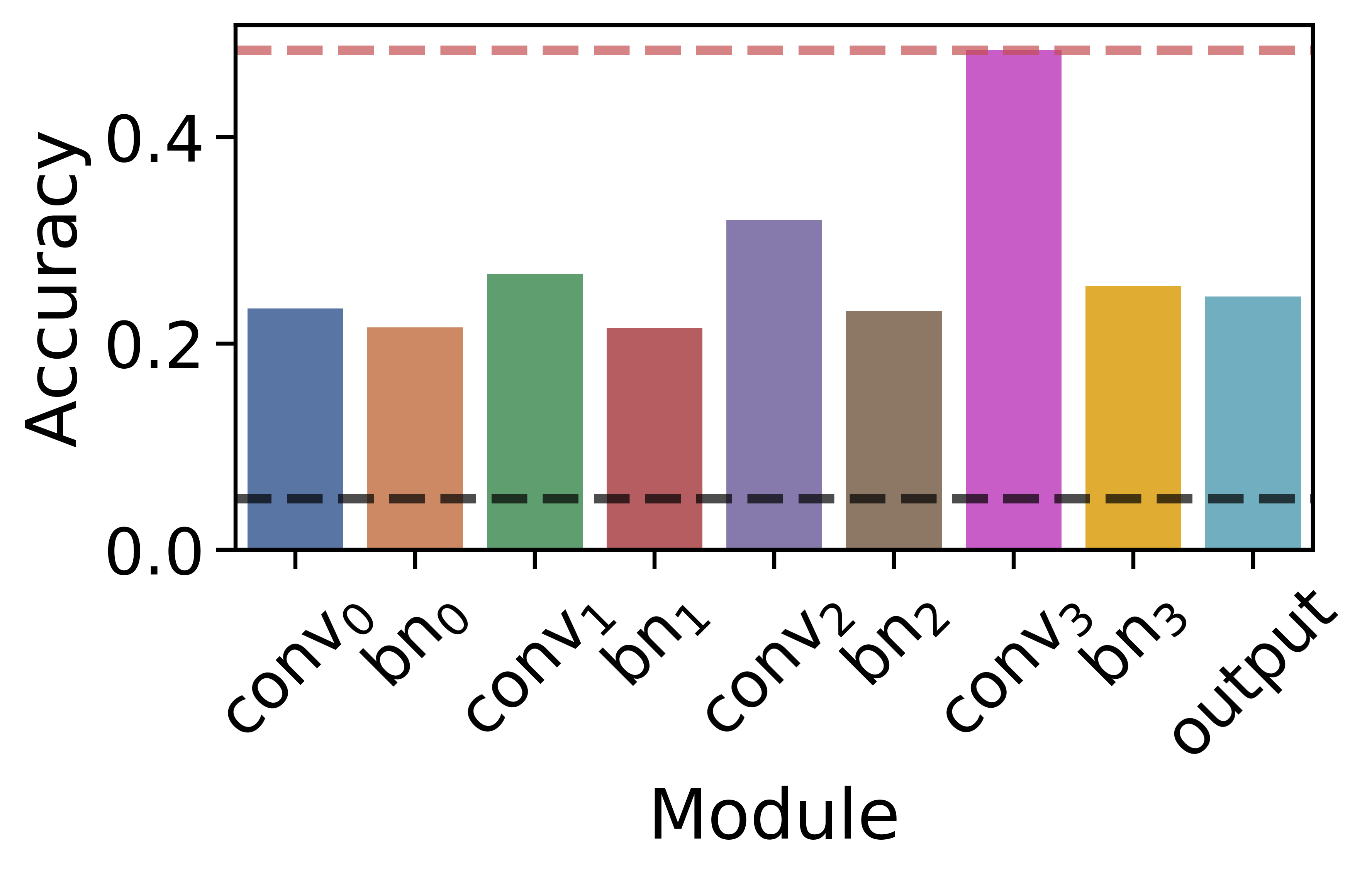}
\caption{Accuracy, \shrimaml{}.}
\label{fig:mini_shrinkage_imaml_acc_appendix}
\end{subfigure}
~
\begin{subfigure}{.3\textwidth}
\includegraphics[width=\textwidth]{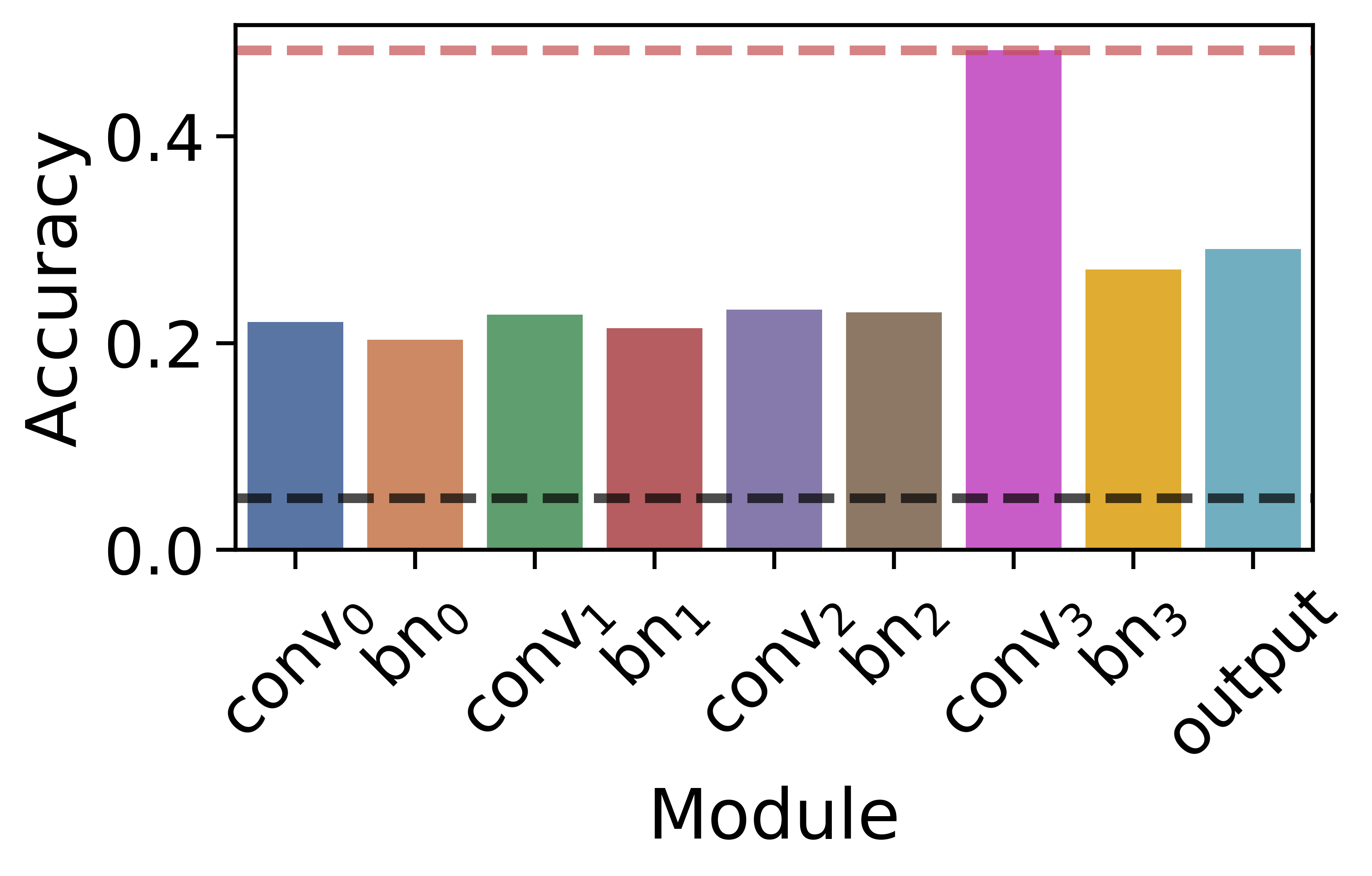}
\caption{Accuracy for \shrreptile{}.}
\label{fig:mini_shrinkage_reptile_acc_appendix}
\end{subfigure}

\caption{Learned variances and test accuracies on \textit{mini}ImageNet. (a) \& (b) \& (c) show the learned variance per module with \shrmaml{}, \shrimaml{} and \shrreptile{}, respectively. (d) \& (e) \& (f) show the average test accuracy during at the end of task adaptation with \shrmaml{}, \shrimaml{} and \shrreptile{}. Each bar shows the accuracy after task adaptation either with all layers frozen except one. Colors map to the colors of (a) \& (b) \& (c). 
$\mathrm{bn}_i$ and $\mathrm{conv}_i$ denote $i$-th  batch normalization layer and convolutional layer, and $\mathrm{output}$ is the linear output layer. The pink dashed line shows the accuracy after adaptation with the learned $\sigma$ and the black dashed line is the chance accuracy.
}
\label{fig:mini_appendix}
\end{figure*}

\begin{figure}[tbh!]
\centering
\includegraphics[width=0.4\textwidth]{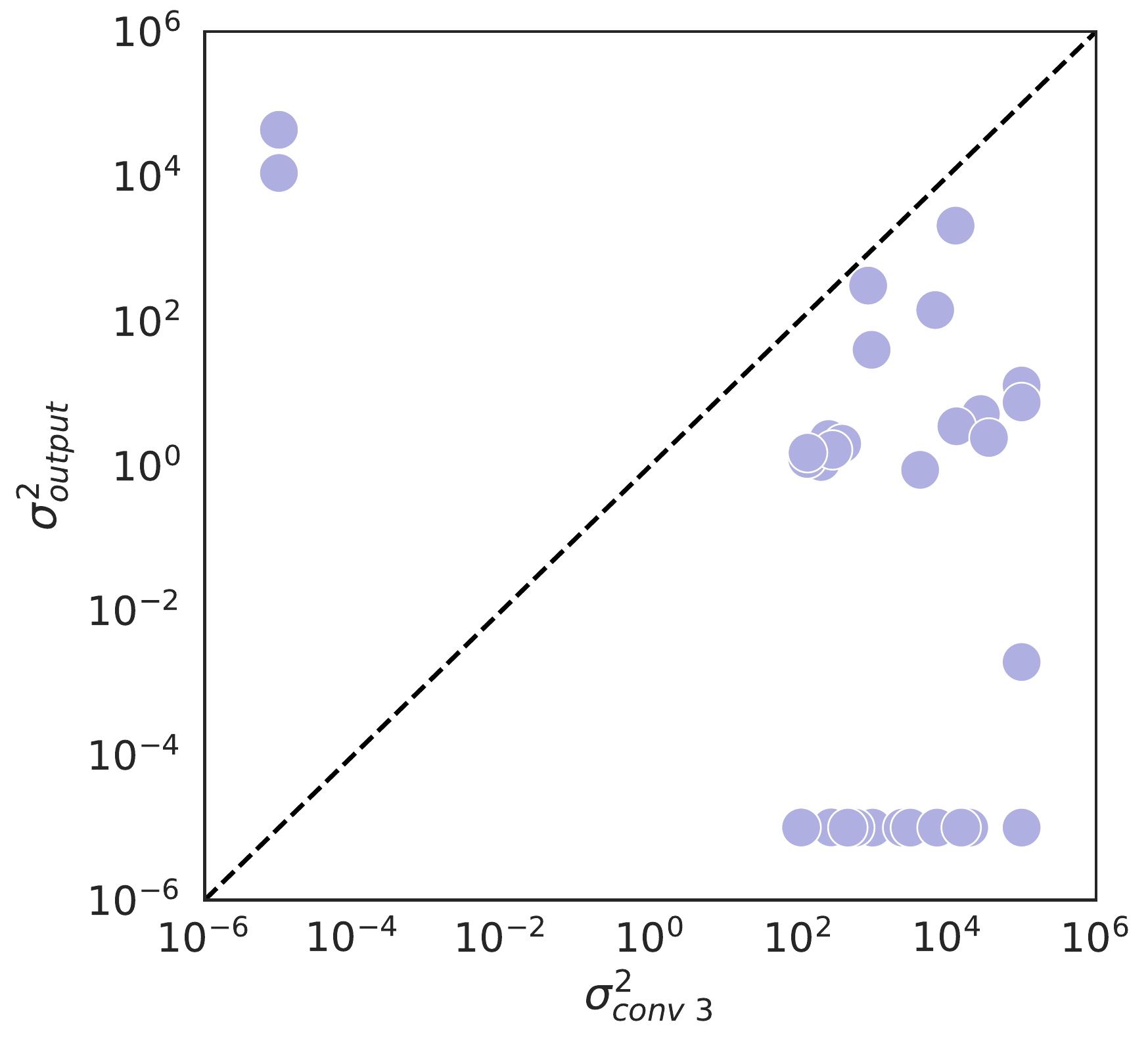}
\caption{Learned variances of the last conv layer vs output linear layer with \shrimaml{} from multiple runs.}
\label{fig:omni_simaml_var_scatter}
\end{figure}

\begin{table*}[tbh!]
    \caption{Test accuracy on few-shot Omniglot and few-shot \textit{mini}ImageNet. For each algorithm, we report the mean and $95\%$ confidence interval over $10$ different runs. For each pair of corresponding methods, we bold the entry if there exists a statistically significant difference.
    }
    \label{table:classification}
    \begin{center}
            \begin{tabular}{lcc}
                
                        \toprule
            ~                   & Omniglot  & \textit{mini}ImageNet \\
            \multicolumn{1}{l}{~} & $N5$ $K1$  & $N5$ $K1$ \\
            \midrule
            \multicolumn{1}{l}{\shrmaml{}} & $98.8 \pm 0.4\% $ & $\mathbf{47.7 \pm 0.5\%}$ \\
            \multicolumn{1}{l}{MAML} & $98.7 \pm 0.7\%$ & $46.1 \pm 0.8\%$ \\
            \midrule
            \multicolumn{1}{l}{\shrimaml{}} & $97.2 \pm 0.8\%$ & $47.6 \pm 1.1\%$  \\
            \multicolumn{1}{l}{iMAML} & $97.2 \pm 1.2\%$ & $47.2 \pm 1.4\%$ \\ 
            \midrule
            \multicolumn{1}{l}{\shrreptile{}} & $\mathbf{97.8 \pm 0.5\%} $ & $47.0 \pm 0.9\%$ \\
            \multicolumn{1}{l}{Reptile} & $96.9 \pm 0.6\%$ & $47.4 \pm 0.9\%$ \\ 
            \bottomrule
        \end{tabular}
        \end{center}
\end{table*}

\subsubsection{Module discovery}
We first discuss module discovery for these two datasets and then compare classification accuracies in the next section.

\cref{fig:omniglot_appendix} and \cref{fig:mini_appendix} present our module discovery results for all three shrinkage algorithms on few-shot Omniglot and \textit{mini}ImageNet. We see that after training there is always one layer that has a learned variance that is much larger than all other layers.
As shown in \cref{fig:omniglot_appendix}(d-e) and \cref{fig:mini_appendix}(d-e), we observe that by adapting only this task-specific module, the model is able to achieve high accuracy at test time, equal to the performance achieved when adapting all layers
according to the learned $\vsigma^2$. 
Conversely, adapting only the task-independent modules leads to poor performance.

Importantly, it is always one of the final two output layers that has the highest learned variance, meaning that these two layers are the most task-specific. 
This corroborates the conventional belief in image-classification convnets that output layers are more task-specific while input layers are more general, which is also validated in other meta-learning works~\citep{raghu2019rapid,gordon2019metalearning}.

However, in most of the subfigures in \cref{fig:omniglot_appendix} and \cref{fig:mini_appendix}, it is actually the \textit{penultimate}
layer that is most task-specific and should be adapted. 
The only algorithm for which the penultimate layer was not the most task-specific
was \shrimaml{} in the Omniglot experiment.
To study this further, we run \shrimaml{} on Omniglot repeatedly with a random initialization and hyper-parameter settings and keep the learned models that achieve at least $95\%$ test accuracy. 
In \cref{fig:omni_simaml_var_scatter}, we show the learned $\sigma_m^2$ 
of the final convolutional layer ($conv_3$) and the linear output layer ($output$) from those runs. In almost every case, $conv_3$ dominates $output$ by an order of magnitude, meaning that the selection of the output layer in \cref{fig:omniglot_appendix} was a rare event due to randomness. Adapting $conv_3$ remains the most stable choice, and most runs that adapt only it achieve a test accuracy that is not statistically significantly different from the result in \cref{fig:omniglot_appendix}(b).
Our results extend the results of the above meta-learning works that 
focus only on adapting
the final output layer, and match a recent independent observation
in \citet{arnold2019decoupling}.

Considering the different task-specific modules discovered in augmented Omniglot, TTS, sinusoid regression, and these two short-adaptation datasets, it is clearly quite challenging to hand-select task-specific modules \textit{a priori}. 
Instead, meta-learning a shrinkage prior provides a flexible and general method for identifying the task-specific and task-independent modules for the domain at hand.

\textbf{Learning a prior for a single module.}\\
We also tried learning a single shared prior $\sigma^2$ for all variables. When using a single module like this, 
$\sigma^2$ grows steadily larger during meta-learning, and the shrinkage algorithms simply reduce to their corresponding standard meta-learning algorithms without the shrinkage prior. This highlights the necessity of learning different priors for different components of a model.

This observation differs from the results of \citet{rajeswaran2019meta}, which reported an optimal value of $\lambda = 1/\sigma^2 = 2$ for iMAML. One possible explanation for this difference is that too small of a value for $\lambda$ leads to instability in computing the 
implicit gradient, which hinders learning. 
Also, when searching for the optimal $\lambda$ as a hyperparameter the dependence of the 
validation loss on $\lambda$ becomes less clear when $\lambda$ is sufficiently small. 
In our few-shot Omniglot experiments, we find that the best value of $\lambda$ for iMAML is $\lambda = 0.025$.

\subsubsection{Classification accuracy}
We compare the predictive performance of all methods on held-out tasks in \cref{table:classification}. 
While many of the results are not statistically different, the shrinkage variants
show a modest but consistent improvement over the non-shrinkage algorithms.
Further, \shrmaml{} does significantly outperform MAML on \textit{mini}ImageNet and
\shrreptile{} does significantly outperform Reptile in Omniglot.

Note that we do not in general expect the shrinkage variants to significantly outperform their counterparts here because the latter do not exhibit overfitting issues on these datasets 
and have comparable or better accuracy than other Bayesian methods in the literature~\citep{Grant2018,ravi2018amortized}.

\end{document}